\documentclass{article}
\usepackage{microtype}
\usepackage{graphicx}
\usepackage{subcaption}
\usepackage{booktabs}
\usepackage{amssymb}
\usepackage[table]{xcolor}

\usepackage[hidelinks]{hyperref}

\usepackage{algorithm}
\usepackage{algorithmic}

\usepackage[accepted]{icml2026}

\usepackage{amsmath}
\usepackage{amsthm}
\usepackage{bm}
\usepackage{bbm}
\usepackage{graphicx}
\usepackage{multirow}
\usepackage{titlesec}
\usepackage{titletoc}
\usepackage{url}
\usepackage{amssymb}
\usepackage{enumitem}
\usepackage{booktabs}
\usepackage{wrapfig}

\usepackage[capitalize,noabbrev]{cleveref}

\usepackage{tocloft}
\theoremstyle{plain}
\newtheorem{theorem}{Theorem}[section]

\newtheorem{lemma}[theorem]{Lemma}
\newtheorem{corollary}[theorem]{Corollary}
\theoremstyle{definition}

\theoremstyle{remark}
\newtheorem{remark}[theorem]{Remark}

\usepackage[textsize=tiny]{todonotes}

\newcommand{\Z}{\mathcal{Z}}
\renewcommand{\L}{\mathcal{L}}
\newcommand{\tr}{\operatorname{Tr}}
\newcommand{\E}{\mathbbm{E}}
\newcommand{\Var}{\operatorname{Var}}
\renewcommand{\div}{\operatorname{div}}

\renewcommand{\P}{\mathbbm{P}}
\newcommand{\bigtimes}{\mathop{\mbox{\Large$\times$}}}

\makeatletter
\DeclareRobustCommand{\cev}[1]{%
  \mathpalette\do@cev{#1}%
}
\newcommand{\do@cev}[2]{%
  \fix@cev{#1}{+}%
  \reflectbox{$\m@th#1\vec{\reflectbox{$\fix@cev{#1}{-}\m@th#1#2\fix@cev{#1}{+}$}}$}%
  \fix@cev{#1}{-}%
}
\newcommand{\fix@cev}[2]{%
  \ifx#1\displaystyle
    \mkern#23mu
  \else
    \ifx#1\textstyle
      \mkern#23mu
    \else
      \ifx#1\scriptstyle
        \mkern#22mu
      \else
        \mkern#22mu
      \fi
    \fi
  \fi
}

\makeatother

\makeatletter
\newcommand{\apptocentries}{}

\newcommand{\appcontent}{}

\newcommand{\appsection}[2]{%
  \g@addto@macro\apptocentries{\noindent #1 \dotfill \pageref{#2}\par}%
  \g@addto@macro\appcontent{%
    \section{#1}\label{#2}%
    #2
  }%
}
\makeatother

\icmltitlerunning{Tensor Train Diffusion: Leveraging Low-Rank Structures for High-Dimensional Score-Based Sampling}

\begin{document}

\twocolumn[
  \icmltitle{Tensor Train Diffusion: \\ 
  Leveraging Low-Rank Structures for High-Dimensional Score-Based Sampling}

  \icmlsetsymbol{equal}{*}

  \begin{icmlauthorlist}
    \icmlauthor{Robert Gruhlke}{equal,FUB}
    \icmlauthor{Julius Berner}{NVIDIA}
    \icmlauthor{David Sommer}{WIAS}
    \icmlauthor{Lorenz Richter}{equal,dida,ZIB}
  \end{icmlauthorlist}

  \icmlaffiliation{FUB}{Freie Universität Berlin}
  \icmlaffiliation{NVIDIA}{NVIDIA}
  \icmlaffiliation{WIAS}{WIAS}
  \icmlaffiliation{dida}{dida}
  \icmlaffiliation{ZIB}{Zuse Institute Berlin}

  \icmlcorrespondingauthor{Robert Gruhlke}{r.gruhlke@fu-berlin.de}
  \icmlcorrespondingauthor{Lorenz Richter}{richter@zib.de}

  \icmlkeywords{Machine Learning, ICML}

  \vskip 0.3in
]

 \printAffiliationsAndNotice{\icmlEqualContribution}

\begin{abstract}
Diffusion models offer a powerful framework for sampling from complex probability densities by learning to reverse a noising process. A common approach involves solving for the time-reversed stochastic differential equation (SDE), which requires the score function of the evolving sample distribution. The logarithm of this distribution's density is governed by a Hamilton-Jacobi-Bellman (HJB) type partial differential equation (PDE). However, current methods for solving this PDE, such as PINNs or trajectory-based techniques, often suffer from long training times and significant sensitivity to hyperparameter tuning. In this work, we introduce a novel and efficient solver for the underlying HJB equation based on the functional tensor train (FTT) format. The FTT representation leverages latent low-rank structures to efficiently approximate high-dimensional functions, enabling both model compression and rapid computation. By integrating this efficient representation with a backward-in-time iterative scheme derived from backward stochastic differential equations (BSDEs), we develop a fast, robust and accurate sampling method. Our approach overcomes primary bottlenecks of existing techniques, enabling high-fidelity sampling from challenging target distributions with improved efficiency.
\end{abstract}

\section{Introduction}

Sampling from a complex, high-dimensional probability density
\begin{equation}
p_\mathrm{target} = \frac{\rho_\mathrm{target}}{\mathcal{Z}},
\end{equation}
where the unnormalized density $\rho_\mathrm{target}$ can be evaluated pointwise but the normalizing constant $\mathcal{Z}$ is intractable and no samples from $p_\mathrm{target}$ are available, constitutes a central challenge in modern machine learning, statistics, and the physical sciences. An especially versatile and powerful class of methods for this task is based on dynamical measure transport, which generates samples from the target distribution by evolving initial samples from a simple reference distribution -- such as a standard Gaussian -- along trajectories defined by stochastic or ordinary differential equations. One of the most successful paradigms within this framework is the time reversal of a noising SDE, forming the foundation of \textit{diffusion models}~\citep{ho2020denoising,song2020score}. In this approach, a forward SDE -- typically chosen as an \textit{Ornstein-Uhlenbeck process} -- progressively transforms the target distribution into a simple prior by injecting noise over time. The central insight is that reversing this noising process yields a generative SDE that transports samples from the prior back to the target distribution. A classical result~\citep{nelson1967dynamical,anderson1982reverse} establishes that the drift of this time-reversed SDE is determined by the \textit{score function}, $\nabla \log p(\bm{x},t)$, where $p(\cdot,t)$ denotes the density of the forward process at time $t$. Thus, the sampling problem reduces to accurately approximating this time-dependent score function. While diffusion models were originally developed for the generative-modeling setting, in which the score is learned from samples via denoising score matching, here we adopt only the time-reversal mechanism: the score must instead be obtained without samples, directly from the unnormalized density $\rho_\mathrm{target}$.

Since the densities of SDEs are governed by the \textit{Fokker-Planck equation}, one can employ the \textit{Hopf-Cole} transform to relate the evolution of the log-density to a \textit{Hamilton-Jacobi-Bellman} (HJB) equation~\citep{berner2022optimal}. However, solving this PDE is notoriously challenging, particularly in the high-dimensional settings typical in practical applications, which are intractable for traditional mesh-based approaches. In the context of diffusion models, there exist attempts to approximate solutions to such HJB equations  with neural networks, for example via \textit{physics-informed neural networks} (PINNs)~\citep{sun2024dynamicalmeasuretransportneural,shi2024diffusion}. However, the optimization of neural networks with stochastic gradient descent (SGD) requires a large number of (potentially costly) evaluations of $\rho_\mathrm{target}$ and relies on automatic differentiation to compute the higher-order derivatives appearing in the PDE. This leads to long training times, sensitivity to hyperparameters, and convergence to local minima~\citep{krishnapriyan2021characterizing,shi2024stochastic,xu2025fp64}.

In this work, we depart from neural network-based PDE solvers and propose a novel approach based on \textit{functional tensor train} (FTT) representations. Tensor trains are particularly well-suited for this setting, as they can efficiently represent high-dimensional functions by exploiting latent low-rank structures, enabling both substantial model compression and fast computation. Moreover, the tensor train format is naturally compatible with backward-in-time regression schemes, which can be derived from the connection between HJB equations and backward stochastic differential equations (BSDEs). This yields an efficient implementation that circumvents the instabilities and complexities associated with SGD-based optimization. Our main contributions can be summarized as follows:

\begin{itemize}
\vspace{-0.3cm}
    \item We introduce \textit{Tensor Train Diffusion} (TTD)\footnote{The code is available at \url{https://github.com/robertgruhlke/TTD}.}, a novel solver for HJB equations that leverages connections between diffusion-based sampling, HJB equations, and BSDEs to achieve a favorable trade-off between computational cost and accuracy.
    \vspace{-0.2cm}
    \item We demonstrate the flexibility of TTD by incorporating different basis functions -- including Legendre polynomials, B-splines, and Fourier bases -- and substantially improve robustness compared to previous tensor-train-based PDE solvers, effectively handling sampling trajectories outside the primary training domain.
    \vspace{-0.2cm}
    \item We show that TTD can be used to sample from complex, multimodal target distributions of varying dimensionality, outperforming existing diffusion-based samplers in both speed and accuracy.
\end{itemize}

\begin{figure}[t!]
    \centering
    \includegraphics[width=\columnwidth]{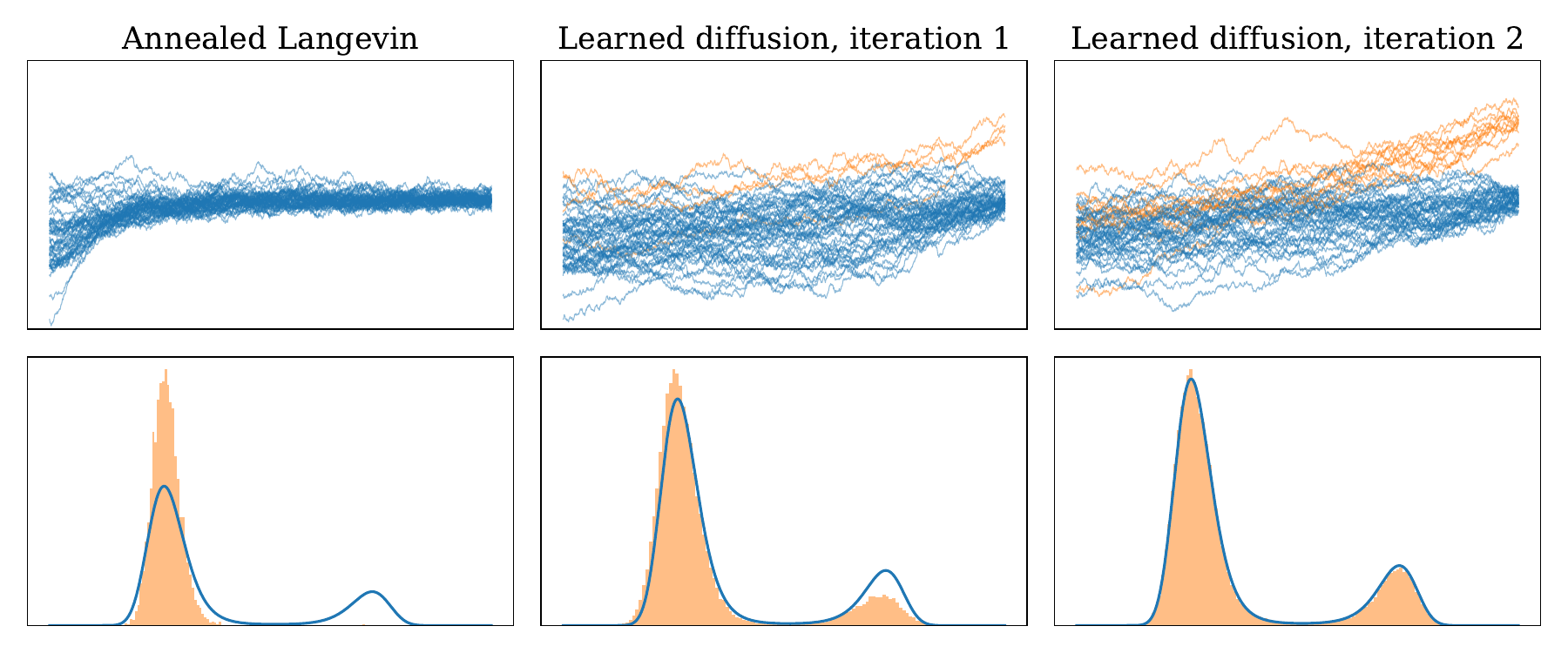}
    \vspace{-2em}
    \caption{Overview of the proposed TTD method. From left to right: Annealed Langevin dynamics serve as an initialization for our approach. By learning the score function along relevant trajectories, TTD iteratively refines the sampling process. As a result, new modes are discovered and the quality of samples improves. The histograms illustrate the terminal samples of the trajectories in comparison to the target density.}
    \label{fig: outer loop}
\end{figure}

\vspace{-0.5cm}

\subsection{Related work}

\paragraph{Sampling from unnormalized densities.} Drawing samples from a density, specified up to the normalizing constant, is a crucial task in a wide range of computational sciences, ranging from molecular dynamics, to lattice field theory, to Bayesian statistics~\citep{gelman2013bayesian,zhang2023artificialintelligencesciencequantum}. Due to the great interest, a large amount of literature focused on developing corresponding sampling methods, often based on variants of \emph{Markov chain Monte Carlo} (MCMC) and \emph{Langevin dynamics}~\citep{neal2001annealed,chopin2002sequential,del2006sequential}. However, such algorithms typically require substantial tuning and long runtimes to sample from high-dimensional distributions with well-separated modes~\citep{latuszynski2025mcmc,brooks2011handbook}.
\vspace{-0.3cm}
\paragraph{Diffusion-based samplers.}
To combat these issues, variational inference methods propose to turn the sampling problem into an optimization problem over a tractable family of distributions. Translating the success of diffusion models to sampling problems, recent works proposed to use parametrized SDEs for these families~\citep{richter2024improvedsamplinglearneddiffusions,berner2022optimal,berner2025discrete,vargas2024transport,blessing2025underdamped,blessing2026trust,blessing2026bridge}. 
Since the drifts of the SDEs are typically parametrized by neural networks that incorporate the gradient of the target density, such approaches can be viewed as a learnable corrections to Langevin dynamics, aiming to converge after finite trajectory lengths. However, this also leads to a significant amount of (potentially costly) density evaluations during SDE simulations, which are typically also needed for training. While such methods can outperform classical methods in terms of inference performance and efficiency, they suffer from a high upfront cost for optimizing the parameters of the SDE.
\vspace{-0.3cm}
\paragraph{Neural PDE solver.} Since the dynamics of SDEs are governed by Fokker-Planck equations, we can equivalently solve the corresponding PDE in order to optimize the variational family of SDEs. However, such high-dimensional PDEs are out of scope for traditional mesh-based methods, such as finite differences or finite elements, which suffer from the \textit{curse of dimensionality}. There is a series of works proposing variants of physics-informed neural networks for this problem ~\citep{shi2024diffusion,albergo2025netsnonequilibriumtransportsampler,sun2024dynamicalmeasuretransportneural}. While they can scale to higher dimensions, they do not address the high training costs due to their reliance on evaluations of the target density and higher-order derivatives as well as high sensitivity to hyperparameters. Methods that are based on iterative diffusion optimization additionally suffer from large computational costs to repeatedly simulate diffusion trajectories \citep{nusken2021solving,nusken2023interpolating}.
\vspace{-0.3cm}
\paragraph{Tensor trains.}
Many target densities of interest exhibit an inherent low-rank structure. To exploit this, we propose to approximate the drift using functional tensor trains (FTT,~\citet{oseledets2013constructive}) in their numerical realization, the \textit{extended (functional) tensor train} format~\citep{eigel2022low,strossner2024approximation} is sometimes referred to as \textit{spectral tensor trains}~\citep{bigoni2016spectral}.
The effectiveness of such representations has already been demonstrated in related contexts, for example in sampling problems connected to HJB equations~\citep{TTD_gruhlke05}, as well as for the Fokker-Planck and HJB equations in~\citet{dolgov2012fast}. Low-rank tensor formats enable efficient regression-based solvers with broad applicability; see~\citet{bachmayr2023low} for an overview. Moreover, they constitute a rigorous nonlinear approximation class, with detailed analyses provided in~\citet{kazeev2018quantized,AN20a,AN20b,AN21,bachmayr2021approximation,griebel2023low,bachmayr2023low}. In this work, we build on the ideas developed in~\citet{richter2021solving,richter2023continuous} and extend them to the sampling setting.
Inspired by~\citet{bouchard2004discrete,gobet2005regression,hure2020deep}, we notice that the Fokker-Planck equation is the linearization of a Hamilton-Jacobi-Bellman equation under the Hopf-Cole transform, such that the optimal drift can be written in terms of backward SDEs (BSDEs) that lead to simple regression problems. Together with the TT parametrization, we show that this reformulation of the PDE leads to an efficient and scalable sampler.

\paragraph{Notation.} We define time-inversion as $\cev{f}(t) := f(T - t)$. For other notation, including our tensor notation, we refer to~\Cref{tab:notation} in the appendix.

\section{Diffusion-based sampling}

In this section, we demonstrate how learned stochastic processes can be employed to sample from the target distribution and introduce a stochastic algorithm that formalizes the associated learning task.

\subsection{Time-reversal and the related Hamilton-Jacobi-Bellman PDE}

Our sampling algorithm for $p_\mathrm{target}$ builds on the idea of reversing a noising process that approximately evolves towards an easy-to-sample distribution $p_\mathrm{prior}$ at terminal time $T$ \citep{song2020score,berner2022optimal}. To this end, we consider the process  
\begin{equation}
\label{eq: noising SDE}
        \mathrm{d} Y_s = -\cev{f}(Y_s,s)\,\mathrm{d}s + \cev{\sigma}(s)\,\mathrm{d} W_s, 
        \quad Y_0 \sim p_\mathrm{target},
\end{equation}
Unlike classical diffusion models with access to samples from \(p_{\mathrm{target}}\), the noising process cannot be simulated. The process \(Y\) therefore serves only as an auxiliary analytical construct and is not part of the algorithm.
The corresponding sampling dynamics, which can be simulated, is given by 
\begin{align}    
\label{eq: sampling SDE}
   \!\!\! \mathrm{d} X^u_s \!= \!\big(f+ \sigma u\big)(X^u_s,s)\mathrm{d}s + \sigma(s)\mathrm{d} W_s, 
    \,\,\,\, X_0^u \sim p_{Y_T},
\end{align} 
where the objective is to learn a function $u$ that reverses the dynamics of \eqref{eq: noising SDE}, ensuring that $X_T^{u^*} \sim p_\mathrm{target}$. It is well known that the optimal solution is given by the (scaled) \textit{score function} $u^* = \sigma^\top \nabla \log \cev{p}_Y$ \citep{nelson1967dynamical,anderson1982reverse}. Moreover, as shown in \citet{berner2022optimal}, this function can be equivalently characterized via the following Hamilton-Jacobi-Bellman (HJB) PDE.

\begin{lemma}[HJB PDE for the log-density]
\label{lem: HJB PDE}
Let $V := -\log \cev{p}_Y$. Then, $V(\cdot, T) = -\log p_\mathrm{target}$ and  
\begin{equation}
\label{eq: HJB PDE}
\begin{aligned}
    \partial_t V &= - \tfrac{1}{2}\tr(\sigma \sigma^\top \nabla^2 V) - f \cdot \nabla V  \\
    &\phantom{=} \qquad \qquad\qquad\qquad
+ \div(f) + \tfrac{1}{2}\big\|\sigma^\top \nabla V\big\|^2.
\end{aligned}
\end{equation}
\end{lemma}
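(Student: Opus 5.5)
The plan is to start from the Fokker--Planck equation for the density of the noising process \eqref{eq: noising SDE}, reverse time, and then apply the Hopf--Cole (logarithmic) transform. The terminal condition is immediate. Since $\cev{p}_Y(\cdot,T) = p_Y(\cdot,0) = p_\mathrm{target}$, we get $V(\cdot,T) = -\log p_\mathrm{target}$. So the work lies entirely in deriving the PDE.

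\textbf{Step 1 (Fokker--Planck).} The process $Y$ has drift $-\cev{f}(\cdot,s)$ and diffusion $\cev{\sigma}(s)$, so its density $p_Y$ satisfies
\begin{equation*}
\partial_s p_Y = \div\big(\cev{f}\, p_Y\big) + \tfrac12 \tr\big(\cev{\sigma}\cev{\sigma}^\top \nabla^2 p_Y\big).
\end{equation*}
Here we use that $\sigma$ depends only on time, so no derivatives of the diffusion coefficient appear. I assume enough regularity and positivity of $p_Y$ (e.g.\ smooth coefficients and a smooth positive target density) for the logarithm and the classical derivatives to make sense.

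\textbf{Step 2 (time reversal).} Set $q(x,t) := p_Y(x,T-t)$. Then $\partial_t q = -(\partial_s p_Y)(x,T-t)$, and evaluating the coefficients at $s=T-t$ turns $\cev{f},\cev{\sigma}$ back into $f,\sigma$. This gives
\begin{equation*}
\partial_t q = -\div(f q) - \tfrac12\tr(\sigma\sigma^\top\nabla^2 q).
\end{equation*}

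\textbf{Step 3 (Hopf--Cole).} Substitute $q = e^{-V}$ and use the identities
\begin{equation*}
\partial_t q = -q\,\partial_t V, \qquad \nabla q = -q\nabla V, \qquad \nabla^2 q = q\big(\nabla V\nabla V^\top - \nabla^2 V\big).
\end{equation*}
These give $\div(fq) = q\,\div(f) - q\, f\cdot\nabla V$ and
\begin{equation*}
\tr(\sigma\sigma^\top\nabla^2 q) = q\big(\|\sigma^\top\nabla V\|^2 - \tr(\sigma\sigma^\top\nabla^2 V)\big).
\end{equation*}
Dividing by $-q$ then yields
\begin{equation*}
\partial_t V = \div(f) - f\cdot\nabla V + \tfrac12\|\sigma^\top\nabla V\|^2 - \tfrac12\tr(\sigma\sigma^\top\nabla^2 V),
\end{equation*}
which is exactly \eqref{eq: HJB PDE}.

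The main obstacle is Step 1 through Step 2, namely getting the signs right. Three sign flips occur: the minus sign in the forward drift, the time reversal, and the negative logarithm. Tracking all three carefully is what produces the $+\div(f)$ and $-f\cdot\nabla V$ terms. Beyond this bookkeeping, one only has to state the regularity assumptions, namely smooth positive densities and sufficiently regular $f$, under which the classical Fokker--Planck equation holds.
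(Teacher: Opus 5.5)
Your proof is correct and follows the route the paper indicates. The paper gives no proof of its own: it defers to \citet{berner2022optimal} and describes exactly the Fokker--Planck equation for the noising process, followed by time reversal and the Hopf--Cole substitution $\cev{p}_Y = e^{-V}$. Your sign bookkeeping, the identities $\nabla^2 e^{-V} = e^{-V}(\nabla V\nabla V^\top - \nabla^2 V)$ and $\tr(\sigma\sigma^\top\nabla V\nabla V^\top) = \|\sigma^\top\nabla V\|^2$, and the terminal condition via $\cev{p}_Y(\cdot,T) = p_Y(\cdot,0)$ all check out.
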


It is important to note that the boundary term may be replaced by its unnormalized version $-\log \rho_\mathrm{target}$. This substitution merely shifts the PDE solution by an additive constant, which is irrelevant for the optimal control since only the gradient of the solution enters.

While equation \eqref{eq: HJB PDE} formally provides a recipe for approximating the log-density and, consequently, the score function, it is well known that solving PDEs in high dimensions is notoriously challenging. In the following, we therefore introduce a procedure that combines Monte Carlo estimation with suitable function approximation to efficiently handle high-dimensional settings.

\subsection{BSDE representations and backward iterations}

One approach to approximate certain nonlinear high-dimensional PDEs relies on backward stochastic differential equations \citep{pardoux1998backward}. Their main idea relies on It\^{o}'s formula, which, for a stochastic process $X^u$ as defined in \eqref{eq: sampling SDE}, states that
\begin{equation}
\label{eq: Ito formula}
    \begin{aligned}
    &V(X^u_T, T) - V(X^u_0, 0) = \int_0^T \sigma^\top \nabla V(X^u_s, s)\cdot \mathrm dW_s \, + \\
    &\int_0^T \!\!\!\big(\partial_t V + \tfrac{1}{2}\tr(\sigma \sigma^\top \nabla^2 V)+ (f + \sigma u)  \cdot \nabla V\big)(X^u_s, s)  \mathrm ds. 
    \end{aligned}
\end{equation}
Now, assuming that $V$ solves the HJB PDE \eqref{eq: HJB PDE}, we may equivalently write $\operatorname{BSDE}(V)= 0$ with 
\begin{align*}
   &\operatorname{BSDE}(V) := V(X^u_0, 0) - V(X^u_T, T) \\
   &\quad + \int_0^T \! h(u, \nabla V)(X^u_s, s) \,\mathrm ds + \int_0^T \! \sigma^\top \nabla V(X^u_s, s)\cdot \mathrm dW_s,
\end{align*}
where for convenience we defined the nonlinear term
\begin{equation}
\label{eq: def h continuous}
h(u, \nabla V) := \div (f) + \tfrac{1}{2} \big\|\sigma^\top \nabla V\big\|^2 + u \cdot \sigma^\top \nabla V.
\end{equation}
Conversely, when replacing $V$ by an approximation $\widetilde{V}$, $\operatorname{BSDE}(\widetilde{V}) = 0$ can only hold if $V = \widetilde{V}$ almost surely, by uniqueness of the PDE. This motivates to consider loss functionals of the form
\begin{equation}
\label{eq: BSDE loss entire time interval}
\L(\widetilde{V}) := \E\Big[ \big(\operatorname{BSDE}(\widetilde{V})\big)^2 \Big],
\end{equation}
where the expectation is over different realizations of the process $X^u$, cf. \citet{weinan2017deep}. In principle, $V$ can now be learned by minimizing the loss \eqref{eq: BSDE loss entire time interval} w.r.t. $V$. However, since solving the minimization problem in its entirety is difficult, it is natural to decompose it into a sequence of smaller subproblems. Following the dynamic programming principle from optimal control theory \citep{fleming2012deterministic}, we thus partition the time horizon into disjoint intervals according to the grid $0 = t_0 < t_1 < \cdots < t_N = T$, and aim to compute $V$ on each interval separately. Our envisioned algorithm proceeds backward in time, beginning with the final interval $[t_{N-1}, t_N]$ and employing the terminal condition $V(\cdot, T) = - \log p_\mathrm{target}$. Next, for each preceding interval, the terminal condition is given by the solution obtained at the right endpoint of the subsequent interval. In practice, this means that $V(\cdot, t_n)$ is replaced by its previously computed approximation $\widetilde{V}(\cdot, t_n)$ for $n = 1, \dots, N-1$. For further details we refer to \Cref{alg: HJB approximation} and \citet{richter2021solving,richter2023continuous}.

In practice, we need to discretize both the process \eqref{eq: sampling SDE} and the (stochastic) integrals appearing in the loss \eqref{eq: BSDE loss entire time interval}. Using the same time grid as before, we approximate the solution only at the grid points, i.e., we seek functions $\widehat{V}_n$ such that $\widehat{V}_n \approx V(\cdot, t_n)$ for $n = 0, \dots, N-1$. To this end, we employ the Euler-Maruyama scheme 
\begin{equation}
\label{eq: Euler forward SDE}
    \widehat{X}^u_{n+1}\! =\! \widehat{X}^u_n +  \big(f + \sigma u\big)\!(\widehat{X}^u_n, t_n)\Delta t  + \sigma(t_n)\xi_{n+1}\sqrt{\Delta t},
\end{equation}
where $\Delta t = t_{n+1} - t_n$ is the time step and $\xi_{n+1} \sim \mathcal{N}(0, \operatorname{Id})$ are independent standard normal random variables. 

We then define for each $n = 0, \dots, N-1$ the discrete loss  
\begin{equation}
\label{eq: discrete loss}
\begin{aligned}
&\widehat{\mathcal{L}}_n(\widehat{V}_n) = \E\Big[\Big(\widehat{V}_n(\widehat{X}^u_n) + \sigma(t_n)^\top \nabla\widehat{V}_n(\widehat{X}^u_n) \cdot \xi_{n+1}   \sqrt{\Delta t} \\
&\qquad \qquad \qquad \qquad + h_{n+1}^u \Delta t - \widehat{V}_{n+1}(\widehat{X}^u_{n+1}) \Big)^2\Big],
\end{aligned}
\end{equation}

where
$h_{n+1}^u := h(u, \widehat{V}_{n+1})(\widehat{X}^u_{n+1}, t_{n+1}).$ The loss \eqref{eq: discrete loss} is obtained by discretizing its continuous version \eqref{eq: BSDE loss entire time interval} by choosing the right endpoint of the deterministic integral and the left endpoint of the stochastic integral. 
As a next step, the expectation in \eqref{eq: discrete loss} is discretized by $K\in\mathbb{N}$ sample sequences $(\widehat{X}_{n}^{u,(k)})_n$ for $k = 1,\ldots, K$, generated from \eqref{eq: Euler forward SDE}.
Defining $\Sigma_n^{(k)} = \sigma(t_n)\xi_{n+1}^{(k)}\sqrt{\Delta t}\in\mathbb{R}^d$ and $y_{n+1}^{(k)} = \widehat{V}_{n+1}(\widehat{X}^{u,(k)}_{n+1})- h_{n+1}^{u,(k)} \Delta t$, the loss in \eqref{eq: discrete loss} can be approximated using the empirical loss   
\begin{equation}
\label{eq:empirical_loss_linear_in_V}
    \widehat{\mathcal{L}}_n^K( \widehat{V}_n )\!:=\!\frac{1}{K}
 \sum\limits_{k=1}^K \left( \big[\mathrm{Id} + \Sigma_n^{(k)} \cdot \nabla\big]
\widehat{V}_n(\widehat{X}^{u,(k)}_n) -  y_{n+1}^{(k)}\right)^2\!.
\end{equation}
Note the affine dependence in $\widehat{V}_n$, making it well-suited for regression-based methods. Moreover, up to discretization error, the loss has the convenient property that it is almost surely zero at the solution, additionally implying vanishing variance at the optimum. For further analysis we refer to \citet{richter2023continuous}.

\begin{remark}[Gradient approximation]
A key feature of the loss \eqref{eq: discrete loss} is its explicit dependence on both the function $\widehat{V}_n$ and its gradient $\nabla \widehat{V}_n$. Because the loss penalizes errors in both terms, minimizing it provides a direct incentive to accurately approximate both the function and its derivative. This is particularly advantageous, as the optimal control relies directly on this gradient.
\end{remark}

\begin{algorithm}[t!]
    \caption{Approx. of HJB PDE and optimal control}
    \label{alg: HJB approximation}
\begin{algorithmic}
    \STATE {\bfseries Input:} Initial parametric choice for the functions $\widehat{V}_n^{(0)}$ and educated guess for $\widehat{u}_n^{(0)}$ for $n \in \{0, \dots, N-1 \}$. Number of outer iterations $I$.
    \STATE {\bfseries Output:} Approximation of $V(\cdot, t_n) \approx \widehat{V}_n^{(I)}$ and $u^*(\cdot, t_n) \approx \widehat{u}_n^{(I)}$ for $n \in \{0, \dots, N-1 \}$.
    \FOR{$i = 1$ {\bfseries to} $I$}
        \STATE Simulate $K$ samples of the discretized SDE $\widehat{X}^{{u}^{(i - 1)}}$ according to \eqref{eq: Euler forward SDE}.
        \STATE Choose $\widehat{V}_N^{(i)} = -\log \rho_\mathrm{target}$.
        \FOR{$n = N - 1$ {\bfseries to} $0$}
            \STATE Discretize $\widehat{\mathcal{L}}_n(\widehat{V}_n^{(i)})$ as in \eqref{eq: discrete loss} using Monte Carlo.
            \STATE Minimize the resulting $\widehat{\mathcal{L}}_n^K( \widehat{V}_n )$ from \eqref{eq:empirical_loss_linear_in_V} and set 
             $\widehat{V}_n^{(i)}$ to be the minimizer. 
             \STATE Set $\widehat{u}_n^{(i)} := -\sigma^\top(t_n)\nabla \widehat{V}_n^{(i)}$.
        \ENDFOR
    \ENDFOR
\end{algorithmic}
\end{algorithm}

\begin{remark}[Error propagation]
When decomposing the problem into subproblems, one must take into account the possible propagation of approximation errors. These arise because, in practice, the approximation $\widehat{V}_{n+1}(\cdot) \approx V(\cdot, t_{n+1})$, which serves as a ``terminal condition'' for computing the individual losses, is generally not exact (see Section~8.3.3 in \citet{gobet2016monte}).
\end{remark}

\begin{remark}[Solving the PDE along a random grid]
\label{rem: random grid}
Unlike traditional numerical methods for PDEs, BSDE-based approaches can be interpreted as operating on dynamically adaptive random grids, provided by the grid points $(\widehat{X}^{u,(k)}_n)_{k,n}$. This perspective highlights their potential to achieve dimension-free Monte Carlo convergence rates.
\end{remark}

\begin{remark}[Fewer target evaluations]
A strategy we adopt later is to first approximate the (unnormalized) log-target $\rho_\mathrm{target}$ via a simple regression task. The resulting surrogate model can then be employed both for potential Langevin initialization runs and within the outer loop of \Cref{alg: HJB approximation}. This approach has the advantage that only a potentially small number of target evaluations are required initially; see also \Cref{rem: xTT initializations}.
\end{remark}

\Cref{rem: random grid} highlights an additional challenge in sampling applications: we only need to approximate $V$, and hence the control $u$, accurately along the trajectories of $X^u$. However, during training we learn $X^u$ along one set of sampled trajectories, while the actual evaluation later takes place along potentially different trajectories. To address this mismatch, we propose an iterative learning scheme with an outer loop that begins from an informed initial guess $u^{(0)}$ (typically corresponding to Langevin dynamics) and, at each step, updates the control using the solution $u^{(i)}$ obtained in the $i$-th iteration. This iterative refinement is expected to gradually align the training and evaluation distributions, so that the trajectories used for learning become sufficiently close to those relevant for sampling. In this way, the procedure concentrates on the regions of the state space that are most important for the optimal sampling process; see \Cref{alg: HJB approximation} and \Cref{fig: outer loop} for an illustration. We note that in our numerical experiments in~\Cref{sec:experiments} the algorithm typically converges in less than 3 iterations. Furthermore, we refer to \Cref{appendix:moving_domain_and_extension} for strategies to extend the learned control in a principled way beyond the data domain.

\section{Tensor trains as approximating functions}

This section introduces low-rank tensor formats for approximating the functions \(\widehat{V}_n\). We focus on functional tensor trains (FTTs, \citet{oseledets2013constructive}); for related work on low-rank function approximation, see \citet{AN20a,AN20b,AN21,bachmayr2021approximation,griebel2023low,bachmayr2023low}. The FTT format is a nonlinear approximation class that exploits separability in the coupling of input variables. Its numerical realization via tensor trains yields a Riemannian manifold structure, enabling efficient and stable algorithms, including optimization \citep{oseledets2011tensor}. When such separability is present, FTTs provide accurate and efficiently evaluable representations.

Let $D=\bigtimes_{i=1}^d [a_{i},b_{i}]\subset\mathbb{R}^d$ with 
$a_{i} < b_{i}$ for $i=1,\ldots,d$.
A function $f\colon D\to \mathbb{R}$ is said to have FTT  
rank  $\overline{\bm{r}}=(\overline{r}_1,\ldots, \overline{r}_{d-1})\in\mathbb{N}^{d-1}$, 
if it can be written as
\begin{equation}\label{eq:ftt}
f(\bm{x})=f(x_1,\ldots, x_d) = F_1(x_1) F_2(x_2) \cdots F_d(x_d)
\end{equation}
with matrix valued functions  $F_i(x_i)\in\mathbb{R}^{\overline{r}_{i-1},\overline{r}_i}$ for $i=1,\ldots,d$ with the convention $\overline{r}_0=\overline{r}_d=1$. 
Note that in the case $d=2$, the FTT format~\eqref{eq:ftt} can be interpreted as a singular value decomposition (SVD) in function space, since $f(x_1,x_2) = F_1(x_1)F_2(x_2) = \sum_{k=1}^{\overline{r}_1} [F_1(x_1)]_k [F_2(x_2)]_k$.  
Thus, the FTT format can be viewed as a generalization of the SVD to high-dimensional functions, while retaining important properties such as closedness \cite{holtz2012manifolds}.

In order to make this format accessible for approximation, a discretization in each direction $x_i$ will be introduced. To this end, let $\mathcal{H}=\bigotimes_{i=1}^d \mathcal{H}_i(a_i,b_i)$ be a product Hilbert space on $D$, where each $\mathcal{H}_i(a_i,b_i)$ is equipped with a scalar product $(\cdot,\cdot)_{\mathcal{H}_i}$.
For $\bm{m}=(m_1,\ldots,m_d)\in\mathbb{N}^d$, using the notation in \Cref{tab:notation} in the appendix,  define the discrete set of $\mathcal{H}$-orthonormal tensor basis functions 
$
\mathcal{B}_{\bm{m}} :=\{ \phi_{\bm{\alpha}} :=\bigotimes_{i=1}^d \phi_{\alpha_i}^i \,\mid\, 
\bm{\alpha}\in[\bm{m}]
\},
$
with univariate $\mathcal{H}_i$-orthonormal basis functions $\phi_{\alpha_i}^i\colon [a_i,b_i]\to\mathbb{R}$.
For $f$ with FTT rank $\overline{\bm{r}}$, we may then approximate 
\begin{equation}
\label{eq:xTT_format}
f(\bm{x})\approx \bm{C}[\Phi(\bm{x})]:=
\sum\limits_{\bm{\alpha}\in[\bm{m}]} \bm{C}[\bm{\alpha}] \phi_{\bm{\alpha}}(x_1,\ldots,x_d),
\end{equation}
for $\Phi(\bm{x}) = (\phi_{j}^1(x_1))_{j=1}^{m_1}\otimes \cdots\otimes (\phi_{j}^d(x_d))_{j=1}^{m_d} \in\mathbb{R}^{\bm{m}}$
and a tensor array $\bm{C}\in\mathbb{R}^{\bm{m}}$ with (algebraic) tensor train (TT) rank $\bm{r}=(r_1,\ldots,r_{d-1})^\top\in\mathbb{N}^{d-1}$ bounded by the FTT rank $\overline{\bm r}$. In particular, we have the decomposition into a tensor train (or matrix product state) format as
\begin{equation}
    \bm{C}[\bm{\alpha}] = \bm{C}_1[\alpha_1]\bm{C}_2[\alpha_2]\cdots \bm{C}_d[\alpha_d],
\end{equation}
with order three tensors $\bm{C}_i\in\mathbb{R}^{r_{i-1},m_i,r_i}$ defining matrices $\bm{C}_i[\alpha_i]=\bm{C}_i[:,\alpha_i,:]\in\mathbb{R}^{r_{i-1},r_i}$ with the convention that $r_0=r_d=1$.
The set of TTs of fixed rank $\bm{r}$ is denoted as $\mathcal{M}_{\bm{r}}^{\bm{m}}$. In particular, there exists a maximal possible TT rank $\overline{\bm{r}}\in\mathbb{N}^{d-1}$ such that $\mathcal{M}_{\overline{\bm{r}}}^{\bm{m}} = \mathbb{R}^{\bm{m}}$.

In what follows, we refer to a function in the form of the right hand side \eqref{eq:xTT_format} as \textit{extended tensor train} (xTT). The xTT format is a special case of the FTT format with a specific choice of univariate basis functions.
The set of xTTs building on $\mathcal{M}_{\bm{r}}^{\bm{m}}$ is denoted as 
\begin{equation}
\label{eq:xtt_space}
    \mathcal{X}_{\bm{r}}^{\bm{m}} := \left\{\widehat{V}_{\bm{C}}=\bm{C}[\Phi(\cdot)]\mid \bm{C}\in\mathcal{M}_{\bm{r}}^{\bm{m}}\subset\mathbb{R}^{\bm{m}}\right\}.
\end{equation}

Note that, by construction, we have $\mathcal{X}_{\bm{r}}^{\bm{m}}\subset \mathcal{H}$ for any rank $\bm{r}\leq \overline{\bm{r}}$, and for any $\widehat{V}_{\bm{C}}\in \mathcal{X}_{\bm{r}}^{\bm{m}}$ it holds that $\|\widehat{V}_{\bm{C}}\|_{\mathcal{H}} = \|\bm{C}\|_{\mathrm{F}}$. Key properties of the considered nonlinear low-rank model class include storage complexity, multi-linearity and related efficient evaluation. 
Provided that the ranks can be bounded, the TT format exhibits a storage complexity based on the shape of each component tensor $C_i$ with upper bound $\mathcal{O}(\max(m_1,\ldots,m_d) d \max(r_1,\dots,r_{d-1})^2)$, which scales only linearly in the dimension $d$. Moreover, a function in xTT format can be evaluated fast and numerically stably, including possible access to gradients or Hessians, see \Cref{appenidx:xTT:evaluation} for details. To this end, originated by \eqref{eq:empirical_loss_linear_in_V}, we are interested in solving a sequence of optimization tasks for $n=1,\ldots, N-1$ given by
\begin{align}
\label{eq:regularized_loss}
\begin{split}
&\min\limits_{\widehat{V}_{\bm{C}}\in \mathcal{X}_{\bm{r}}^{\bm{m}}} \left\{\widehat{\mathcal{L}}_n^K(\widehat{V}_{\bm{C}}) + \tau_n \|\widehat{V}_{\bm{C}}\|_\mathcal{H}^2\right\}  = \\
&\qquad\qquad \qquad \min\limits_{\bm{C}\in \mathcal{M}_{\bm{r}}^{\bm{m}}} \left\{
\widehat{\mathcal{L}}_n^K(\widehat{V}_{\bm{C}})
    + \tau_n \|\bm{C}\|_{\mathrm{F}}^2\right\}
\end{split}
\end{align}
for some regularization magnitude $\tau_n>0$. Note that the regularization term involves the $\mathcal{H}$ norm, see Appendix~\ref{sec: univariate basis}. 
Recall that we are interested in approximating the true value function $V(\bf{x},t)$, which is expected to have varying $\mathcal{H}$ norm over time snapshots $t=t_0,\ldots,t_N$. Consequently, to relate the regularized loss from \eqref{eq:regularized_loss} to the original loss from \eqref{eq:empirical_loss_linear_in_V}, the regularization magnitude needs to be chosen suitably and adaptively, which we explain in detail in \Cref{appendix:adaptive_regularization}. Further, it is crucial to choose the TT rank $\bm{r}$ and the numbers of degrees of freedom $\bm{m}$ appropriately, and we provide a corresponding adaptive strategy in \Cref{appendix:adaptive_TT_r_n} and \Cref{Appendix:degree_adaptivity}.
Moreover, a detailed discussion of choices for tensor basis functions, along with their associated advantages and challenges, is given in \Cref{appendix:tensor_basis_functions}. Finally, a discussion of the FTT rank analysis for certain classes of functions is provided in \Cref{appendix:ftt_rank_analysis}.

\subsection{Optimization via alternating least squares}

The set $\mathcal{M}_{\bm{r}} $ defines a Riemannian manifold \citep{holtz2012manifolds}, rendering the option for Riemannian optimization. Here, we consider an alternative approach for minimization based on alternating least squares (ALS), cf. \citet{holtz2012alternating}. In particular, instead of solving \eqref{eq:regularized_loss} via iterates of the whole TT $\bm{C}$, we split the minimization into iterations of so-called \textit{sweeps} consisting of several \textit{micro steps}. In each micro step we update the $i$-th component of the TT only, while fixing all other components. For this, we compactly write (see \Cref{appendix:contractions})
$$\bm{C} = \bm{C}_1\bm{C}_2 \cdots \bm{C}_d = \bm{U}_1 \cdots \bm{U}_{i-1} \mathfrak{C}\, \bm{U}_{i+1}\cdots \bm{U}_d$$ with orthogonal order three tensors $U_{j}\in\mathbb{R}^{r_{j-1},m_j,r_j}$ for $j=1,\ldots, i-1,i+1,\ldots d$ and a so-called \textit{core} $\mathfrak{C}\in\mathbb{R}^{r_{i-1},m_i,r_i}$, see \Cref{appendix:tt_core} for more details.
Then, at each micro step, the loss from \eqref{eq:regularized_loss} reduces to the problem 
\begin{equation}
    \label{eq:linear_local_lsqrt_problem}
    \min\limits_{\mathfrak{C}}\|A_{i,n}^u \operatorname{vec}(\mathfrak{C}) - y_{n+1}\|_2^2 + \tau_n \|\mathfrak{C}\|_F^2,
\end{equation}
with a matrix $A_{i,n}^u\in\mathbb{R}^{K, r_{i-1}m_ir_i}$ and $y_{n+1}=(y_{n+1}^{(k)})_k\in\mathbb{R}^{K}$, see \Cref{appenidx:ALS} for a derivation. 

After solving the minimization task stated in \eqref{eq:regularized_loss}, which yields a TT $\bm{C}^n\in\mathcal{M}_{\bm{r}}^{\bm{m}}$, we then obtain an approximation of our value function $V(\cdot,t_n)$ at time $t=t_n$, denoted by $\widehat{V}_{\bm{C}^n}\in\mathcal{X}_{\bm{r}}^{\bm{m}}$, in xTT format, i.e. 
\begin{equation}
\label{eq:value_approximation_via_xTT}
V(\bm{x},t_n) \! \approx \! \widehat{V}_n(\bm{x}) \! \approx \!  \widehat{V}_{\bm{C}^n} \! = \! \bm{C}^n[\Phi(\bm{x})],\,\, \forall \bm{x}\in D.
\end{equation}

\begin{remark}[Extension strategies for the domain]
    The approximation of the value function $V$ is only defined locally on a domain $D \Subset \mathbb{R}^d$. In practice, this domain is given by $D = D_n^u$, which depends on both the time step and the current policy $u$. If, during the sampling dynamics in \eqref{eq: sampling SDE}, the required policy evaluation $u^* = -\sigma^\top \nabla V$ involves points outside of $D$, suitable extension strategies must be employed. We discuss these strategies in detail in \Cref{appendix:moving_domain_and_extension}.
\end{remark}

\begin{remark}[Initialization of xTTs]
\label{rem: xTT initializations}
The minimization requires initial xTTs. At the final time \(t=t_N\), we initialize via an \(L^2\) regression of the log-target density using samples from the initial sampling dynamics. Given an approximate xTT at \(t_{n+1}\), the initialization at \(t_n\) is obtained by a generalized Galerkin-type projection of this xTT, accounting for the domain change; see \Cref{appendix:section:representation_change}.
\end{remark}

\begin{remark}[Stabilization of backward regressions]
\label{rem: Stabilization of backward regressions}
While the backward iteration in \Cref{alg: HJB approximation} combined with FTTs offers clear advantages over iterative loss minimization with neural networks, it also poses stability challenges. Addressing these issues is a key contribution of this work; the corresponding stabilization strategies are detailed in the appendix. These include the choice of suitable basis functions (\Cref{appendix:tensor_basis_functions}), handling moving domains along sample trajectories and evaluations outside these domains (\Cref{appendix:moving_domain_and_extension,appendix:section:representation_change}), adaptive regularization (\Cref{appendix:adaptive_regularization}), adaptive rank selection (\Cref{appendix:adaptive_TT_r_n}), and adaptive basis degree selection (\Cref{Appendix:degree_adaptivity}).
\end{remark}

\section{Numerical experiments}
\label{sec:experiments}

In this section we evaluate our sampling algorithm on challenging high-dimensional problems. We refer to an additional problem from computational statistics in \Cref{sec: Kitagawa}.

\subsection{Highdimensional Multiwell problems}

\label{sec: multiwell experiments}

We first consider a class of Multiwell problems, defined by
\begin{equation}
\label{eq: multiwell}
\rho_\mathrm{target}(\bm{x}) = \exp\Biggl( -\sum_{i=1}^m (x_i^2 - \delta)^2 - \frac{1}{2}\sum_{i=m+1}^d x_i^2 \Biggr),
\end{equation}
which are commonly used benchmarks~\citep{berner2022optimal,richter2024improvedsamplinglearneddiffusions,wu2020stochastic,midgley2022flow}. These densities are chosen because they resemble physics and molecular dynamics problems, providing an indication of performance in practical applications. At the same time, they offer a controlled setting for systematically studying the three key challenges in sampling: (1) the dimensionality $d$, (2) the mode separation $\delta$, and (3) the number of modes $2^m$. In our experiments, we use a Fourier basis (see \Cref{sec: univariate basis}) with $3$ to $13$ basis functions per dimension, treating this as a hyperparameter tuned according to the observed approximation quality. We employ an $H_{\mathrm{mix}}^2$ orthonormalization as described in \Cref{sec: orthonormal basis}.
For the noising process \eqref{eq: noising SDE}, we set $f(\bm{x}, t) = \bm{x}$, $\sigma(t) = \sqrt{2}$, and consider a time horizon $T = 2$. We apply \Cref{alg: HJB approximation} to two Multiwell problems, with dimensions $d=10$ and $d=50$ and $8$ and $32$ modes, respectively, with $\delta = 2$. 

\begin{figure}[h!]
    \centering
    \includegraphics[width=\columnwidth]{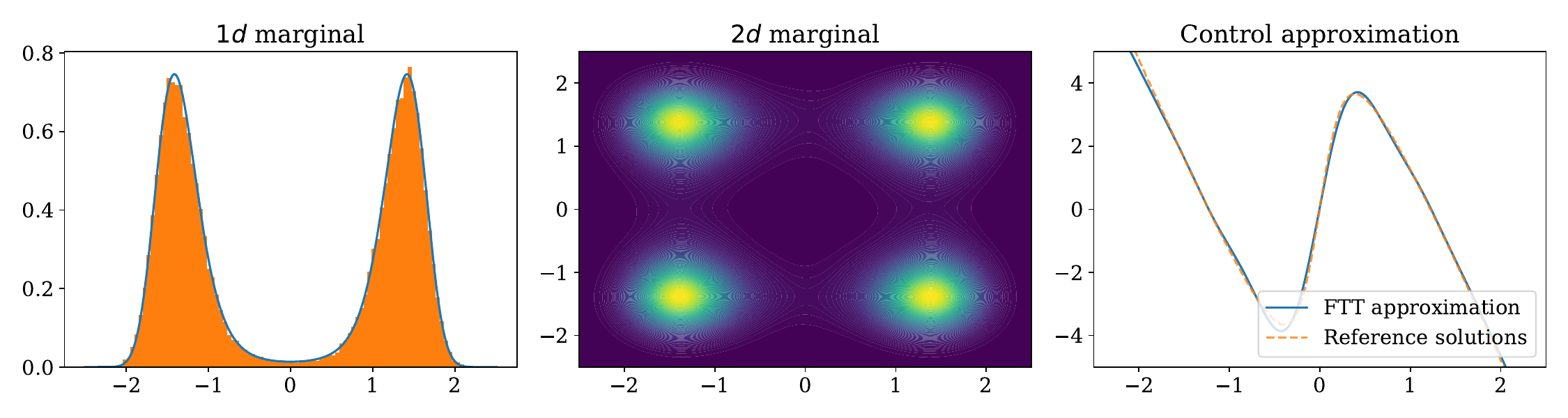}
    \caption{We plot one- and two-dimensional marginals of our Multiwell problem in $d=50$, showing very high sampling accuracy that aligns with the metrics in \Cref{fig: multiwell results}. Notably, our algorithm does not exhibit mode collapse, a common challenge in diffusion-based sampling (cf. \Cref{fig: outer loop}). On the right-hand side, we compare the first component of the learned control with a reference solution (computed via finite differences on one-dimensional problems) at $t = 1.9$ for $x$-values varying along a single dimension.}
    \label{fig: marginals 50d multiwell}
    \vspace{-0.3cm}
\end{figure}

As shown in \Cref{fig: multiwell results}, increasing the number of time steps $N$ (which also determines the number of functions $\widehat{V}_n$ approximated) steadily improves sampling performance, ultimately achieving remarkable quality. Importantly, the algorithm’s runtime is significantly faster than that of alternative, predominantly neural network-based samplers. In $d=10$, training ranges roughly from $1$ minute ($N=2^8$) to $20$ minutes ($N=2^{13}$), while in $d=50$ it ranges from $10$ to $300$ minutes. Note that the runtime grows linearly with $N$. We refer to \Cref{fig: runtime performance comparison with DIS} for a comparison with the \textit{time-reversed diffusion sampler} (DIS) from \cite{berner2022optimal}, which considers the same noising SDE, however relies on neural network approximations and variational training. We refer to \Cref{fig: Multiwell sampler comparison} in \Cref{sec: comparison alternative samplers} for a comparison to further sampling methods, showing significantly improved performance. Finally, we refer to \Cref{fig: marginals 50d multiwell} comparing marginals of the $50$ dimensional problem with a reference solution.

\begin{remark}[Approximation of $\log Z$ and the PDE solution]
    We note that the Girsanov theorem relates the error in the log-normalizing constant $\log \Z$ to the $L^2$-approximation of the ground-truth score $\nabla \log p$, i.e., the gradient of the solution to the HJB equation w.r.t.~to the measure induced by $p_{\mathrm{target}}$~\citep{berner2022optimal}. Under suitable assumptions, one can further leverage Poincaré inequalities to relate this error to the $L^2$-approximation of the PDE solution itself, measuring the accuracy of our proposed PDE solver.
\end{remark}

\begin{figure*}[t!]
    \centering
    \includegraphics[width=0.95\textwidth]{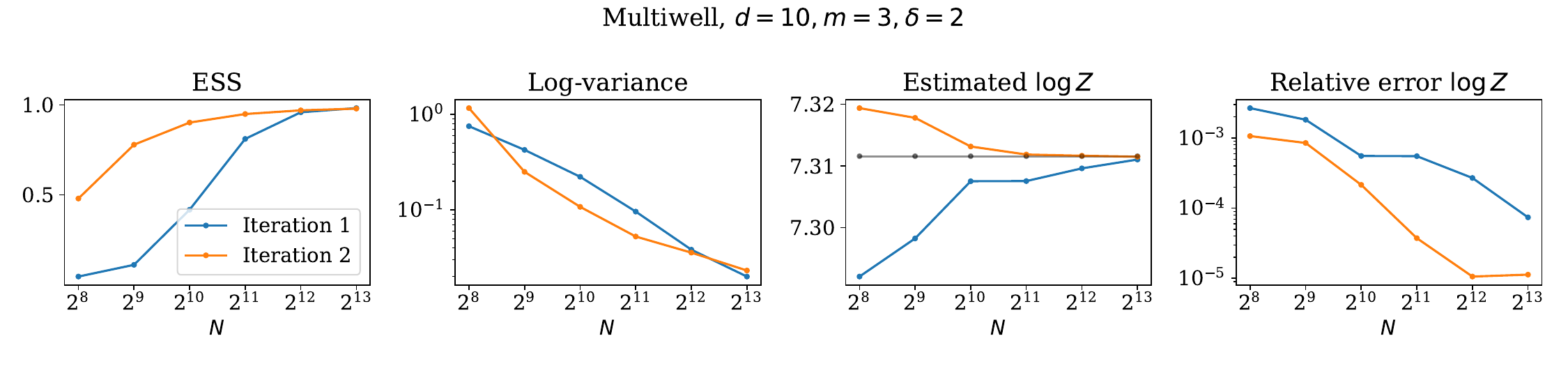} \\
    \includegraphics[width=0.95\textwidth]{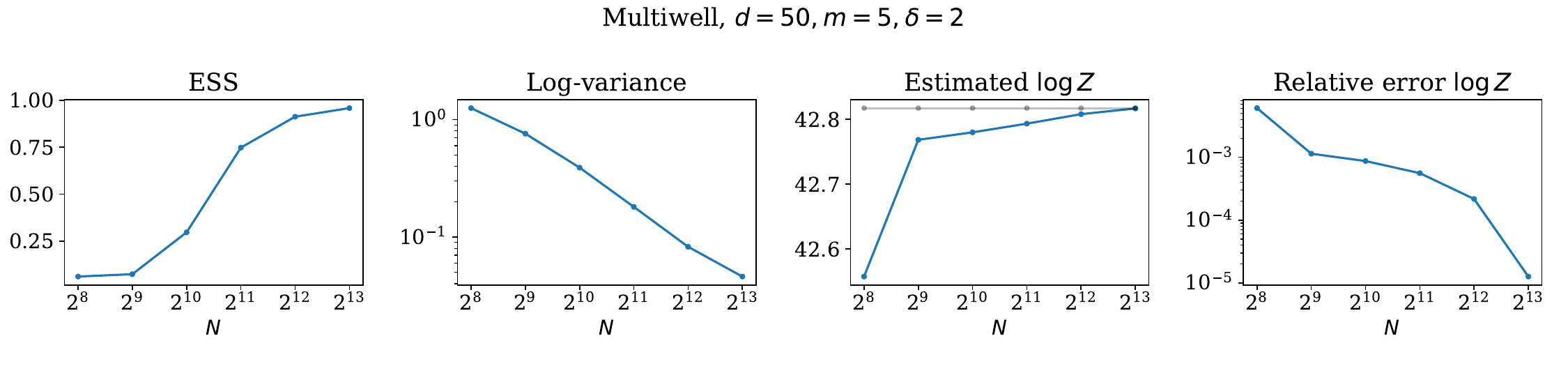}
    \caption{We consider two instances of the Multiwell problem defined in \eqref{eq: multiwell} and evaluate performance using the effective sample size (ESS), log-variance divergence, as well as the log-normalizing constant and its relative error; see \Cref{sec: Evaluation metrics}. As expected, increasing the number of steps $N$ leads to improved performance and the outer loop can provide additional gains. Remarkably, both the ESS and relative error remain stable even in the more challenging $d=50$ case.}
    \vspace{0.5cm}
    \label{fig: multiwell results}
\end{figure*}

\begin{figure*}[t!]
    \centering
    \includegraphics[width=0.95\textwidth]{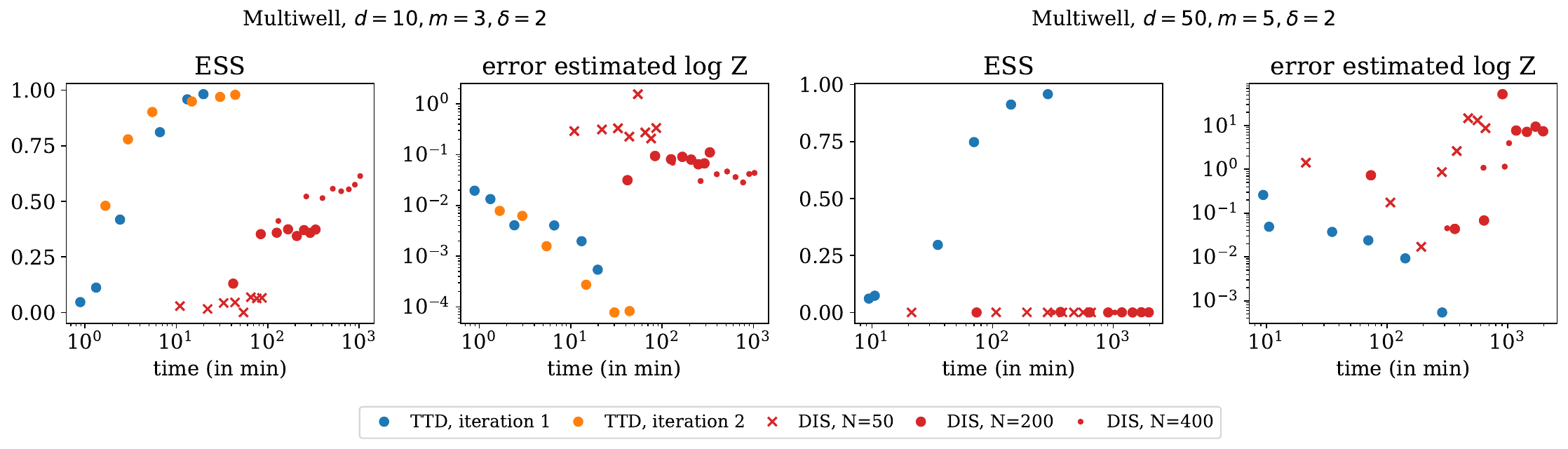}
    \caption{We compare the performance versus runtime of our TTD sampler with DIS \citep{berner2022optimal}. By design, our algorithm produces one result per chosen number of steps $N$ (shown as blue and orange dots), whereas DIS can improve over training time. Accordingly, we evaluate DIS at equally spaced runtime intervals. In both experiments, our algorithm is not only significantly faster but also achieves better results, particularly in settings where DIS can exhibit instability.}
    \label{fig: runtime performance comparison with DIS}
    \vspace{0.5cm}
\end{figure*}

\vspace{-0.3cm}

\subsection{Investigation of optimal ranks}

\vspace{-0.1cm}

\begin{wrapfigure}{r}{0.25\textwidth}
    \vspace{-0.3cm}
    \centering
    \includegraphics[width=0.5\columnwidth]{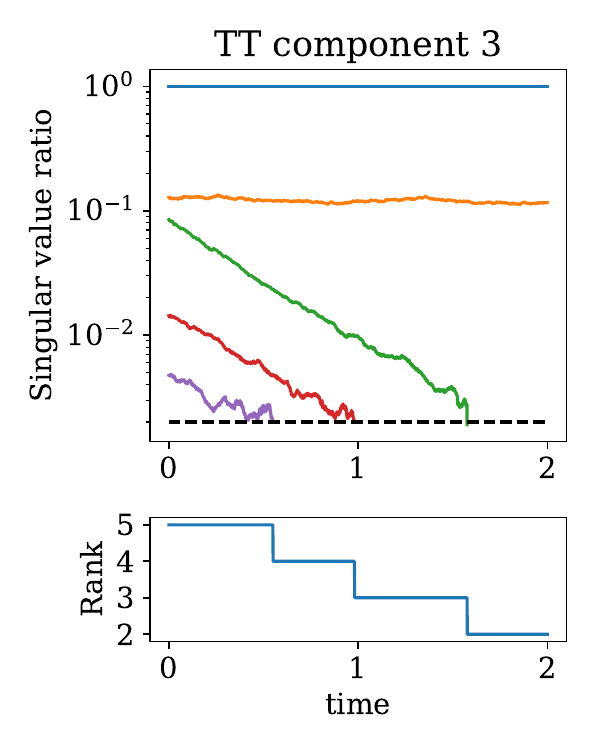}
    \vspace{-0.23cm}
    \caption{We display the singular values of TT component 3 over time and reduce the rank whenever a corresponding value is below a prespecified threshold.}
    \label{fig: singular values and ranks}
\end{wrapfigure}

In this section, we study highly anisotropic Gaussian targets to analyze the rank structure of the induced transport problems and to assess the adaptivity of our algorithm. We consider target densities with potential
$
\log \rho_{\mathrm{target}}(\bm{x}) = -\bm{x}^\top M \bm{x},
$
where \(M\) is a random full-rank matrix. For this class, the maximal admissible rank is \(\mathbf{r} = (3, 4, \dots, 2 + d/2, \dots, 4, 3) \in \mathbb{N}^{d-1}\) for even \(d\); see \Cref{lem:rank_gauss} for the general case. As the target distribution converges toward the standard normal, \(p_{Y_T} \approx p_{\mathrm{prior}} = \mathcal{N}(0, \operatorname{Id})\), the ranks are expected to approach the constant vector \(\mathbf{r} = (2, \dots, 2)\). This behavior is confirmed in \Cref{fig: singular values and ranks}, which shows the evolution of singular values and the corresponding rank adaptivity, resulting in a reduced runtime. The clear spectral gaps demonstrate that the algorithm reliably identifies and adapts to the appropriate ranks over time; see also \Cref{fig: singular values and ranks all cores} in the appendix.

\subsection{$\phi^4$ scalar field theory}
\label{sec: Ginzburg-Landau}

Next, we evaluate our method on the discretized Ginzburg-Landau model (specifically, the $\phi^4$ scalar field theory), a canonical framework in statistical mechanics used to describe continuous phase transitions and spontaneous symmetry breaking~\citep{ginzburg2009theory, rosenstein2021ginzburg}. 
The target density is defined by a Gibbs distribution over a lattice, characterized by a highly non-convex energy landscape resulting from the competing forces of local bistability and nearest-neighbor spatial coupling,
\begin{equation}
\label{eq: Ginzburg-Landau density}
\begin{aligned}
\rho_\mathrm{target}(\bm{x}) &= \exp\left( - \frac{1}{2}\sum_{i=1}^{d-1} (x_i - x_{i+1})^2 \right.\\
&\qquad \left. -\frac{1}{2}\sum_{i=1}^m (x_i^2 - \delta)^2 - \frac{1}{2}\sum_{i=m+1}^d x_i^2 \right).
\end{aligned}
\end{equation}
This system is highly challenging for sampling algorithms, as the target distribution exhibits exponentially many metastable modes separated by high-energy barriers, leading to severe mode collapse in standard MCMC methods. Nearest-neighbor interactions further induce strong dependencies between lattice sites, resulting in a highly constrained and correlated high-dimensional geometry. For \(\delta=2\), \Cref{fig: Ginzburg-Landau} shows that the TTD method generates high-quality samples in two distinct multimodal regimes.

\begin{figure*}[t!]
    \centering
    \includegraphics[width=\textwidth]{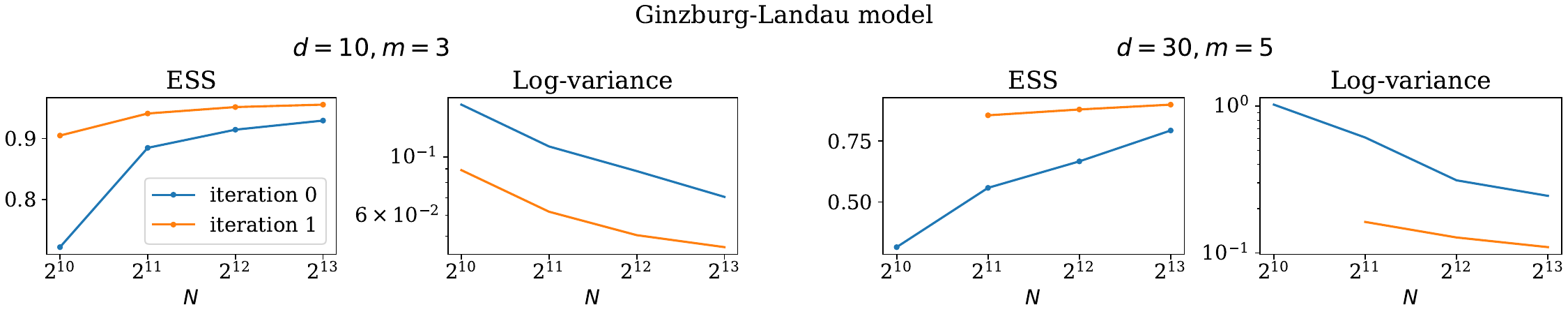}
    \caption{We consider two instances of the Ginzburg-Landau model defined in \eqref{eq: Ginzburg-Landau density} and assess performance using the effective sample size (ESS) and the log-variance divergence. In both settings, we obtain strong results, with performance improving as the number of steps $N$ and the number of outer iterations increase.}
    \label{fig: Ginzburg-Landau}
\end{figure*}

\section{Conclusion \& Discussion}
In this work, we introduced Tensor Train Diffusion (TTD), an efficient PDE solver for high-dimensional HJB equations leading to a novel way of sampling from unnormalized densities. Our method solves the HJB equation underlying the diffusion process using a functional tensor train (FTT) representation. By integrating this with a backward-in-time iterative scheme derived from BSDEs, TTD provides a fast, accurate, and stable alternative to neural network techniques. In particular, TTD requires less target evaluations and does not rely on SGD-based optimization with long training times and hyperparameter sensitivity. 

In general, our TTD framework presents a solid foundation for the robust and efficient solution of HJB equations, that can readily be used for different sampling problems -- we refer to potential limitations in \Cref{app: limitations}. While we provide a significantly accelerated tensor train implementation in PyTorch, we believe that further hardware-specific optimization is possible. On the theoretical side, our work motivates further research to analyze latent low-rank structures of different target densities for educated choices of basis functions and variable orderings. Future work could also combine the framework with neural network models or classical sampling methods, as well as extend TTD to broader classes of PDEs and applications.
From a theoretical perspective, an open question concerns the behavior of FTT ranks for solutions of the underlying HJB equation, which is only partially understood~\citep{TTD_gruhlke05}. In the general case, the current state-of-the-art theory \citep{AN20a,AN20b,AN21,griebel2023low,bachmayr2023low} does not directly apply and would need to be extended to weighted regularity classes, since solutions of HJB equations are defined on $\mathbb{R}^d$ and are generally unbounded as $\|\bm{x}\| \to \infty$. Consequently, on the computational side, one must rely on rank-adaptive approaches, i.e., determining the xTT ranks for the approximation in an adaptive manner.

\clearpage
\section*{Impact Statement}
This paper presents work whose goal is to advance the field of Machine
Learning. There are many potential societal consequences of our work, none
which we feel must be specifically highlighted here.

\section*{Acknowledgements}
The authors thank Denis Blessing for many useful discussions. The research of L.R.~was partially funded by Deutsche Forschungsgemeinschaft (DFG) through the grant CRC 1114 ``Scaling Cascades in Complex Systems'' (project A05, project number $235221301$). The research of R.G. was funded by the Deutsche Forschungsgemeinschaft (DFG, German Research Foundation) under Germany’s Excellence Strategy (EXC-2046/2, project ID 390685689).

\bibliography{references}
\bibliographystyle{icml2026}

\clearpage
\appendix
\onecolumn
\part{Appendix}

\section*{Appendix Contents}
\begin{center}
\fbox{
\begin{minipage}{0.8\textwidth}

A. Extended tensor trains \dotfill \pageref{appendix:extended_tt} \par

\hspace{1em}A.1 Contraction of tensors \dotfill \pageref{appendix:contractions} \par

\hspace{1em}A.2 Degrees of freedom, the core and remaining orthogonal components \dotfill \pageref{appendix:tt_core} \par

\hspace{1em}A.3 Tensor basis functions \dotfill \pageref{appendix:tensor_basis_functions} \par
\hspace{2em}A.3.1 Orthonormal univariate basis functions \dotfill \pageref{sec: univariate basis} \par
\hspace{2em}A.3.2 Orthonormal tensor basis functions and choice of $\mathcal{H}$ \dotfill \pageref{sec: orthonormal basis} \par

\hspace{1em}A.4 Evaluation, gradient and Hessian of extended tensor trains \dotfill \pageref{appenidx:xTT:evaluation} \par
\hspace{2em}A.4.1 Evaluation \dotfill \pageref{appendix:xtt_evaluation} \par
\hspace{2em}A.4.2 Gradient evaluation \dotfill \pageref{appendix:xtt_gradevaluation} \par
\hspace{2em}A.4.3 Hessian evaluation \dotfill \pageref{appendix:hessian_eval} \par
\hspace{2em}A.4.4 Moving approximation domain and approximate policy extensions \dotfill \pageref{appendix:moving_domain_and_extension} \par
\hspace{2em}A.4.5 Representation change on different domains \dotfill \pageref{appendix:section:representation_change} \par

\hspace{1em}A.5 Empirical loss minimization via ALS \dotfill \pageref{appenidx:ALS} \par
\hspace{2em}A.5.1 ALS for empirical $L^2$ losses \dotfill \pageref{appendix:ALS_L2} \par
\hspace{2em}A.5.2 ALS for BSDE losses with general basis functions \dotfill \pageref{appendix:ALS_BSDE_rigorous} \par

\hspace{1em}A.6 Adaptivity \dotfill \pageref{Appendix:adaptivity} \par
\hspace{2em}A.6.1 Choice of regularization magnitude $\tau_n$ \dotfill \pageref{appendix:adaptive_regularization} \par
\hspace{2em}A.6.2 Adaptivity for rank design \dotfill \pageref{appendix:adaptive_TT_r_n} \par
\hspace{2em}A.6.3 Adaptivity for basis degrees \dotfill \pageref{Appendix:degree_adaptivity} \par

\hspace{1em}A.7 Rank analysis \dotfill \pageref{appendix:ftt_rank_analysis} \par
\hspace{2em}A.7.1 Explicit functional tensor train ranks \dotfill \pageref{appendix:explicit_ftt_ranks} \par
\hspace{2em}A.7.2 Low-rank and smoothness \dotfill \pageref{appendix:lowrank_smoothness} \par

B. Numerical details \dotfill \pageref{appendix:numerics} \par

\hspace{1em}B.1 Limitations \dotfill \pageref{app: limitations} \par
\hspace{1em}B.2 Evaluation metrics \dotfill \pageref{sec: Evaluation metrics} \par
\hspace{1em}B.3 Hyperparameter dependence \dotfill \pageref{appendix:hyperparameter_dependence} \par
\hspace{1em}B.4 Comparison with alternative sampling methods \dotfill \pageref{sec: comparison alternative samplers} \par
\hspace{1em}B.5 Additional experiment: Kitagawa nonlinear state space model \dotfill \pageref{sec: Kitagawa} \par

\end{minipage}
}
\end{center}

\section{Extended tensor trains}
\label{appendix:extended_tt}

This section will give a deeper understanding in the concepts of tensor trains and extended tensor trains. We note that the concept of an extended tensor train in dimension $d=1$ reduces to the approximation a linear vector space of basis functions. For $d=2$ the concept of tensor trains coincides with singular value decomposition. Since singular value decomposition in higher dimension, i.e. the decomposition of a tensor into sums of rank-1 tensors is not a closed format, suitable for well-defined optimization tasks, we utilize the framework of tensor trains, which form a closed Riemannian manifold in $\mathbb{R}^{\bm{m}}$ \citep{holtz2012manifolds}.

\begin{table}
\begin{center}
  {\renewcommand{\arraystretch}{1.5}
    \begin{tabular}{cl}
    \hline\hline
          \rowcolor{gray!10!white}  $\bm{m}\in\mathbb{N}^d $ &  dimension array $\bm{m}=(m_1,\ldots,m_d)$ \\
          $k\bm{m}+l$ &  $(km_1+l, \ldots, k m_d +l)$ for $k,l\in\mathbb{N}_0$ \\
          \rowcolor{gray!10!white} 
          $[\bm{m}]$ & indexing $[\bm{m}] = \bigtimes_{i=1}^d \{1,\ldots,m_i\}$ \\
          $\bm{m}_1\geq \bm{m}_2$, $\bm{m}\geq k$ & 
          component wise comparison $\bm{m},\bm{m}_1,\bm{m}_2\in\mathbb{N}^d$, $k\in\mathbb{N}$ \\
          \rowcolor{gray!10!white} 
          $\bm{\alpha}$, $\bm{\beta}$, $\bm{\gamma}$ & multiindex in $\mathbb{N}^d$ \\
          $\mathbb{R}^{\bm{m}}$  & tensor space $\mathbb{R}^{m_1,\ldots,m_d}$ \\
          \rowcolor{gray!10!white} 
          $\bm{A},\bm{B},\bm{C}$ &  tensor elements in $\mathbb{R}^{\bm{m}}$ \\
          $\bm{r}$ & rank $\bm{r}=(r_1,\ldots,r_{d-1})$ in $\mathbb{N}^{d-1}$ \\
          \rowcolor{gray!10!white} 
          $\bm{r}^1\bm{r}^2$ &  multiplication $\bm{r}^1\bm{r}^2=(r^1_1 r_1^2,\ldots,r^1_{d-1}r^2_{d-1})$ in $\mathbb{N}^{d-1}$ \\
        $k_i, l_i$ &  rank enumeration indices in $\{1,\ldots, r_i\}$ \\
        \rowcolor{gray!10!white} 
        $A_i,B_i,C_i$ &  component order $3$ tensor in $\mathbb{R}^{r_{i-1},m_i,r_i}$ 
      with entries indexed by $[k_{i-1},\alpha_i,k_i]$ \\
      $A_i[\alpha_i]$ & matrix extraction $A_i[\alpha_i]=A_i[\,:\, , \alpha_i, \, :\,]\in\mathbb{R}^{r_{i-1},r_i}$ of component tensor $A_i$ \\
      \rowcolor{gray!10!white} 
        $A_i[k_{i-1},\,:\,,k_i]$ & vector extraction in $\mathbb{R}^{m_i+1}$ for each rank enumeration $k_{i-1},k_i$ \\
        $\bm{A}[\bm{\alpha}]$ & tensor indexing $\bm{A}[\alpha_1,\ldots,\alpha_d]$ for $\bm{A}\in\mathbb{R}^{\bm{m}}$, $\bm{\alpha}\in[\bm{m}]$, $\bm{m}\in\mathbb{N}^d$\\
        \rowcolor{gray!10!white}    $\|\cdot\|_\mathrm{F}$ & 
           Frobenius norm  \\
         \hline\hline
    \end{tabular}
    }
\end{center}
\caption{List of compact notations used in this work.}
\label{tab:notation}
\end{table}

\subsection{Contraction of tensors}
\label{appendix:contractions}

Let $\bm{m}\in\mathbb{N}^d$ be the \textit{mode sizes}.
For a tensor $\bm{A}\in\mathbb{R}^{\bm{m}}$, we refer to the tensor slice in the $k$-th dimension as the $k$-th mode dimension. A tensor with only one mode dimension is a vector, a tensor with only two mode dimensions is a matrix, a tensor with three mode dimensions is an order three tensor, etc.

First we will define the key concept of \textit{contraction} between tensors.
Let $k,\ell\in\mathbb{N}$ with $\ell>k$. For any tensor $\bm{A}\in \mathbb{R}^{m_1,\ldots,m_k}$ and $\bm{B}\in\mathbb{R}^{m_k,\ldots, m_\ell} $, we define $\bm{A}\bm{B} \in\mathbb{R}^{m_1,\ldots,m_{k-1},m_{k+1},\ldots,m_{\ell}}$ as the tensor resulting from the contracting of neighbored mode dimensions, i.e. the last mode dimension of $\bm{A}$ and the first of $\bm{B}$: 
$$
\bm{A}\bm{B} = \sum\limits_{i_k=1}^{m_k} \bm{A}[\ldots, i_k] \bm{B}[i_k,\ldots],
$$
where we used Python type notation for easier understanding.

More generally, for two tensors $\bm{C}\in\mathbb{R}^{\bm{m}^1}$ and $\bm{D}\in \mathbb{R}^{\bm{m}^2}$ for $\bm{m}^1\in\mathbb{N}^{d_1}$ with $\bm{m}^1 = (m_1^1,\ldots, m_{d_1}^1)$ and $\bm{m}^2\in\mathbb{N}^{d_2}$ with $\bm{m}=(m_1^2,\ldots,m_{d_2}^2)$ such that the $k$-th and the $\ell$-th mode size $m_k^1$ and $m_{\ell}^2$ coincide, i.e. $m=m_k^1=m_{\ell}^2$, we define the contraction with respect to the $k$-th and $\ell$-th mode direction as the tensor $\bm{E}\in \mathbb{R}^{m_1^1,\ldots,m_{k-1}^1,m_{k+1}^1 m_{d_1}^1, m_1^2,\ldots,m_{\ell-1}^2, m_{\ell+1}^2,m_{d_2}^2}$ with 
$$
\bm{E} = \bm{C} \circ_{kl} \bm{D} = \sum\limits_{i=1}^{m} \underbrace{\bm{C}[\ldots, i, \ldots ]}_{k\text{-th slice}} \underbrace{\bm{D}[\ldots, i, \ldots ]}_{\ell\text{-th slice}}.
$$
In particular, using this notation, it holds that
$$
\bm{A}\bm{B} = \bm{A}\circ_{k,1}\bm{B}.
$$

\subsection{Degrees of freedom, the core and remaining orthogonal components}
\label{appendix:tt_core}
This section is devoted to representations of tensor trains (TTs) suitable for numerically stable usage, in particular for the high-dimensional case.

Consider a TT with rank $\bm{r}=(r_1,\ldots,r_{d-1})\in\mathbb{N}^{d-1}$ given as 
\begin{equation}
\label{appendix:eq:C_TT_classic}
    \bm{C}[\bm{\alpha}] = \bm{C}_1[\alpha_1]\bm{C}_2[\alpha_2]\cdots \bm{C}_d[\alpha_d], \qquad \forall\bm{\alpha}\in[\bm{m}]
\end{equation}
with $\bm{C}_i\in\mathbb{R}^{r_{i-1}, m_i,r_i}$ with the convention of $r_0=r_d =1$. 

Using the notation of contractions of tensors from \Cref{appendix:contractions}, we can compactly write 
\begin{equation}
    \bm{C} = \bm{C}_1\bm{C}_2\cdots\bm{C}_d.
\end{equation}

First, we note that for any $i=1,\ldots,d-1$ we can choose an arbitrary invertible matrix $G_i\in \mathrm{GL}(r_i)\subset\mathbb{R}^{r_i,r_i}$ and insert it and its inverse between the $i$-th and $(i+1)$-th  component without changing the represented full tensor. In particular, let 
\begin{equation}
\widehat{\bm{C}}_i= \bm{C}_i  G_i,\qquad 
\widehat{\bm{C}}_{i+1} = G_{i}^{-1} \bm{C}_{i+1}. 
\end{equation}
Then, 
\begin{equation}
    \bm{C} = \bm{C}_1\bm{C}_2\cdots\bm{C}_d = \bm{C}_1\cdots\widehat{\bm{C}}_i \widehat{\bm{C}}_{i+1}\cdots\bm{C}_d.
\end{equation}
The space $\mathrm{GL}(r_i)$ is of dimension $r_i^2$ and, consequently, the degrees of freedoms in a TT are given by
\begin{equation}
    \# \mathrm{d.o.f.}(\mathrm{TT}) = \underbrace{\sum\limits_{i=1}^d r_{i-1}m_ir_i}_{\text{raw number of entries for each component}} - \underbrace{\sum\limits_{i=1}^{d-1}r_i^2}_{\text{dimensions of }\mathrm{GL}(r_i)}.
\end{equation}
For further readings regarding the derivation of degrees of freedoms in terms of gauge conditions, we refer to 
\citet{oseledets2011tensor,holtz2012manifolds, uschmajew2013geometry}. 

Now, we can apply high-order singular value decomposition (HOSVD, see \citet{grasedyck2010hierarchical,oseledets2011tensor}) on $\bm{C}$. Let us fix $1\leq k\leq d$ as the so-called \textit{core position}. Then for any $i=1,\ldots, k-1$, we iteratively reshape the $i$-th TT component $\bm{C}_i \in\mathbb{R}^{r_{i-1},m_i,r_i}$ as a matrix $C_i\in\mathbb{R}^{r_{i-1},m_ir_i}$ apply a singular value decomposition
\begin{equation}
C_i = U_i \Sigma_i V_i^{\top},
\end{equation}
and contract the non-orthonormal part $\Sigma_i V_i^{\top}$ from left to $\bm{C}_{i+1}$ and redefine 
\begin{equation}
\bm{C}_{i+1} \leftarrow \Sigma_i V_i^{\top}\bm{C}_{i+1}.
\end{equation}
Analogously, for $i= d,d-1,\ldots, k+1$ we iteratively perform SVDs $C_i =U_i \Sigma_i V_i^{\top} $ and contract $U_i \Sigma_i$ from right to $C_{i-1}$. 
Finally, we can reshape each matrix $U_i\in\mathbb{R}^{r_{i-1}, m_ir_i}$ to an order three tensor $\bm{U}_i\in\mathbb{R}^{r_{i-1},m_i,r_i}$ for $i=1,\ldots, k-1$ and matrix $V_i^{\top}\in\mathbb{R}^{r_{i-1}, m_ir_i}$ to an order three tensor $\bm{U}_i \in \mathbb{R}^{r_{i-1},m_i,r_i}$ for $i=d,d-1,\ldots,k+1$.
Then, it holds that 
\begin{equation}
\label{appendix:eq:TT_with_core}
    \bm{C}  = \bm{U_1} \cdots\bm{U}_{k-1} \bm{C}_{k} \bm{U}_{k+1}\cdots \bm{U}_d.
\end{equation}

The resulting updated non-orthonormal component $\mathfrak{C}= \bm{C}_{k}$ is called the \textit{core} at position $k$ of $\bm{C}$.

The classical representation of a TT  from \eqref{appendix:eq:C_TT_classic} is prone to rounding errors, when trying to access $\bm{C}[\bm{\alpha}]$ with $\bm{\alpha}=(\alpha_1,\ldots,\alpha_d)$. Instead, one first defines a core representation with core $\mathfrak{C}$, e.g. with a core position $k$. 
Then, the orthogonal components are contracted first and in the end contracted with the core, leading to a numerically more stable result.
In particular, a left contraction computationally realized from left left to right given as 
\begin{equation}
    \bm{L}_k[\alpha_1,\ldots, \alpha_{k-1}] = \bm{U_1}[\alpha_1] \cdots \bm{U}_{k-1}[\alpha_{k-1}], \qquad \bm{L}_k := 1 \text{ for } k = 1,
\end{equation}
and a right contraction $\bm{R}_k$ computationally realized from right to left contractions is defined as 
\begin{equation}
\label{appendix:eq:TT_right_stack}
\bm{R}_k[\alpha_k,\ldots, \alpha_{d}]  = \bm{U}_{k+1}[\alpha_{k+1}]\cdots \bm{U}_d[\alpha_d], \qquad  \bm{R}_k = 1  \text{ for } k=d.
\end{equation}
Finally, the contraction reads
\begin{equation}
\bm{C} = \bm{L}_k \mathfrak{C} \bm{R}_k.
\end{equation}

\subsection{Tensor basis functions}
\label{appendix:tensor_basis_functions}

We emphasize that the basis functions are required to have \emph{local support}. This requirement is motivated by two structural properties arising from the approximation of the value function. First, the value function is, up to an additive constant, a negative log-density with unbounded support on $\mathbb{R}^d$. Functions of this type generally do not belong to classical smoothness spaces defined via global integrability, such as standard Sobolev spaces. Second, the associated density typically concentrates most of its mass in localized regions of the domain. These regions are precisely those relevant for accurate approximation, which naturally motivates a localized representation. In our algorithm, the value function is approximated along sample trajectories. In an idealized setting, for a value function $V$ with density proportional to $e^{-V}$, and assuming exact reverse sampling dynamics at time $t$, one would aim to approximate $V(\cdot,t)$ in spaces such as $L^2\big(\mathbb{R}^d, e^{-V(\cdot,t)}\,\mathrm{d}\mathbf{x}\big)$, or more generally in weighted Sobolev spaces. From this perspective, restricting the approximation to compact subdomains $K$ can be interpreted as a localization or truncation of these weighted function spaces.

\subsubsection{Orthonormal univariate basis functions}
\label{sec: univariate basis}
Let $a_i < b_i$ and let $\mathcal{H}_i(a_i,b_i)$ be a Hilbert space with inner product $(\cdot,\cdot)_{\mathcal{H}_i}$. We introduce the corresponding basis functions and discuss the limitations and challenges arising from approximation accuracy, localization capability, and robustness with respect to sampling. We then present the concept of orthonormalization. Note that sample robustness is independent of the particular representation of the basis, including whether it is orthonormalized, and depends only on the vector space spanned by the basis functions.

\textbf{Legendre polynomials.}
As a classical basis, we consider the Legendre polynomials $L_0, L_1, L_2,\ldots$.
\begin{itemize}
    \item \textit{Approximation}: Legendre polynomials exhibit asymptotic spectral convergence for analytic target functions.
    \item \textit{Localization}: Functions with localized features may require a high polynomial degree for accurate approximation; in practice, the pre-asymptotic constant dominates and slows convergence.
    \item \textit{Sample robustness}: For empirical $L^2$ regression problems, a Legendre basis requires a number of samples that grows polynomially with the number of degrees of freedom in order to guarantee, with high probability, a unique least-squares solution; see \citet{cohen2017optimal}. This restricts the feasible number of basis functions when only limited samples are available.
\end{itemize}

\textbf{Spline polynomials of smoothness $s$ and order $p$.}
As a second ansatz, we consider spline functions with local smoothness $s$ and local polynomial degree $p$, defined on a grid $a_i = x_0 < x_1 < \ldots < x_k = b_i$, with $k+1$ knots and constructed via the Cox-de Boor recursion formulas; see \citet{de1978practical}.
\begin{itemize}
    \item \textit{Approximation}: Spline functions achieve algebraic convergence rates in $p$, depending on the grid width; see \citet{schumaker2007spline}.
    \item \textit{Localization}: Localized features can be captured much more effectively than with global polynomial bases.
    \item \textit{Sample robustness}: Empirical regression requires samples that adequately cover the domain. Due to the local support of the basis, if only few samples fall between neighboring knots, the reconstruction may fail even for simple target functions; see \Cref{appendix:fig:spline_fail}.
\end{itemize}

    \begin{figure}
    \centering
    \label{appendix:fig:spline_fail}
        \includegraphics[width =0.75\linewidth]{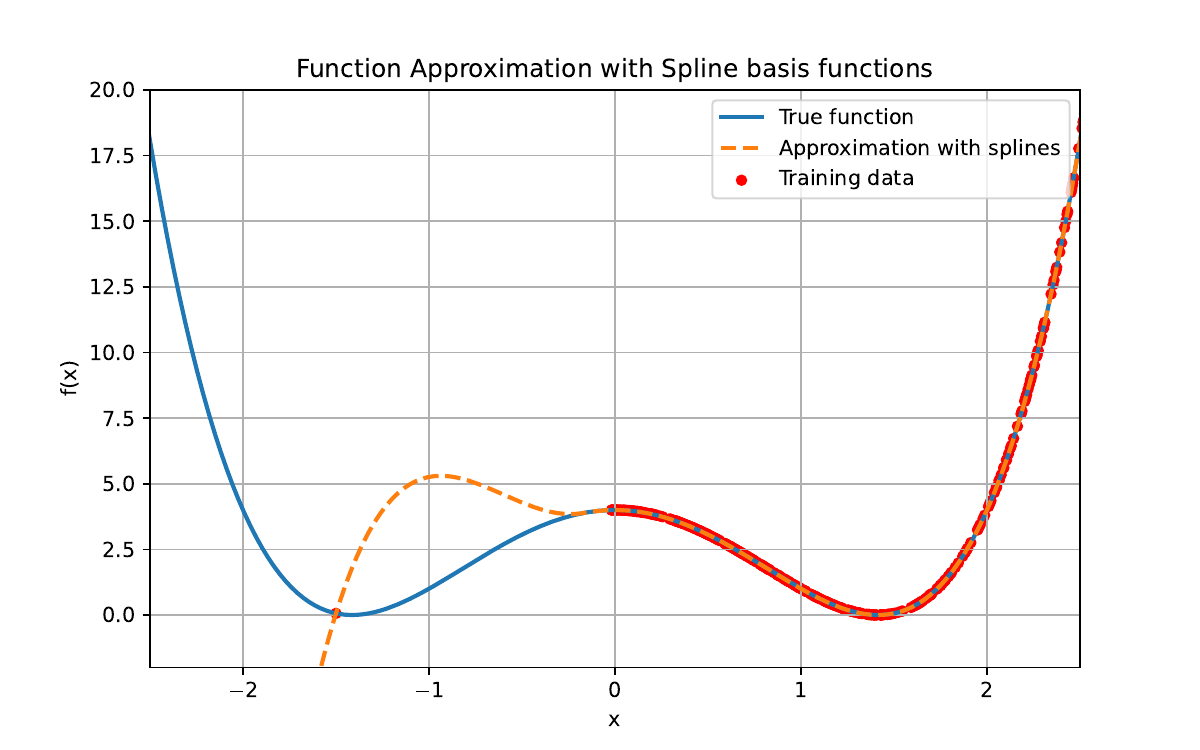}
        \caption{Consider a spline basis with $3$ knots, local polynomial degree $4$, and smoothness parameter $s=1$, used to approximate a target function from a limited number of samples concentrated on the left side of the origin. Despite the weak coupling induced by the smoothness constraint $s$, the resulting regression based on these samples performs very poorly.
        }
    \end{figure}
    Note, that also rank regularization is not enough to correct for the stable regression using splines under various sample distributions. This drawback becomes in particular critical in cases of our algorithm, when the initial policy does not allow for proper mass distribution, e.g. for multimodal targets.

\textbf{Fourier modes.}
Fourier modes of the form $1$, $\sin(k\omega x)$, and $\cos(k\omega x)$ for $k=1,2,\ldots$ constitute a classical basis class.
\begin{itemize}
    \item \textit{Approximation}: Fourier modes exhibit spectral approximation rates for analytic functions that admit a periodic extension of all derivatives. For non-periodic functions, however, the convergence deteriorates to algebraic rates, typically of low order, depending on boundary regularity.
    \item \textit{Localization}: As with Legendre polynomials, localized features generally require a large number of Fourier modes for accurate representation. Moreover, non-periodic target functions may induce spurious oscillations (Gibbs-type phenomena) in the approximation.
    \item \textit{Sample robustness}: Fourier bases are known to exhibit favorable sample complexity and robustness properties, in particular compared to Legendre polynomials; see \citet{trunschke2022theory}. This allows for the use of a large number of basis functions without a corresponding explosion in the required number of samples for stable regression.
\end{itemize}

\textbf{Extended Fourier modes.}
This basis augments Fourier modes with low-order Legendre polynomials $L_0 \equiv 1, L_1, L_2, \ldots$ together with $\sin(k\omega x)$ and $\cos(k\omega x)$ for $k=1,2,\ldots$.
\begin{itemize}
    \item \textit{Approximation}: The inclusion of polynomial modes up to degree $p$ mitigates the loss of convergence caused by non-periodicity. In particular, for an enrichment up to degree $p=2$, the approximation of smooth non-periodic functions achieves algebraic convergence of order $p+1=3$.
    \item \textit{Localization}: The localization properties are comparable to those of the underlying Legendre and Fourier components.
    \item \textit{Sample robustness}: This construction combines advantages from both Legendre and Fourier bases. In particular, it is expected to improve sample complexity and stability relative to pure polynomial bases, thereby enabling a larger number of basis functions with controlled sample requirements. While a rigorous analysis is beyond the scope of this work, empirical results support this behavior.
\end{itemize}

\subsubsection{Orthonormal tensor basis functions and choice of $\mathcal{H}$ }
\label{sec: orthonormal basis}
For $i=1,\ldots, d$, let  $\Phi_{i,\mathrm{raw}} = (\phi_{\alpha_i, \mathrm{raw}})_{\alpha_i=1}^{m_i}$ denote the stacked basis vector of $m_i\in\mathbb{N}$ raw univariate basis functions, e.g. Legendre polynomials, Splines, extended Fourier modes.
Then we define the associated Gramian matrix  $G_i\in\mathrm{GL}(m_i)$ with entries defined as 
\begin{equation}
[G_i]_{k, \ell} = ( \phi_{\alpha_k, \mathrm{raw}}, \phi_{\alpha_\ell, \mathrm{raw}})_{\mathcal{H}_i(a_i,b_i)}, \quad k,\ell = 1,\ldots,m_i.
\end{equation}
Then 
\begin{equation}
     (\phi_{j}^i)_{j=1}^{m_i} := G_i^{-1/2}\Phi_{i,\mathrm{raw}} 
\end{equation}
defines a vector of $\mathcal{H}_i(a_i,b_i)$ orthonormal basis functions.

Then, tensorization of these, yields the basis set  
\begin{equation}
\mathcal{B}_{\bm{m}} :=
\left\{ \phi_{\bm{\alpha}} :=\bigotimes_{i=1}^d \phi_{\alpha_i}^i \,\Bigg|\, 
\bm{\alpha}\in[\bm{m}], (\phi_{\alpha_j}^i,\phi_{\alpha_k}^i)_{\mathcal{H}_i} = \delta_{jk}, \forall i,j,k 
\right\},
\end{equation}
which forms an $\mathcal{H}$-orthonormal basis for the product Hilbert space $\mathcal{H}=\bigotimes_{i=1}^d \mathcal{H}_i(a_i,b_i)$.  

In this work, we set $\mathcal{H}_i(a_i,b_i) = H^2(a_i,b_i)$, the Sobolev space of order (smoothness) $2$. It holds for $K = \bigtimes [a_i,b_i]$ that
\begin{equation}
    \mathcal{H} = H_{\mathrm{mix}}^2(K), 
\end{equation}
where $H_{\mathrm{mix}}^s(K)$ denotes the classical space of mixed regularity of smoothness $s$. 

This choice of Hilbert space is motivated by two considerations. First, the BSDE loss in \eqref{eq: discrete loss} requires integrability of both zero- and first-order derivatives. Second, to mitigate potential oscillatory behavior arising in the approximation step, we additionally regularize second-order derivatives. This provides control over oscillations in the first derivative and thereby yields a more stable approximate policy.

The choice of orthonormalization is motivated by the use of the Parseval identity within the optimization scheme. Indeed, for $\widehat{V}_{\bm{C}} = \bm{C}[\Phi(\cdot)] \in \operatorname{span}\mathcal{B}_{\bm{m}}$,
the identity $\|\widehat{V}_{\bm{C}}\|_{\mathcal{H}} = \|\bm{C}\|_{\mathrm{F}}$ is preserved at the level of the micro-steps of the alternating minimization scheme. This provides a direct interpretation of the regularization term $\tau_n \|\mathfrak{C}\|_{\mathrm{F}}$ in \eqref{eq:linear_local_lsqrt_problem}. In particular, for a tensor train (TT) coefficient tensor $\bm{C}$ with core representation as in \eqref{appendix:eq:TT_with_core}, and corresponding core tensor $\mathfrak{C}$, it holds that
\begin{equation}
\|\bm{C}\|_{\mathrm{F}} = \|\mathfrak{C}\|_{\mathrm{F}}.
\end{equation}
Consequently, we obtain $\|\mathfrak{C}\|_{\mathrm{F}} = \|\widehat{V}_{\bm{C}}\|_{\mathcal{H}}$, i.e., the Frobenius norm of the core tensor coincides with the $\mathcal{H}$-norm of the represented function $\widehat{V}_{\bm{C}}$.

\subsection{Evaluation, gradient and Hessian of extended tensor trains}
\label{appenidx:xTT:evaluation}
This section is devoted to illustrating the efficient and numerically stable evaluation of functions in xTT format, including their gradients and Hessians.

To this end, let $\widehat{V}_{\bm{C}} \in \mathcal{X}_{\bm{r}}^{\bm{m}}$ be an extended tensor train as defined in \eqref{eq:xtt_space}. Then, using $k_0=k_d=1$, it holds
\begin{subequations}
\begin{align}
    \widehat{V}_{\bm{C}}(\bm{x}) &= \bm{C}[\Phi(\bm{x})] = 
    \sum\limits_{\bm{\alpha}\in[\bm{m}]} \bm{C}[\bm{\alpha}] \phi_{\bm{\alpha}}(x_1,\ldots,x_d) \\
    &= \sum\limits_{k_1=1}^{r_1}\cdots\sum\limits_{k_{d-1}=1}^{r_{d-1}} 
    \bm{C}_1[k_0,\alpha_1,k_1]
    \cdots 
    \bm{C}_1[k_{d-1},\alpha_d,k_d] \prod\limits_{i=1}^{d}\phi_{\alpha_i}^i(x_i).
\end{align}
\end{subequations}
\subsubsection{Evaluation}
\label{appendix:xtt_evaluation}
Now define the vector $w^i = w^i(x_i)$ as 
\begin{equation}
[w^i(x_i)]_j =   (\phi_{j}^i(x_i))_{j=1}^{m_i} \in\mathbb{R}^{m_i}. 
\end{equation}
Then, 
\begin{align}
    \widehat{V}_{\bm{C}}(\bm{x}) = \underbrace{\bm{C}_1 \circ_{2,1} w^1(x_1)}_{\in \mathbb{R}^{r_0, r_1}}  \cdots \underbrace{\bm{C}_d \circ_{2,1} w^d(x_d)}_{\in \mathbb{R}^{r_{d-1},r_d}}.
\end{align}
Now, in order to utilize stable evaluations, we consider a core representation as discussed in \Cref{appendix:tt_core}. To that end, we simplify the discussion and assume that the core position at evaluation is $k = 0$. Consequently, we have
\begin{equation}
    \bm{C} = \mathfrak{C} \bm{U}_2 \cdots \bm{U}_d = \mathfrak{C} \bm{R}_1.
\end{equation}
Then, by noting that $r_0=r_d = 1$, the evaluation is realized as 
\begin{equation}
     \widehat{V}_{\bm{C}}(\bm{x}) = \underbrace{\mathfrak{C} \circ_{2,1} w^1(x_1)}_{\in \mathbb{R}^{1, r_1}}  \underbrace{\bm{U}_2 \circ_{2,1} w^1(x_2)}_{\in \mathbb{R}^{r_1,r_2}}  \cdots \underbrace{\bm{C}_d \circ_{2,1} w^d(x_d)}_{\in \mathbb{R}^{r_{d-1},1}}, 
\end{equation}
which, contracted from right to left, reduces to iterates of matrix-vector multiplications after each $\circ_{2,1}$ contraction is performed first.

\subsubsection{Gradient evaluation}
\label{appendix:xtt_gradevaluation}
We define the derivative vector $\dot{w}^i = \dot{w}^i(x_i)$ as 
\begin{equation}
[\dot{w}^i(x_i)]_j =  (\partial_{x_i} \phi_{j}^i(x_i))_{j=1}^{m_i} \in\mathbb{R}^{m_i}, \quad i=1,\ldots, d.
\end{equation}
Then it holds 
\begin{subequations}
\begin{align}
    \partial_{x_i}\widehat{V}_{\bm{C}}(\bm{x}) &= \bm{C}_1 \circ_{2,1} w^1(x_1) \cdots  
    \bm{C}_i \circ_{2,1} \dot{w}^i(x_i) \cdots  
    \bm{C}_d \circ_{2,1} w^d(x_d) \\ 
    &= \mathfrak{C} \circ_{2,1} w^1(x_1)
        \cdots 
    \bm{U}_i \circ_{2,1} \dot{w}^i(x_i) \cdots \bm{C}_d \circ_{2,1} w^d(x_d).
\end{align}
\end{subequations}
A naive approach for evaluating gradients would require $d$ separate matrix-vector contractions. However, this procedure can be accelerated by avoiding redundant computations through the use of intermediate stacks.
First, motivated by \eqref{appendix:eq:TT_right_stack}, the functional right vector stacks are defined for $k={d},d-1,\ldots, 1$ as
\begin{equation}
    R_k(x_{k+1},\ldots, x_d)  = \bm{U}_{k+1}\circ_{2,1}
     w^{k+1}(x_{k+1})
    \cdots \bm{U}_d \circ_{2,1}w^{d}(x_{d}), \qquad  R_k = 1  \text{ for } k=d,
\end{equation}
where the evaluation is performed from left to right based on matrix-vector multiplications. It holds that $R_k(x_{k+1},\ldots, x_d)\in\mathbb{R}^{r_{k},1}$ is a vector.
Note that the recursive relation  
\begin{equation}
R_{k}(x_{k+1},\ldots, x_d)  = 
\bm{U}_{k+1}\circ_{2,1}
     w^{k+1}(x_{k+1})
R_{k+1}(x_{k+2},\ldots, x_d), 
\end{equation}
holds, avoiding recomputation when building each $R_k$. 
Moreover, for each $\ell = 2,\ldots, d-1$ we define the intermediate matrix stacks $M_\ell$ as
\begin{equation}
    M_2(x_2) = \bm{U}_2 \circ_{2,1} w^{2}(x_2), \quad 
    M_{\ell}(x_2,\ldots, x_{\ell}) = M_{\ell-1}(x_2, \ldots, x_{\ell-1}) \bm{U}_{\ell} \circ_{2,1}w^{\ell}(x_{\ell}), \qquad \ell = 3, \ldots, d-1.
\end{equation}
Again, the recursive relation avoids redundant computations. Consequently, it holds that
\begin{align}
   \partial_{x_1}\widehat{V}_{\bm{C}}(\bm{x}) &=  \mathfrak{C} \circ_{2,1} \dot{w}^1(x_1) R_{1}(x_2,\ldots, x_d), \\
    \partial_{x_2}\widehat{V}_{\bm{C}}(\bm{x})& =  \mathfrak{C} \circ_{2,1} w^1(x_1) 
    \bm{U}_{2}\circ_{2,1} \dot{w}^2(x_i)
    R_{2}(x_{3},\ldots, x_d).\\
\partial_{x_i}\widehat{V}_{\bm{C}}(\bm{x})& =  \mathfrak{C} \circ_{2,1} w^1(x_1) 
    M_{i-1}(x_2,\ldots, x_{i-1}) 
    \bm{U}_{i}\circ_{2,1} \dot{w}^i(x_i)
    R_{i}(x_{i+1},\ldots, x_d), \quad i=3,\ldots,d.
    \label{appendix:eq:doublestack_gradient_TT}
\end{align}
For numerical stability, we first contract
\begin{equation}
M_{i-1}(x_2,\ldots,x_{i-1}) \,
\bm{U}_{i} \circ_{2,1} \dot{w}^i(x_i) \,
R_{i}(x_{i+1},\ldots,x_d)
\end{equation}
in \eqref{appendix:eq:doublestack_gradient_TT}, referred to as the orthogonal block. Subsequently, a final contraction is performed with the non-orthogonal block involving the core $\mathfrak{C}$.

This formulation improves the efficiency of gradient evaluation by up to a factor of $d$ compared to naive gradient computations.

\subsubsection{Hessian evaluation}
\label{appendix:hessian_eval}
We define the second derivative vector $\ddot{w}^i = \ddot{w}^i(x_i)$ as 
\begin{equation}
[\ddot{w}^i(x_i)]_j =   (\partial_{x_ix_i}^2 \phi_{j}^i(x_i))_{j=1}^{m_i} \in\mathbb{R}^{m_i}, \quad i=1,\ldots, d.
\end{equation}

Then it holds 
\begin{subequations}
\begin{align}
    \partial_{x_ix_i}^2\widehat{V}_{\bm{C}}(\bm{x}) &= \bm{C}_1 \circ_{2,1} w^1(x_1) \cdots  
    \bm{C}_i \circ_{2,1} \ddot{w}^i(x_i) \cdots  
    \bm{C}_d \circ_{2,1} w^d(x_d) \\ 
    &= \mathfrak{C} \circ_{2,1} w^1(x_1)
        \cdots 
    \bm{U}_i \circ_{2,1} \ddot{w}^i(x_i) \cdots \bm{C}_d \circ_{2,1} w^d(x_d)
\end{align}
\end{subequations}
and for $i < j$
\begin{subequations}
\begin{align}
    \partial_{x_ix_j}^2\widehat{V}_{\bm{C}}(\bm{x}) &= \bm{C}_1 \circ_{2,1} w^1(x_1) \cdots  
    \bm{C}_i \circ_{2,1} \dot{w}^i(x_i) \cdots  
    \bm{C}_j \circ_{2,1} \dot{w}^j(x_j) 
    \bm{C}_d \circ_{2,1} w^d(x_d) \\ 
    &= \mathfrak{C} \circ_{2,1} w^1(x_1)
        \cdots 
    \bm{U}_i \circ_{2,1} \dot{w}^i(x_i) \cdots 
     \bm{U}_j \circ_{2,1} \dot{w}^j(x_j) \cdots 
    \bm{C}_d \circ_{2,1} w^d(x_d).
\end{align}
\end{subequations}
A naive approach for hessian evaluations would require $\mathcal{O}(d^2)$ matrix-vector contractions. The evaluation of diagonal hessian terms can be accelerated in a similar manner to the gradient computation by exploiting the right vector stacks $R_k$ and the mid stack $M_{k-1}$. Moreover, many sub-contractions reappear for different pairs $i < j$ in the remaining terms. Using the right vector stacks $R_k$, we can iteratively construct auxiliary stacks $\dot{R}_k^j$ of the form
\begin{equation}
\dot{R}_i^j(x_{i+1},\ldots,x_d)
=
U_{i+1} \circ_{2,1} w^{i+1}(x_{i+1})
\cdots
U_{j} \circ_{2,1} \dot{w}^j(x_{j})
R_{j}(x_{j+1},\ldots,x_d),
\end{equation}
which can be computed recursively for all $i < j$, while avoiding repeated evaluations of intermediate products of the form
\begin{equation}
U_{i+1} \circ_{2,1} w^{i+1}(x_{i+1})
\cdots
U_{j} \circ_{2,1} \dot{w}^j(x_{j}).
\end{equation}

Then the final computation can be carried out as
\begin{subequations}
\begin{align}
    \partial_{x_1x_j}^2\widehat{V}_{\bm{C}}(\bm{x})&= \mathfrak{C} \circ_{2,1} \dot{w}^1(x_1) \dot{R}_{1}^{j}(x_2,\ldots, x_d), \quad j = 2,\ldots, d \\
    \partial_{x_ix_j}^2\widehat{V}_{\bm{C}}(\bm{x})&= 
    \mathfrak{C} \circ_{2,1} w^1(x_1)
    M_{i-1}(x_2,\ldots, x_{i-1}) 
    \bm{U}_i \circ_{2,1} \dot{w}^i(x_i)\dot{R}_{i}^{j}(x_{i+1}, \ldots, x_d), \quad 1<i<j.
    \label{eq:appendix:mixed_derivative_contraction}
\end{align}
\end{subequations}
Again, the contraction in \eqref{eq:appendix:mixed_derivative_contraction} is performed from left to right, such that the non-orthonormal portion is evaluated in the final step.

This formulation improves the efficiency of Hessian evaluation by up to a factor of $\mathcal{O}(d^2)$ compared to naive Hessian computations.

\subsubsection{Moving approximation domain and approximate policy extensions}
\label{appendix:moving_domain_and_extension}
At timestep $t = t_n$, the empirical loss stated in \eqref{eq:empirical_loss_linear_in_V} depends on samples $(\widehat{X}^{u,(k)}_n)_{k=1}^K$. Motivated by the idealistic situation that, up to approximation error,
\begin{equation}
\widehat{X}^{u,(k)}_n \sim e^{-V(\cdot, t_n)},
\end{equation}
we choose an approximation domain $D_n$ time-step dependent, based on the location of samples. To this end, we define  
\begin{equation}
    \widehat{a}_{i,n} := \min\limits_{k=1,\ldots,K} (\widehat{X}^{u,(k)}_n)_i, \qquad 
    \widehat{b}_{i,n} := \max\limits_{k=1,\ldots,K} (\widehat{X}^{u,(k)}_n)_i, \qquad i = 1,\ldots, d.
\end{equation}
For a domain extension factor $0\leq q < 1$, we then define 
\begin{equation}
\label{eq:appendix:choice_of_intervals}
a_{i,n} := \widehat{a}_{i,n} - q |\widehat{b}_{i,n}-\widehat{a}_{i,n}|, \qquad b_{i,n} := \widehat{b}_{i,n} + q  |\widehat{b}_{i,n}-\widehat{a}_{i,n}|, \qquad i = 1,\ldots, d.
\end{equation}
In our experiments, we use $q=0.1$.
Finally, we introduce the adapted tensor domain at time step $t=t_n$ as
\begin{equation}
    D_n := \bigtimes_{i=1}^d [a_{i,n}, b_{i,n}].
\end{equation}
Consequently, the tensor basis set $\mathcal{B}_m$ is time-step dependent, i.e. $\mathcal{B}_m= \mathcal{B}_m(D_n)$.

For the reverse sampling dynamics, trajectories may leave the prescribed evaluation domain $D_n$ at time $t=t_n$, rendering the evaluation of the gradient entering the policy ill-defined, since the value function is approximated using basis functions from $\mathcal{B}_{\bm{m}}(D_n)$. To address this issue, we consider a simple affine linear extension strategy. Let $0 \leq p < 1$ denote a domain shrinkage factor and define the shrunken tensor domain
\begin{equation}
D_{n,p} = \bigtimes_{i=1}^d [a_{i,n,p}, b_{i,n,p}],
\end{equation}
with 
\begin{equation}
a_{i,n,p} = a_{i,n} + p |b_{i,n}-a_{i,n}|, \quad 
b_{i,n,p} = b_{i,n} - p |b_{i,n}-a_{i,n}|.
\end{equation}

In our numerical experiment, we used $p=0.1$ in accordance to $q$ above.

Let $\bm{x}\in\mathbb{R}^d$. We define the \textit{extended gradient} evaluation of a function $v\in\operatorname{span}\mathcal{B}_{\bm{m}}(D_n)$ as follows. Let $\Pi_{D_{n,p}}\bm{x}$ denote the projection of $\bm{x}$ onto $D_{n,p}$. Then 
\begin{align}
    \nabla_{\mathrm{ext}} v(\bm{x}, t_n):=  \nabla v (  \Pi_{D_{n,p}}\bm{x}) +  H_v(\bm{x} - \Pi_{D_{n,p}}\bm{x}), 
\end{align}
with the well defined gradient $\nabla v$ and Hessian $H_v$ for points within $D_n$. In practice, the computation of the Hessian will only be realized if $\bm{x}\neq \Pi_{D_{n,p}}\bm{x}$.

\begin{remark}
We note that during the sampling stage, the vast majority of trajectories remain within the predefined moving domains $D_n$. As a result, only a small fraction of trajectories require evaluation of the Hessian, thereby limiting potential computational overhead. Moreover, as discussed above, the application of Hessians in the xTT format can in any case be carried out with high efficiency.
\end{remark}

\begin{remark}
The choice of $p>0$ together with the design of an affine-type extension is motivated as follows. First, the affine structure ensures that trajectories leaving the computational domain are mapped back in a controlled manner while still being guided in an approximately correct direction. Second, potential boundary-induced oscillations are absorbed in the exterior region parameterized by $q>0$ and are subsequently damped by the shrinkage parameter $p>0$.
\end{remark}
An illustration of our extension mechanism is provided in \Cref{appendix:fig:linear_extension}.
\begin{figure}
\centering
 
    \includegraphics[width = 0.8\linewidth]{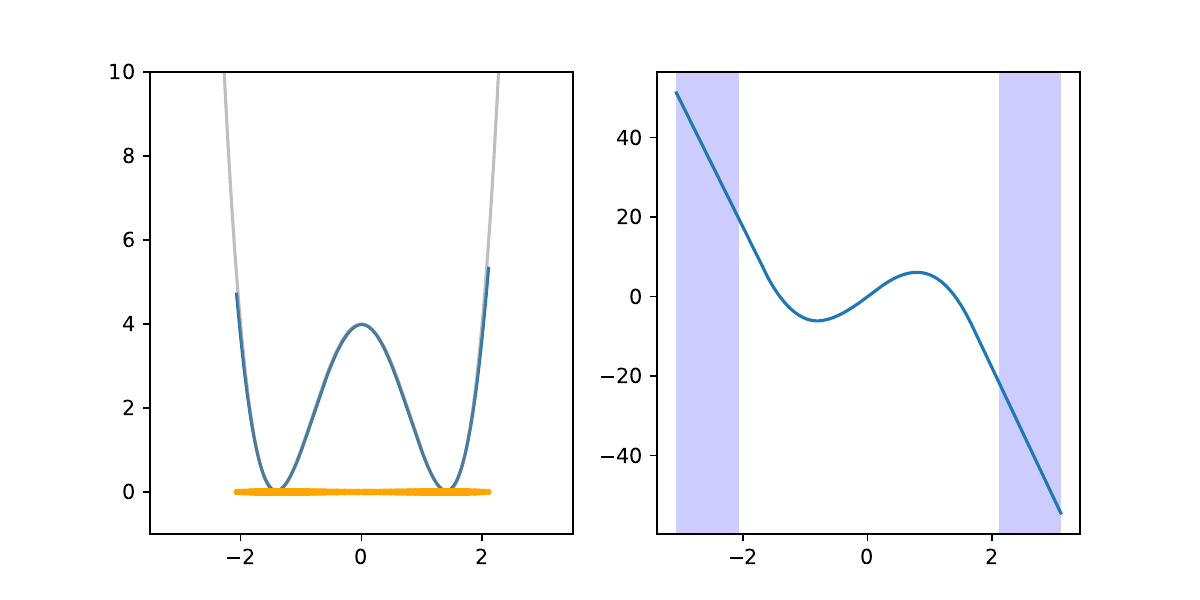}
    \caption{Illustration of the linear extension procedure. Left: the log target density (gray), the approximated value function at time step $t=t_{N-1}$ (blue), and the training samples used for the approximation (orange). Right: the resulting approximate policy, where the shaded region indicates the domain in which the linear extension is activated.}
    \label{appendix:fig:linear_extension}
\end{figure}

\subsubsection{Representation change on different domains}
\label{appendix:section:representation_change}
In view of the iterative structure in \Cref{alg: HJB approximation}, multiple instances of minimization problems associated with the explicit BSDE loss in \eqref{eq: discrete loss} must be solved. Each optimization problem requires an initial guess. Motivated by the continuous evolution of the value function $V$ in time, a natural choice for the initial guess at time step $t=t_n$ is the approximation of $V$ obtained at the previous time step $t=t_{n+1}$. However, since we employ a sequence of adaptively moving domains $(D_n)_n$, in general it holds that $D_{n+1} \neq D_n$.

Consequently, a direct copy-paste of the extended tensor train representation is not valid. In fact, the basis functions at time step $t=t_n$ change, and accordingly the coefficient tensor must be updated.

Fortunately, the extended tensor train format admits an efficient change of representation between different domains, which we introduce next.

Let $D^1=\bigtimes_{i=1}^d[a_{i,1}, b_{i,1}]$ and $D^2=\bigtimes_{i=1}^d[a_{i,2}, b_{i,2}]$ be two tensor domains with non-trivial overlap $O=D^1\cap D^2$. 
Then 
\begin{equation}
O= \bigtimes_{i=1}^d [\max\{a_{i,1},a_{i,2}\}, \min\{b_{i,1}, b_{i,2}\}] := \bigtimes_{i=1}^d [a_{i}, b_{i}]
\end{equation}
is a tensor domain as well.
Moreover, for $\bm{m}^1=(m_1^1,\ldots,m_d^1),\bm{m}^2=(m_1^2,\ldots,m_d^2) \in\mathbb{N}^d$ consider two tensor basis sets with respect to $D^1$ and $D^2$: 

\begin{equation}
\mathcal{B}_{\bm{m}^\ell}^\ell :=
\left\{ \phi_{\bm{\alpha}}^\ell :=\bigotimes_{i=1}^d \phi_{\alpha_i}^{i,\ell} \,\Bigg|\, 
\bm{\alpha}\in[\bm{m}^\ell] 
\right\},\quad \ell = 1,2.
\end{equation}
We note that we omit the orthonormalization conditions at this point, as they are irrelevant for the subsequent discussion.

We then adopt a Galerkin-type projection ansatz for the change of representation from basis functions defined on $D^1$ to basis functions defined on $D^2$. To this end, consider the tensor-product Hilbert space
\begin{equation}
\mathcal{H}(O) = \bigotimes_{i=1}^d \mathcal{H}_i(a_i,b_i),
\end{equation}
with univariate inner products $(\cdot,\cdot)_{\mathcal{H}_i(a_i,b_i)}$ for $i=1,\ldots,d$.

Let $i=1,\ldots, d$ be fixed. We define the \textit{overlap Gramian matrix} for basis functions from $\mathcal{B}_{\bm{m}^2}^2$ as
\begin{equation}
\label{appendix:overlap_gramian_matrix}
    [G_i^2]_{jk } = (\phi_{j}^{i,2},\phi_{k}^{i,2})_{\mathcal{H}_i(a_i,b_i)},\qquad j,k = 1,\ldots, m_i^2,
\end{equation}
and the \textit{overlapping interaction matrix}
\begin{equation}
\label{appendix:overlap_interaction_matrix}
    [M_i]_{jk} = (\phi_{j}^{i,2},\phi_{k}^{i,1})_{\mathcal{H}_i(a_i,b_i)},\qquad j,k = 1,\ldots, m_i^2.
\end{equation}
Then, assuming that $G_i^2$ is invertible, we define the univariate representation change operator
\begin{equation}
    T_i := [G_i^2]^{-1} M_i \in \mathbb{R}^{m_{i}^2,m_i^1}.
\end{equation}

\textit{Application}: Now, let $\widehat{V}^1_{\bm{C}^1}(\bm{x}) = \bm{C}^1[\Phi^1(\bm{x})]$ be an xTT on $D^1$ with basis functions from $\mathcal{B}_{\bm{m}^1}^1$ and a tensor train given as 
\begin{equation}
\bm{C}^1 = \bm{C}_1^1\cdots \bm{C}_d^1.
\end{equation}
Then, we define a new xTT 
\begin{equation}
    \widehat{V}^2_{\bm{C}^2} = \bm{C}^2[\Phi^2(\bm{x})] 
\end{equation}
with
\begin{equation}
    \bm{C}^2 = \bm{C}_1^2\cdots \bm{C}_d^2, \quad \bm{C}_i^2 := T_i \circ_{2,2} \bm{C}_i^1, 
\end{equation}
That is, $T_i$ acts on the second mode of $\bm{C}_i^1$, which corresponds to the basis discretization.

In practice, we choose $\mathcal{H}_i(O)$ in accordance with the Hilbert space structures $\mathcal{H}_i(a_{i,1}, b_{i,1})$ and $\mathcal{H}_i(a_{i,2}, b_{i,2})$. For instance, if we set $\mathcal{H}_i(a_{i,1}, b_{i,1}) = H^2(a_{i,1}, b_{i,1})$, then we choose the corresponding univariate Hilbert spaces on the overlapping intervals $[a_i,b_i]$ to be Sobolev spaces of order $2$, i.e.\ $H^2(a_i,b_i)$. In this case, the resulting tensor-product space becomes $\mathcal{H}(O) = H_{\mathrm{mix}}^2(O)$. Furthermore, the scalar products in \eqref{appendix:overlap_gramian_matrix} and \eqref{appendix:overlap_interaction_matrix} can be computed via efficient one-dimensional numerical integration at negligible computational cost.

\subsection{Empirical loss minimization via ALS}
\label{appenidx:ALS}
We are now ready to formulate the alternating least squares (ALS) scheme, originally introduced in \citet{holtz2012alternating}. As a first, more accessible objective, we consider the classical empirical $L^2$ loss and subsequently extend the discussion to our BSDE loss.

\subsubsection{ALS for empirical $L^2$ losses}
\label{appendix:ALS_L2}

Let $D$ be a fixed tensor domain and assume we have samples ${X}^{(k)}\in D$ and values $y^{(k)}$. Furthermore, denote by ${X}_i^{(k)}$ the $i$-th component of $X^{(k)}$.
Our goal is to fit a extended tensor train $v_{\bm{C}}\in \mathcal{X}_{\bm{R}}^{\bm{m}}$ to the data using a regularized empirical loss of the form 
\begin{equation}
\mathcal{L}(v) := \frac{1}{K}\sum\limits_{k=1}^K |v(X^{(k)}) - y^{(k)}|^2  + \tau \|v\|_{\mathcal{H}}^2.
\end{equation}

In the spirit of 
\Cref{appendix:xtt_evaluation}, we define the univariate feature matrices $W_i\in\mathbb{R}^{K, m_i}$ with
\begin{equation}
\label{appendix:ea:feature_matrices}
[W_i]_{k\ell} =  \phi_{\ell}^i((X^{(k)})_i),\qquad k=1,\ldots,K, \quad \ell=1,\ldots,m_i. 
\end{equation}
Now, assume that the tensor train $\bm{C} = \bm{U}_1 \cdots \bm{U}_{i-1} \mathfrak{C}\, \bm{U}_{i+1}\cdots \bm{U}_d$ has its core at position $i$. Using subsequent contractions $\bm{U}_i\circ_{2,2}W_i\in \mathbb{R}^{r_{i-1},K, r_i}$ and defining $Y = (y^{(k)})_k\in\mathbb{R}^{K,1}$, we can write
\begin{subequations}
\begin{align}
    \mathcal{L}(v_{\bm{C}}) &= \frac{1}{K}\sum\limits_{k=1}^K |\bm{C}[\Phi(X^{(k)})] - y^{(k)}|^2  + \tau \|\bm{C}\|_{\mathrm{F}}^2 \\
    &=
    \frac{1}{K}\|
    \bm{U}_1\circ_{2,2}W_1 \cdots \bm{U}_{i-1}\circ_{2,2}W_{i-1} \mathfrak{C}\circ_{2,2} W_i \cdots \bm{U}_d\circ_{2,2} W_{d} - Y\|_2^2
      + \tau \|\mathfrak{C}\|_{\mathrm{F}}^2 \\
      &= 
      \frac{1}{K} \| \bm{L}_{i} \mathfrak{C}\circ_{2,2} W_i \bm{R}_i - Y \|_2^2 + \tau \|\mathfrak{C}\|_{\mathrm{F}}^2,
      \label{appendix:eq:final_l2_loss_simplification}
\end{align}
\end{subequations}
with order three tensors (in fact matrices) given as
\begin{align}
    \bm{L}_{i} &:= \bm{U}_1\circ_{2,2}W_1 \cdots \bm{U}_{i-1}\circ_{2,2}W_{i-1} \in \mathbb{R}^{1,K, r_{i-1}}, \\
     \bm{R}_{i} &:= \bm{U}_{i+1}\circ_{2,2}W_{i+1} \cdots \bm{U}_{d}\circ_{2,2}W_{d} \in \mathbb{R}^{r_i,K, 1}.
\end{align}
Now, the key observation is that $\bm{L}_i$, $\bm{R}_i$, and $W_i$ define a linear operator acting on $\mathfrak{C}$. To make this precise, let $L_i \in \mathbb{R}^{K , r_{i-1}}$ and $R_i \in \mathbb{R}^{r_i , K}$ denote the matrix representations of the squeezed tensors $\bm{L}_i$ and $\bm{R}_i$, respectively, and let $y \in \mathbb{R}^K$ denote the squeezed version of the matrix $Y$. Then, we can define the matrix 
\begin{equation}
\label{appendix:eq:L2_micro_system_matrix}
 A_i := R_i^\top \otimes W_i \otimes L_i \in \mathbb{R}^{K, r_{i-1}mr_i}.
\end{equation}
Let $c:=\operatorname{vec}(\mathfrak{C})$, then it holds for $y=(y^{(k)})_k\in\mathbb{R}^{K}$
\begin{equation}
\label{appendix:eq_least_square_l2}
    \mathcal{L}(v_{\bm{C}}) = \frac{1}{K}\| A_ic - y\|_2^2 + \tau \|c\|_2^2.
\end{equation}

Now, in a sweeping manner, moving forward and backward over each TT component, we shift the core to position $i$, solve the resulting least-squares problem \eqref{appendix:eq_least_square_l2}, and update the core by reshaping the corresponding solution. The basic principle is summarized in \Cref{alg:ALS_L2}.

\begin{algorithm}
\caption{ALS for tensor train minimization with orthogonal structure}
\label{alg:ALS_L2}
\begin{algorithmic}[1]
\STATE {\bfseries Input:} Initial components $\{\bm{C}_1^{(0)}, \ldots, \bm{C}_d^{(0)}\}$, regularization magnitude $\tau > 0$, maximal iterations $I$, error stop criteria $\varepsilon$, data $(X^{(k)})_k, y\in\mathbb{R}^K$.
\STATE {\bfseries Output:} Optimized TT representation $\bm{C} = \bm{C}_1 \cdots \bm{C}_d$ for xTT $v_{\bm{C}}\in\mathcal{X}_{\bm{r}}^{\bm{m}}$
\STATE Evaluate feature matrices $W_i$ from \eqref{appendix:ea:feature_matrices}.
\STATE Set $n = 0$
\REPEAT
    \STATE Set $n = n+1$ 
    \FOR{$i = 1,2,\ldots, d,d-1,\ldots, 2$} 
        \STATE Set TT core to position $i$ yielding  $\bm{C} = \bm{U}_1^{(n)} \cdots \bm{U}_{i-1}^{(n)} \mathfrak{C}\, \bm{U}_{i+1}^{(n)} \cdots \bm{U}_d^{(n)}$
        \STATE Construct $A_i^{(n)}$ from left and right orthogonal components $\bm{L}_i^{(n)},\bm{R}_i^{(n)}$ and $W_i$ as in \eqref{appendix:eq:L2_micro_system_matrix}.
        \STATE Solve micro-step problem:
        \STATE \qquad $c^* =\arg\min_{c\in\mathbb{R}^{r_{i-1}m_ir_i}} \|A_i^{(n)} c - y\|_2^2 + \tau \|c\|_F^2$
            \STATE Reshape $c^*$ to $\mathfrak{C}^\ast \in \mathbb{R}^{r_{i-1} , m_i , r_i}$
            and update core at position $i$ to $\mathfrak{C}=\mathfrak{C}^\ast$.
    \ENDFOR
\UNTIL{error stop criteria is met or $n \geq N$}
\end{algorithmic}
\end{algorithm}
For the interested reader, we note that the subsequent shift of the core position within the for-loop of \Cref{alg:ALS_L2} (the so-called \textit{forward-backward sweep}) requires only a single local singular value decomposition, as described in \Cref{appendix:tt_core}. In particular, all components in the current representation remain unchanged except for two factors: the former core and the newly shifted core. This observation enables significant speedups of the algorithm, since the left component $\bm{L}_i$ and the right component $\bm{R}_i$ can be constructed during the backward sweep ($i=d,d-1,\ldots,2$) and forward sweep ($i=1,\ldots,d-1$) using the stack-based principle discussed in \Cref{appenidx:xTT:evaluation}. This avoids redundant contractions within a single sweep.

As a stopping criterion, we choose $\varepsilon > 0$ such that the relative loss satisfies
\begin{equation}
\frac{1}{K}\frac{\sum_{k=1}^K \big|v(X^{(k)}) - y^{(k)}\big|^2}{\|y\|_2^2} \le \varepsilon. 
\end{equation}
This quantity can be efficiently evaluated at each outer iteration $n$ and micro step $i$ as
\begin{equation}
\frac{1}{K}\sum_{k=1}^K \big|v(X^{(k)}) - y^{(k)}\big|^2
= \frac{1}{K}\big\|A_i^{(n)} \operatorname{vec}(\mathfrak{C}) - y\big\|_2^2.
\end{equation}

\subsubsection{ALS for BSDE losses with general basis functions}
\label{appendix:ALS_BSDE_rigorous}

We now extend the ALS approach to the BSDE loss function in the general setting where derivatives of the basis functions are not necessarily contained in the span of the original basis. This situation arises, for instance, for $\mathcal{C}^s$ spline spaces of polynomial degree $p \geq s+2$.

We begin with the BSDE loss
\begin{equation}
\widehat{\mathcal{L}}_n^K( \widehat{V}_n ) = \frac{1}{K} \sum\limits_{k=1}^K \left( (\mathrm{Id} +  {\Sigma}^{(k)}_n \cdot \nabla ) \widehat{V}_n(\widehat{X}^{u,(k)}_n) - y_{n+1}^{(k)} \right)^2.
\end{equation}

Let $\widehat{V}_n$ be represented by an xTT $v_{\bm{C}}\in\mathcal{X}_{\bm{r}}^{\bm{m}}$ with tensor train with core position at $j$ given as 
\begin{equation}
\bm{C} = \bm{U}_1 \cdots \bm{U}_{j-1}\mathfrak{C}\bm{U}_{j+1}\cdots \bm{U}_d.
\end{equation}
For each sample $k = 1, \ldots, K$, define the basis evaluations as
\begin{align}
w_j^{(k)}[\alpha_j] &:= \phi_{\alpha_j}^j(X_{n,j}^{u,(k)}), \\
\dot{w}_j^{(k)}[\alpha_j] &:= \partial_x \phi_{\alpha_j}^j(X_{n,j}^{u,(k)}), \quad \forall j=1,\ldots,d, 
\end{align}
where we omit the dependency on $u$ and $n$ for notational convenience.
We introduce the feature matrices  $W_i, \dot{W}_i\in\mathbb{R}^{K,m_i}$ as \begin{equation}
\label{appendix:eq:BSDE_feature_matrices}
    [W_i]_{k\ell} = w_{i}^{(k)}[\ell], \quad \quad [\dot{W}_i]_{k\ell} = \partial_{x_i} \dot{w}_{i}^{(k)}[\ell].
\end{equation}

Further, we introduce the weighted feature matrices for $i = 0, \ldots, d$ and $j = 1, \ldots, d$ as
\begin{equation}
\label{appendix:eq:weighted_feature_BSDE}
W_{ij} = 
\begin{cases}
W_j & \text{if } i \neq j, \\
{\Sigma}_j^{n} * \dot{W}_j & \text{if } i = j,
\end{cases}
\end{equation}
where the operator $*$ refers to pointwise multiplication in the sample direction slice for each $k=1,\ldots,K$ with the vector ${\Sigma}_i^{n}\in\mathbb{R}^K$ with
\begin{equation}({\Sigma}_i^{n})_k = 
[{\Sigma}^{n, (k)}]_i.\end{equation}

This allows us to define rank $1$ tensors:
\begin{align}
\bm{W}_0 &= W_{0,1} \otimes \cdots \otimes W_{0,d}, \\
\dot{\bm{W}}_i &= W_{i,1}\otimes \cdots \otimes W_{i,d}.
\end{align}

Now, with the core at position $j$, we define the left and right contractions for each $i = 0, \ldots, d$ as
\begin{align}
\bm{L}_{i,j} &= 
\bm{U}_1 \circ_{2,2} W_{i1}
\cdots 
\bm{U}_{j-1} \circ_{2,2} W_{i,j-1} \in\mathbb{R}^{1,K,r_{i-1}},\\
\bm{R}_{i,j} &= 
\bm{U}_{j+1}\circ_{2,2} W_{i,j+1}
\cdots 
\bm{U}_{d}\circ_{2,2} W_{i,d}\in\mathbb{R}^{r_i,K,1},
\end{align}
with the conventions $L_{i,1} = \mathbf{1}$ and $R_{i,d} = \mathbf{1}$.

The BSDE operator applied to the tensor train for $(X_{n,j}^{u,(k)})\in\mathbb{R}^{K,d}$ becomes
\begin{subequations}
\begin{align}
(\mathrm{Id} + {\Sigma}^{n, (k)} \cdot \nabla ) \widehat{V}^n((X_{n,j}^{u,(k))}) &= \bm{L}_{0,j} (
\mathfrak{C}\circ_{2,2} W_{0j} )\bm{R}_{0,j} + \sum_{i=1}^d \bm{L}_{i,j}
(\mathfrak{C}\circ_{2,2} W_{ij} )
\bm{R}_{i,j} \\
&= \sum_{i=0}^d \bm{L}_{i,j}
(\mathfrak{C}\circ_{2,2} W_{ij} )
\bm{R}_{i,j} .
\end{align}
\end{subequations}

Analogously to the derivation in 
\eqref{appendix:eq:final_l2_loss_simplification} for the vector $y= (y_{n+1}^{(k)})_k\in\mathbb{R}^{K,1}$ we obtain 
\begin{align}
    \widehat{\mathcal{L}}_n^K( v_{\bm{C}} ) &=  \frac{1}{K} \left\|\sum_{i=0}^d \bm{L}_{i,j}
(\mathfrak{C}\circ_{2,2} W_{ij} )
\bm{R}_{i,j}  - y \right\|_2^2 + \tau \|\mathfrak{C}\|_{\mathrm{F}}^2.
\end{align}
Recall at this point that $\bm{L}_{i,j}, W_{i,j}$ and $\bm{R}_{i,j}$ depend on $u$ and $n$.
We can define the matrix 
\begin{equation}
\label{appendix:bsde_micro_step_matrix}
     A_j := \sum\limits_{i=0}^d R_{i,j}^\top \otimes W_{i,j} \otimes L_{i,j} \in \mathbb{R}^{K, r_{i-1}mr_i},
\end{equation}
where $R_{i,j}$ and $L_{i,j}$ again denote the matrices obtained from squeezing the last mode dimension of $\bm{R}_{i,j}$ and the first mode dimension of $\bm{L}_{i,j}$. 

Then, in the spirit of \eqref{appendix:eq:L2_micro_system_matrix}, it holds that 
\begin{equation}
    \widehat{\mathcal{L}}_n^K( v_{\bm{C}} ) =  \frac{1}{K} 
    \| A_j \operatorname{vec}(\mathfrak{C})-y\|_2^2 + \tau \|\operatorname{vec}(\mathfrak{C})\|_2^2.
\end{equation}

\begin{algorithm}
\caption{ALS for BSDE with general basis functions}
\label{alg:ALS_BSDE_rigorous}
\begin{algorithmic}[1]
\STATE {\bfseries Input:} Initial TT cores $\{\bm{U}_1^{(0)}, \ldots, \bm{U}_d^{(0)}\}$, regularization $\tau > 0$, maximal iterations $I$, tolerance $\varepsilon$, samples $(\widehat{X}^{u,(k)}_n, \Sigma_n^{(k)}, y_{n+1}^{(k)})_k$
\STATE {\bfseries Output:} Optimized TT representation for $\widehat{V}_n$
\STATE Precompute feature evaluation matrices $W_j, \dot{W}_j$ from \eqref{appendix:eq:BSDE_feature_matrices} and define $W_{ij}$ from \eqref{appendix:eq:weighted_feature_BSDE}.
\STATE Define the vector $y = (y_{n+1}^{(k)})_k$.
\STATE Set $\ell = 0$.
\REPEAT
    \FOR{$j = 1, 2, \ldots, d, d-1, \ldots, 2$} 
        \STATE Set TT core to position $j$: $\bm{C} = \bm{U}_1^{(\ell)} \cdots \bm{U}_{j-1}^{(\ell)} \mathfrak{C} \bm{U}_{j+1}^{(\ell)} \cdots \bm{U}_d^{(\ell)}$.
            \STATE Compute $\bm{L}_{i,j}$, $\bm{R}_{i,j}$ using a stack principle for each $i=0,\ldots,d$.
        \STATE Build micro step system matrix $A_j$ from \eqref{appendix:bsde_micro_step_matrix}.
        \STATE Solve: $c^* = \arg\min_c \|A_j c - y\|_2^2 + \tau \|c\|_F^2$.
        \STATE Reshape $c^*$ to update core $\bm{U}_{j}^{(\ell)}=\mathfrak{C}$.
    \ENDFOR
    \STATE Set $\ell = \ell+1$, all TT components have been updated.
    \STATE Compute loss: $\mathcal{L}^{(\ell)} = \frac{1}{K} \|A_j c - b\|_2^2$.
     
\UNTIL{$|\mathcal{L}^{(\ell)} - \mathcal{L}^{(\ell-1)}| < \varepsilon$ or $\ell \geq I$}.
\end{algorithmic}
\end{algorithm}

The computational efficiency is maintained through the stack principle: during the forward sweep ($j = 1, \ldots, d$), the left contractions $L_{i,j}^{(k)}$ can be updated incrementally, and during the backward sweep ($j = d, \ldots, 2$), the right contractions $R_{i,j}^{(k)}$ can be updated incrementally.

\subsection{Adaptivity}
\label{Appendix:adaptivity}
Our algorithm features adaptivity with respect to the regularization parameter $\tau_n$ (see \Cref{appendix:adaptive_regularization}), the underlying xTT ranks (see \Cref{appendix:adaptive_TT_r_n}), and the basis degrees (see \Cref{Appendix:degree_adaptivity}) used in the functional tensor train approximation.

\subsubsection{Choice of regularization magnitude $\tau_n$}
\label{appendix:adaptive_regularization}
Recall that our regularized loss in \eqref{eq:regularized_loss} is given as
\begin{equation}
   \widehat{\mathcal{L}}_n^K(\widehat{V}_{\bm{C}}) + \tau_n \|\widehat{V}_{\bm{C}}\|_\mathcal{H}^2
   = 
    \frac{1}{K} \sum\limits_{k=1}^K
\left(\bm{C}[\Phi(\widehat{X}^{u,(k)}_n)]  + \Sigma_n^{(k)}\cdot \nabla \bm{C}[ \Phi(\widehat{X}^{u,(k)}_n)] - y_{n+1}^{(k)}\right)^2
    + \tau_n \|\bm{C}\|_{\mathrm{F}}^2.
\end{equation}
If the tensor train $\bm{C}$ is represented as 
\begin{equation}
\bm{C} = \bm{C}_1\bm{C}_2 \cdots \bm{C}_d = \bm{U}_1 \cdots \bm{U}_{i-1} \mathfrak{C}\, \bm{U}_{i+1}\cdots \bm{U}_d
\end{equation}
with orthogonal order three tensors $U_{j}\in\mathbb{R}^{r_{j-1},m_j,r_j}$ for $j=1,\ldots, i-1,i+1,\ldots d$ and a \textit{core} $\mathfrak{C}\in\mathbb{R}^{r_{i-1},m_i,r_i}$ at position $i$, then, we have that 
\begin{equation}
       \widehat{\mathcal{L}}_n^K(\widehat{V}_{\bm{C}}) + \tau_n \|\widehat{V}_{\bm{C}}\|_\mathcal{H}^2  = \|A_{i,n}^u \operatorname{vec}(\mathfrak{C}) - y_{n+1}\|_2^2 + \tau_n \|\mathfrak{C}\|_F^2, 
\end{equation}
for a matrix $A_{i,n}^u\in \mathbb{R}^{K, r_{i-1}m_ir_i}$. In particular, the original empirical BSDE loss can be efficiently evaluated at each micro step of the proposed ALS algorithm as
\begin{equation}
\widehat{\mathcal{L}}_n^K(\widehat{V}_{\bm{C}}) 
= \big\|A_{i,n}^u \operatorname{vec}(\mathfrak{C}) - y_{n+1}\big\|_2^2.
\end{equation}
Moreover, the regularization in the $\mathcal{H}$-norm reduces, via the Parseval identity and the tensor train core representation, to a single Frobenius norm evaluation of the core tensor $\mathfrak{C}$.

In our experiments, we observed that a naive choice of $\tau_n$, e.g.\ $\tau_n \equiv 10^{-8}$ independently of the dimension $d$, does not yield a convergent scheme. This can be understood through the behavior of the Hilbert norm $\|\widehat{V}_{\bm{C}}\|_{\mathcal{H}}^2$. Our objective is that, at each time step $t=t_n$, the minimizer $\widehat{V}_{\bm{C}}$ approximates the true value function $V(\cdot,t_n)$. However, the quantity
\begin{equation}
t \mapsto \|V(\cdot,t)\|_{\mathcal{H}}^2
\end{equation}
may vary significantly over time and dimension $d$, and in our experiments it appears to grow exponentially with $d$. Consequently, the parameter $\tau_n$ must compensate for the scaling of the $\|\cdot\|_{\mathcal{H}}^2$ term.

In our implementation, we fix a relative weighting parameter $\gamma = 0.1$ and determine $\tau_n$ adaptively as follows. At the final time step $n=N$, we initialize $\tau_N$. Now assume that at time $t=t_n$, during the $k$-th sweep and micro step $i$, we obtain an updated tensor train representation $\bm{C}$, in particular updating the core at position $i$ using the current value of $\tau_n$. This allows us to compute both $\|A_{i,n}^u \operatorname{vec}(\mathfrak{C}) - y_{n+1}\|_2^2$ and $\|\mathfrak{C}\|_F^2$. We then update
\begin{equation}
\tau_n 
= \gamma \frac{\|A_{i,n}^u \operatorname{vec}(\mathfrak{C}) - y_{n+1}\|_2^2}{\|\mathfrak{C}\|_F^2},
\end{equation}
and proceed to the next micro step. This choice ensures that
\begin{equation}
\tau_n \|\mathfrak{C}\|_F^2 
= \gamma \|A_{i,n}^u \operatorname{vec}(\mathfrak{C}) - y_{n+1}\|_2^2,
\end{equation}
i.e.\ the regularization term is of order $\gamma$ relative to the data fidelity term. In all experiments, we use $\gamma = 0.1$.

Finally, when transitioning from time step $t=t_{n+1}$ to $t=t_n$, the previously obtained value $\tau_{n+1}$ serves as a natural initialization for $\tau_n$, since empirically $\|V(\cdot,t_n)\|_{\mathcal{H}}^2 \approx \|V(\cdot,t_{n+1})\|_{\mathcal{H}}^2$.

It is worth noting that choosing $\gamma = 0$, and hence $\tau_n = 0$, does not yield a robust algorithm, since oscillations in the derivatives still need to be controlled. In our experiments, we found that non-negligible values of $\tau_n$ are necessary to stabilize the learning dynamics; see \Cref{fig:appendix:adaptive_tau}.

\begin{figure}
\begin{center}
    \includegraphics[width = 0.99\linewidth]{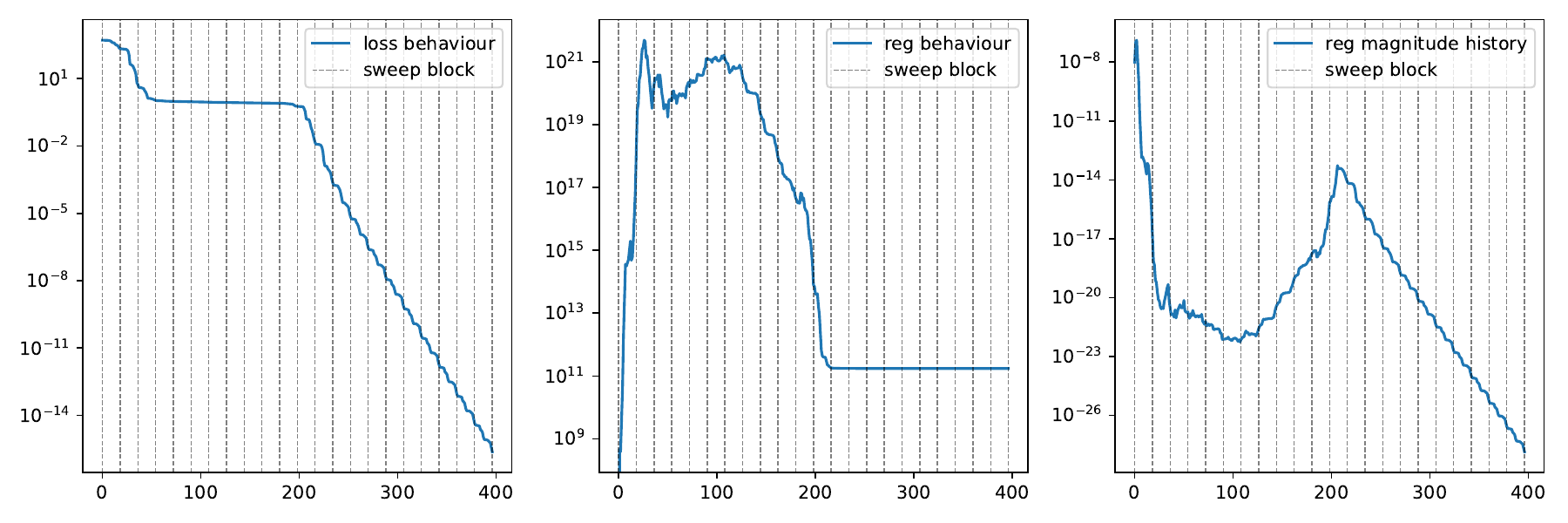}
    \caption{Example of adaptively determining the regularization parameter $\tau_n$ for the Multiwell problem in dimension $d=10$. Left: the empirical loss value $\widehat{\mathcal{L}}_n^K(\widehat{V}_{\bm{C}})$. Middle: the value $\|\widehat{V}_{\bm{C}}\|_{\mathcal{H}}^2$. Right: the adaptive evolution of $\tau_n$.}
    \label{fig:appendix:adaptive_tau}
\end{center}
\end{figure}

\begin{remark}
    An alternative approach would be to construct the tensor basis orthonormally with respect to the (empirical) weighted Hilbert spaces, which corresponds to orthonormalization using empirical Gram matrices. This is expected to provide better control over the magnitude of $\|\cdot\|_{\mathcal{H}}^2$, thereby reducing the sensitivity of the algorithm to the choice of $\tau_n$. We leave this direction for future work.
\end{remark}

\subsubsection{Adaptivity for rank design}
\label{appendix:adaptive_TT_r_n}
An important feature of our algorithm is the adaptive selection of TT ranks, i.e.\ determining suitable ranks required to achieve a prescribed approximation quality of the value function. Based on the structure of the HJB PDE~\eqref{eq: HJB PDE}, we expect that the functional tensor train (FTT) rank of the value function $V(\cdot,t)$ varies over time. Intuitively, the nonlinear term $\|\sigma^\top \nabla V\|^2$ may induce a temporary increase in rank, while the diffusion term acts over time to reduce the FTT rank $\overline{\bm{r}}\in\mathbb{N}^{d-1}$ toward the TT rank of the log-density of the invariant distribution of the reversing process. For instance, if the reversing process is an Ornstein-Uhlenbeck (OU) process, then the invariant distribution is standard normal and the limiting FTT rank becomes $\bm{2}=(2,\ldots,2)\in\mathbb{N}^{d-1}$, see \citet{TTD_gruhlke05}.

Recall the singular value decomposition of a tensor train component $\bm{C}_i$ after reshaping it into a matrix $C_i$, as described in \Cref{appendix:tt_core}, namely
\begin{equation}
C_i = U_i \Sigma_i V_i^{\top}.
\end{equation}

The decay pattern of the singular values in $\Sigma_i$, denoted by $\sigma_1^i, \ldots, \sigma_{r_i}^i$, provides valuable guidance for adapting the rank $r_i$. In particular, if all singular values are of comparable and non-negligible magnitude, this indicates that a larger rank $r_i$ may be necessary. Conversely, if the tail singular values $\sigma_{k+1}^i, \ldots, \sigma_{r_i}^i$ are close to zero, the effective rank can be safely reduced to $r_i = k$. It is furthermore advisable to assess the singular values relative to the scale of the loss function, as this helps to avoid overfitting to sampling noise.

We now apply a high-order singular value decomposition (HOSVD, see \citet{grasedyck2010hierarchical,oseledets2011tensor}) to $\bm{C}$. To this end, fix an index $1 \leq k \leq d$, referred to as the \textit{core position}. For each $i = 1, \ldots, k-1$, we proceed iteratively as follows. We reshape the $i$-th TT component $\bm{C}_i \in \mathbb{R}^{r_{i-1}, m_i, r_i}$ into a matrix
\begin{equation}
C_i \in \mathbb{R}^{r_{i-1},\, m_i r_i},
\end{equation}
compute its singular value decomposition
\begin{equation}
C_i = U_i \Sigma_i V_i^{\top},
\end{equation}
and absorb the non-orthonormal factor $\Sigma_i V_i^{\top}$ into the next TT component by redefining
\begin{equation}
\bm{C}_{i+1} \leftarrow \Sigma_i V_i^{\top}\,\bm{C}_{i+1}.
\end{equation}
Analogously, for $i = d, d-1, \ldots, k+1$, we iteratively compute the singular value decompositions
\begin{equation}
C_i = U_i \Sigma_i V_i^{\top},
\end{equation}
and absorb the factor $U_i \Sigma_i$ into the preceding TT component by contracting it from the right with $C_{i-1}$. Finally, we reshape each matrix $U_i \in \mathbb{R}^{r_{i-1} , m_i r_i}$ into an order-three tensor
\begin{equation}
\bm{U}_i \in \mathbb{R}^{r_{i-1}, m_i , r_i},
\quad \text{for } i = 1, \ldots, k-1,
\end{equation}
and each matrix $V_i^{\top} \in \mathbb{R}^{r_{i-1}, m_i r_i}$ into an order-three tensor
\begin{equation}
\bm{U}_i \in \mathbb{R}^{r_{i-1} , m_i , r_i},
\quad \text{for } i = d, d-1, \ldots, k+1.
\end{equation}
Then, the TT representation satisfies
\begin{equation}
    \bm{C}
    \;=\;
    \bm{U}_1 \cdots \bm{U}_{k-1}\,
    \bm{C}_k\,
    \bm{U}_{k+1} \cdots \bm{U}_d.
\end{equation}

We propose a simple adaptive rank design in \Cref{alg:adaptive_rank}.

\begin{algorithm}
\caption{Adaptive rank adjustment for tensor train learning}
\label{alg:adaptive_rank}
\begin{algorithmic}[1]
\STATE {\bfseries Input:} Initial TT rank $\bm{r}^{(0)} = (r_1^{(0)}, \ldots, r_{d-1}^{(0)})$, threshold $\delta > 0$, maximum iterations $N$
\STATE {\bfseries Output:} Optimized TT representation with adapted ranks

\STATE Initialize $\rho_i \gets \emptyset$ for $i=1,\ldots,d-1$ \hfill\COMMENT{Set of previously seen ranks}
\STATE $\mathrm{fixed} \gets \emptyset$ \hfill\COMMENT{Set of fixed rank indices}

\REPEAT
    \STATE Optimize loss with current fixed TT rank $\bm{r}$ \hfill\COMMENT{Using ALS or other method}
    
    \FOR{$i=1$ to $d-1$}
        \IF{$i \notin \mathrm{fixed}$}
            \STATE Perform HOSVD and get singular values $\sigma_1^i \geq \cdots \geq \sigma_{r_i}^i$
            \STATE $k_i \gets \min\{\ell \mid \sigma_{\ell}^i < \delta\sigma_1^i\}$
            
            \IF{$k_i = r_i$}
                \STATE $r_i^{\mathrm{new}} \gets r_i + 1$ \hfill\COMMENT{Increase rank}
                \STATE Add random orthonormal direction to $U_i$ and $U_{i+1}$
            \ELSIF{$k_i < r_i$}
                \STATE $r_i^{\mathrm{new}} \gets k_i$ \hfill\COMMENT{Decrease rank}
            \ENDIF
            
            \IF{$r_i^{\mathrm{new}} \in \rho_i$}
                \STATE $\mathrm{fixed} \gets \mathrm{fixed} \cup \{i\}$ \hfill\COMMENT{Fix this rank}
            \ELSE
                \STATE $\rho_i \gets \rho_i \cup \{r_i^{\mathrm{new}}\}$
                \STATE $r_i \gets r_i^{\mathrm{new}}$
            \ENDIF
        \ENDIF
    \ENDFOR
    
\UNTIL{$\mathrm{fixed} = \{1,\ldots,d-1\}$ or maximum iterations reached}
\end{algorithmic}
\end{algorithm}

Additionally, rank adjustments can be guided by target loss values, taking into account the level of sample noise. In our experiments, we did not employ this approach. Since the algorithm requires multiple optimization instances per time step, we introduce a rank update frequency to regulate the application of \Cref{alg:adaptive_rank}.

In our experiments, we typically set the rank update frequency to $10$ or $20$ and choose the relative singular value magnitude parameter as $\delta = 10^{-4}$.

\subsubsection{Adaptivity for basis degrees}
\label{Appendix:degree_adaptivity}
The third adaptivity component of our algorithm concerns the selection of basis degrees. Based on the structure of the HJB PDE~\eqref{eq: HJB PDE}, the initial condition $V(\cdot,0)$ and the terminal negative log-density $V(\cdot,T) \approx -\log \mathcal{N}(0,I)$ generally require different spatial discretization spaces. Moreover, for each $t \in (0,T)$, the function $V(\cdot,t)$ is expected to remain sufficiently smooth, suggesting that only moderately sized, time-adapted spatial discretizations are required.

This adaptive choice of spatial discretization provides two main advantages:
\begin{itemize}
    \item \textit{Increased learning speed}: Reducing the number of degrees of freedom in each xTT representation decreases the computational cost of the minimization step, e.g.\ in solving~\eqref{eq:regularized_loss}.
    \item \textit{Improved stability}: A reduced basis dimension lowers the sample complexity and improves the conditioning of the local least-squares problems. Moreover, numerical noise in unnecessary basis coefficients is suppressed, preventing spurious error propagation.
\end{itemize}

The idea is based on a simple heuristic. Fix a sparsity-gap parameter $s \in \mathbb{N}$ (typically $s = 1,2$) and a loss tolerance $\mathrm{tol}_{\mathrm{loss}}$. At iteration step $k$, we first optimize the loss for a given xTT representation with rank $\bm{r}^{(0)}$ and basis degree vector $\bm{m}^{(0)} = (m_1^{(0)}, \ldots, m_d^{(0)})$.

\begin{itemize}
    \item \textit{Basis degree reduction:} Assume that the optimized loss $\mathcal{L}_{\mathrm{min}}^{(0)}$, obtained by solving~\eqref{eq:regularized_loss}, is below the tolerance $\mathrm{tol}_{\mathrm{loss}}$. Then, we reduce $\bm{m}^{(0)}$ as much as possible to obtain a new vector $\bm{m}^{(1)} \leq \bm{m}^{(0)}$, such that the corresponding minimized loss $\mathcal{L}_{\mathrm{min}}^{(1)}$ (computed using basis degrees $\bm{m}^{(1)}$) remains below $\mathrm{tol}_{\mathrm{loss}}$. We then set $\bm{m}^{(1)}$ as the new basis degree and proceed to the next iteration $k+1$.

    \item \textit{Basis degree increase:} Fix a maximum number of refinement steps $L \in \mathbb{N}$. If the optimized loss $\mathcal{L}_{\mathrm{min}}^{(0)}$, obtained from~\eqref{eq:regularized_loss}, exceeds the tolerance $\mathrm{tol}_{\mathrm{loss}}$, we increase $\bm{m}^{(0)}$ component-wise, yielding a sequence $\bm{m}^{(\ell+1)} \geq \bm{m}^{(\ell)}$ for $\ell = 0,1,\ldots,L$. This process is continued until either the minimized loss $\mathcal{L}_{\mathrm{min}}^{(\ell)}$ falls below the tolerance or no sufficient improvement in the loss reduction is observed over the last $s$ steps. In the latter case, we trigger rank adaptivity as discussed in \Cref{rem:loss_improvement_sparsity_gap}.
\end{itemize}

\begin{remark}
\label{rem:loss_improvement_sparsity_gap}
The sparsity gap becomes relevant when the chosen basis contains components that do not contribute meaningfully to the representation of the target function. For example, consider the function $f(x) = L_2(x) + L_4(x)$, where $L_i$ denotes the Legendre polynomials orthonormal in $L^2(-1,1)$. Suppose we approximate $f$ using the basis $\{L_0, L_1, L_2\}$. In this case, enlarging the basis by adding $L_3$ -- and thereby increasing the polynomial degree -- does not reduce the projection error, since $L_3$ has no overlap with the nonzero components of $f$. More generally, when increasing the basis degree no longer yields a substantial reduction in the loss, it becomes preferable to increase the xTT rank, as described in~\Cref{alg:adaptive_rank}.
\end{remark}

To illustrate the principle underlying our degree-adaptivity strategy, we consider a one-dimensional example, i.e.\ the multi-modal setup from \Cref{sec: multiwell experiments} with $d = 1$. In this setting, there are no rank interactions, which allows us to isolate the effect of basis-degree adaptivity. \Cref{fig:poly_adapt} illustrates this behavior in practice: for $T = 2$, the algorithm selects a polynomial degree of $6$ in the initial optimization step of the BSDE loss, after which the degree is gradually reduced as $t \to 0$.

\begin{figure}
    \includegraphics[width = \linewidth]{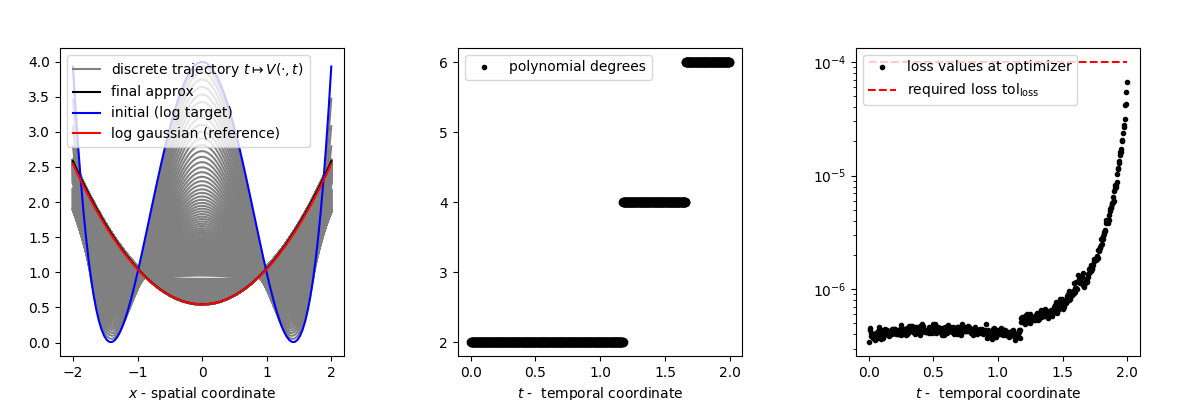}
    \caption{Illustration of basis-degree adaptivity for a one-dimensional example.  
Left: Trajectory transformation from the log-target distribution (log-multimodal) to the log-prior distribution (log-Gaussian).  
Middle: The selected polynomial degree decreases when moving from right ($t = T = 2$) to left ($t = 0$). In the limit $t \to 0$, the solution approaches a log-Gaussian, which requires only a degree of $2$. Legendre polynomials are used as basis functions in this experiment.  
Right: Loss evolution and corresponding loss tolerance.
    }
    \label{fig:poly_adapt}
\end{figure}

\subsection{Rank analysis}
\label{appendix:ftt_rank_analysis}

\subsubsection{Explicit functional tensor train ranks}
\label{appendix:explicit_ftt_ranks}

This section is devoted to the analysis of explicit FTT ranks for solutions of the HJB equation. It is well known that the Ornstein-Uhlenbeck process with a standard Gaussian invariant distribution, when initialized with a general Gaussian distribution, remains Gaussian throughout its evolution. Consequently, the associated HJB equation (see~\Cref{lem: HJB PDE}) admits, for every \(t\), a solution \(V(\cdot, t)\) that is a log-Gaussian potential.

For this reason, let us consider general Gaussian potentials of the form
\begin{equation}
\bm{x} \mapsto \bm{x}^\top M \bm{x},
\end{equation}
where \(M\) is a symmetric and positive definite matrix.

It turns out that we can explicitly state the FTT ranks of these potentials, summarized in \Cref{lem:rank_gauss_subdiag} and \Cref{lem:rank_gauss}. This discussion is inspired by~\citet{TTD_gruhlke05}.

At the level of the potential, low-rank subdiagonal blocks have a very clear implication for the FTT ranks. We define the \(i\)-th subdiagonal block of \(M\), for \(i \in \{1, \ldots, d-1\}\), as the matrix \(M_{1:i,i+1:d} \in \mathbb{R}^{i , (d-i)}\) in the following block decomposition:
\begin{equation}
    M = \begin{bmatrix}
        M_{1:i,1:i} & M_{i+1:d,1:i} \\
        M_{1:i,i+1:d} & M_{i+1:d,i+1:d}
    \end{bmatrix}.
\end{equation}
As the following theorem shows, the ranks of the subdiagonal blocks of \(M\) fully determine the FTT rank of \(\bm{x}^\top M \bm{x}\).

\begin{theorem}[FTT rank bounds for Gaussians with low-rank subdiagonal blocks]\label{lem:rank_gauss_subdiag} 
    Let $d\in\mathbb{N}$ and $\Phi\colon\mathbb{R}^d\rightarrow\mathbb{R}$ admit the form $\Phi(\bm{x}) = \bm{x}^{\top}M \bm{x}$ for a symmetric invertible matrix $M \in\mathbb{R}^{d,d}$. Furthermore, assume $M$ has sub-diagonal blocks $M_{1:i,i+1:d}$, $i=1,\ldots,d-1$ with ranks given by $\ell_i\in\mathbb{N}$. Then $f$ has finite FTT rank $\bm{r}\in\mathbb{N}^{d-1}$. For $d\geq 3$,
    \begin{equation}
    \bm{r} = 2 +
                 \left(\ell_1, \ell_2 , \ldots, \ell_{d-1}\right)
    \end{equation}
    and $\bm{r}=2\in\mathbb{N}$ for $d=2$. In particular, in the case of an isotropic Gaussian, we have $\bm{r}\equiv 2$.
\end{theorem}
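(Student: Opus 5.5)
The plan is to use the standard characterization of FTT ranks via unfoldings. For each $i$, the $i$-th FTT rank of $\Phi$ equals the rank of the bivariate function obtained by grouping the variables as $\bm{x}=(\bm{y},\bm{z})$ with $\bm{y}=x_{1:i}$ and $\bm{z}=x_{i+1:d}$. This is the minimal number of terms $r$ in a separation $\Phi(\bm{y},\bm{z})=\sum_{k=1}^r g_k(\bm{y})h_k(\bm{z})$, i.e.\ the dimension of $\operatorname{span}\{\Phi(\cdot,\bm{z})\}$. This is the function-space analogue of the TT-SVD statement that TT ranks equal unfolding ranks. It follows from the $d=2$ SVD interpretation of~\eqref{eq:ftt} together with the usual sequential construction of the cores, which realizes all unfolding ranks simultaneously. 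I will take this characterization as the starting point.

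\textbf{Upper bound.} Write $A=M_{1:i,1:i}$, $C=M_{i+1:d,i+1:d}$ and $B=M_{1:i,i+1:d}$, so that by symmetry
\begin{equation*}
\Phi(\bm{y},\bm{z}) = \bm{y}^\top A\bm{y}\cdot 1 + 1\cdot \bm{z}^\top C\bm{z} + 2\,\bm{y}^\top B\bm{z}.
\end{equation*}
Factor $B=PQ^\top$ with $P\in\mathbb{R}^{i,\ell_i}$ and $Q\in\mathbb{R}^{d-i,\ell_i}$. Then $\bm{y}^\top B\bm{z}=\sum_{k=1}^{\ell_i}(P^\top\bm{y})_k(Q^\top\bm{z})_k$, which gives a separation with $2+\ell_i$ terms. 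Hence $r_i\le 2+\ell_i$.

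\textbf{Lower bound.} This is the main work. I must show that the functions $\bm{z}\mapsto 1$, $\bm{z}\mapsto\bm{z}^\top C\bm{z}$ and $\bm{z}\mapsto(Q^\top\bm{z})_k$ for $k=1,\ldots,\ell_i$ are linearly independent, and likewise the corresponding $\bm{y}$-functions $\bm{y}^\top A\bm{y}$, $1$ and $(P^\top\bm{y})_k$. If both families are independent, the separation is minimal: the unfolding rank equals the rank of a $(2+\ell_i)\times(2+\ell_i)$ coefficient matrix that is identity-like.

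Degree separation does most of this. The constant, the linear forms and the quadratic form live in different homogeneous degrees. The linear forms $(Q^\top\bm{z})_k$ are independent because $Q$ has full column rank $\ell_i$. So independence reduces to $C\neq 0$ and $A\neq 0$.

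This nondegeneracy is the step I expect to be the obstacle. Invertibility of $M$ alone does not exclude $C=0$; for example, when $d-i$ is small and $M$ is an anti-diagonal-type matrix, the corresponding quadratic term vanishes and the rank drops. I would therefore first try to handle it under the paper's intended setting. In the Gaussian case, $M$ is symmetric positive definite, so its principal submatrices $A$ and $C$ are positive definite and hence nonzero, which gives equality $r_i=2+\ell_i$. If needed, I would add a remark that for merely invertible symmetric $M$ the formula is an upper bound, attained whenever the diagonal blocks are nonzero.

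\textbf{Special cases.} For $d=2$ and $i=1$, the functions $y^2$, $1$ and $y$ span only three dimensions, and the claimed value $2$ requires $B=0$, i.e.\ $\ell_1=0$. I would reconcile this by noting that the statement implicitly uses $\ell_i$ as the subdiagonal rank, so that $2+\ell_1$ is consistent, and treat it the same way. The isotropic case $M=cI$ has $B=0$, hence $\ell_i=0$ and $\bm{r}\equiv 2$.
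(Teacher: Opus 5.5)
Your proposal is correct, and its upper bound is the same as the paper's. The paper takes an SVD $M_{i+1:d,1:i}=U\Sigma V^\top$ and writes $\Phi$ as the row vector $(1,\;2x_{1:i}^\top V,\;x_{1:i}^\top M_{1:i,1:i}x_{1:i})$ times the column vector $(x_{i+1:d}^\top M_{i+1:d,i+1:d}x_{i+1:d},\;\Sigma U^\top x_{i+1:d},\;1)^\top$. This is your factorization $B=PQ^\top$ with $2+\ell_i$ separated terms. Like you, the paper passes from these per-unfolding separations to FTT ranks without further comment.

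You differ in the lower bound, and there your route is the one that actually works. The paper says in one sentence that a rank below $\ell_i+2$ would contradict $\operatorname{rank}M_{i+1:d,1:i}=\ell_i$. The off-diagonal block alone only forces $r_i\ge\ell_i$, e.g.\ via $\partial_{x_a}\partial_{x_b}\Phi=2M_{ab}$ for $a\le i<b$. The extra $+2$ needs your independence argument, including the condition $A\neq 0\neq C$.

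That hypothesis is genuinely missing from the statement. For $d=3$, take the symmetric invertible exchange matrix (ones on the anti-diagonal). Then $\Phi(\bm{x})=x_2^2+2x_1x_3=(1,\;x_1)\,\mathrm{diag}(x_2^2,2)\,(1,\;x_3)^\top$ has FTT rank $(2,2)$, although $\ell=(1,1)$. Your argument in fact gives, for any symmetric $M$, that $r_i$ equals $\ell_i$ plus the number of nonzero blocks among $A$ and $C$. This is $\ell_i+2$ in the positive definite (Gaussian) case you restrict to.

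For $d=2$, state the correction outright rather than ``reconciling'' it. For $M=\begin{pmatrix}2&1\\1&2\end{pmatrix}$ the families $x_1^2,x_1,1$ and $1,x_2,x_2^2$ are independent, so the rank is $3=2+\ell_1$. The paper's clause $\bm{r}=2$ therefore holds only if $m_{12}=0$. Likewise, the isotropic case requires $\ell_i=0$, so $\ell_i$ should range over $\mathbb{N}_0$.
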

\begin{proof}
First, note that there is only something to prove if the rank bounds $\ell_i$ satisfy
    \begin{equation}
    \begin{aligned}
        \ell_i \leq i + 2, \qquad \text{if}~ i \leq \lfloor \tfrac{d}{2} \rfloor, \\
        \ell_i \leq d-i + 2, \qquad \text{if}~ i > \lfloor \tfrac{d}{2} \rfloor,
        \end{aligned}
    \end{equation}
    as the TT rank bounds will be higher than the maximal ranks 
    otherwise. 
    So let $i\in\{1,\ldots,d-1\}$ and $\ell_i$ satisfying the respective condition be the rank of the subdiagonal block $M_{i+1:d,1:i}$, which is defined by
\begin{equation}
M_{i+1:d,1:i} = \begin{pmatrix}
    m_{i+1,1} & \ldots & m_{i+1, i} \\
    \vdots &  & \vdots \\
    m_{d,1} & \ldots & m_{d,i}
\end{pmatrix}.
\end{equation}
We consider a singular value decomposition of $M_{i+1:d,1:i}$ into
\begin{equation}
    M_{i+1:d,1:i} = \underbrace{U}_{\in\mathbb{R}^{\ast,\ell_i}}\underbrace{\Sigma}_{\in\mathbb{R}^{\ell_i, \ell_i}} \underbrace{V^\top}_{\in\mathbb{R}^{\ell_i, \ast}}.
\end{equation}
By construction, all terms in $\bm{x}^\top M \bm{x}$, including mixed interactions between variables indexed in $\{1,\ldots,i\}$ and those indexed in $\{i+1,\ldots,d\}$, are given by
\begin{small}
\begin{equation}
    2\begin{pmatrix}
        x_{i+1} & \ldots & x_d
    \end{pmatrix}U\Sigma V^\top\begin{pmatrix}
        x_1 \\ \vdots \\ x_i
    \end{pmatrix} = 2\left[ U^\top \begin{pmatrix}
        x_{i+1} \\ \vdots \\ x_d
    \end{pmatrix} \right]^\top \Sigma V^\top\begin{pmatrix}
        x_1 \\ \vdots \\ x_i
    \end{pmatrix}.
\end{equation}
Hence, we have
\begin{equation}
    \begin{aligned}
        f(x) 
        &= \begin{pmatrix}
    1 & 2x^\top_{1:i} & x_{1:i}^\top \cdot M_{1:i,1:i} \cdot x_{1:i}
\end{pmatrix} \begin{pmatrix}
    x_{i+1:d}^\top \cdot M_{i+1:d,i+1:d} \cdot x_{i+1:d} \\ M_{1:i,i+1:d} \cdot x_{i+1:d} \\ 1
\end{pmatrix}\\
&= \begin{pmatrix}
    1 & 2x^\top_{1:i} & x_{1:i}^\top \cdot M_{1:i,1:i} \cdot x_{1:i}
\end{pmatrix}\begin{pmatrix}
1 & 0 & 0 \\
0 & M_{1:i,i+1:d} & 0 \\
0 & 0 & 1
\end{pmatrix} \begin{pmatrix}
    x_{i+1:d}^\top \cdot M_{i+1:d,i+1:d} \cdot x_{i+1:d} \\ x_{i+1:d} \\ 1
\end{pmatrix}\\
&= \begin{pmatrix}
    1 & 2x^\top_{1:i} & x_{1:i}^\top \cdot M_{1:i,1:i} \cdot x_{1:i}
\end{pmatrix}\begin{pmatrix}
1 & 0 & 0 \\
0 & V\Sigma U^\top & 0 \\
0 & 0 & 1
\end{pmatrix} \begin{pmatrix}
    x_{i+1:d}^\top \cdot M_{i+1:d,i+1:d} \cdot x_{i+1:d} \\ x_{i+1:d} \\ 1
\end{pmatrix}\\
&= \begin{pmatrix}
    1 & 2x^\top_{1:i}\cdot V & x_{1:i}^\top \cdot M_{1:i,1:i} \cdot x_{1:i}
\end{pmatrix}\begin{pmatrix}
1 & 0 & 0 \\
0 & \Sigma  & 0 \\
0 & 0 & 1
\end{pmatrix} \begin{pmatrix}
    x_{i+1:d}^\top \cdot M_{i+1:d,i+1:d} \cdot x_{i+1:d} \\ U^\top \cdot x_{i+1:d} \\ 1
\end{pmatrix}\\
&= \underbrace{\begin{pmatrix}
    1 & 2x^\top_{1:i}\cdot V & x_{1:i}^\top \cdot M_{1:i,1:i} \cdot x_{1:i}
\end{pmatrix}}_{\in\mathbb{R}^{1, (\ell_i+2)}}\underbrace{\begin{pmatrix}
    x_{i+1:d}^\top \cdot M_{i+1:d,i+1:d} \cdot x_{i+1:d} \\ \Sigma U^\top \cdot x_{i+1:d} \\ 1
\end{pmatrix}}_{\in\mathbb{R}^{(\ell_i+2), 1}}.
    \end{aligned}
\end{equation}
\end{small}

Since we have completely separated the first $i$ variables from the last $d-i$, this proves that the $i$-th rank is bounded by $\ell_i+2$. Conversely, the rank cannot be lower than $\ell_i+2$ as that would be in violation to $M_{i+1:d,1:i}$ having rank $\ell_i$.
\end{proof}

The above result yields a worst case estimate of the ranks in the case that no structure of the precision matrix $M$ is known.

\begin{corollary}[FTT rank bound for Gaussian potentials: worst case]\label{lem:rank_gauss}
    
    Let $d\in\mathbb{N}$ and $\Phi\colon\mathbb{R}^d\rightarrow\mathbb{R}$ admit the form $\Phi(\bm{x}) = \bm{x}^{\top}M \bm{x}$ for a symmetric invertible matrix $M \in\mathbb{R}^{d,d}$. Then $\Phi$ has finite FTT rank $\bm{r}\in\mathbb{N}^{d-1}$. In particular for $d\geq 3$,
    \begin{equation}
    \bm{r} \leq \overline{\bm{r}}:= 2 + \begin{cases}
                 \left(1, 2 , \ldots, \frac{d}{2},  \ldots,2, 1\right), & d \text{ even}, \\
                \left(1, 2, \ldots, \frac{d-1}{2},\frac{d-1}{2}, \ldots, 2,1\right), & d \text{ odd},
                 \end{cases}
    \end{equation}
    and $\bm{r}=2\in\mathbb{N}$ for $d=2$.
\end{corollary}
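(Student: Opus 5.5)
The plan is to deduce the corollary directly from \Cref{lem:rank_gauss_subdiag}, applied with no structural assumption on $M$. The work is to bound the rank $\ell_i$ of the $i$-th subdiagonal block $M_{1:i,i+1:d}\in\mathbb{R}^{i,(d-i)}$ purely from its shape, and then substitute that bound into $\bm{r}=2+(\ell_1,\ldots,\ell_{d-1})$.

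The steps, in order, are as follows.

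First, since $M_{1:i,i+1:d}$ has $i$ rows and $d-i$ columns, we have $\ell_i\le\min\{i,d-i\}$.

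Second, for even $d$ this minimum is $i$ when $i\le d/2$ and $d-i$ when $i\ge d/2$. This gives the profile $(1,2,\ldots,d/2,\ldots,2,1)$ with the peak at $i=d/2$.

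Third, for odd $d$ the minimum equals $i$ for $i\le (d-1)/2$ and $d-i$ for $i\ge (d+1)/2$. Since $d-\tfrac{d+1}{2}=\tfrac{d-1}{2}$, the middle value $(d-1)/2$ appears twice. This gives $(1,2,\ldots,\tfrac{d-1}{2},\tfrac{d-1}{2},\ldots,2,1)$.

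Adding $2$ componentwise yields $\overline{\bm{r}}$. For $d=2$ the theorem itself already gives $\bm{r}=2$, so this case needs no further argument. Finiteness of the FTT rank is immediate, because each separation in the proof of \Cref{lem:rank_gauss_subdiag} is an explicit finite factorization.

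The one point needing care is the degenerate case $\ell_i=0$, where a subdiagonal block vanishes. \Cref{lem:rank_gauss_subdiag} assumes $\ell_i\in\mathbb{N}$, but the separation in its proof still goes through with the middle block simply absent, giving rank at most $2\le \overline{r}_i$. I would also note that the corollary only claims an upper bound $\bm{r}\le\overline{\bm{r}}$. So I only need the upper-bound half of the theorem, namely the explicit factorization through $(1,\,2x_{1:i}^\top V,\,x_{1:i}^\top M_{1:i,1:i}x_{1:i})$, and not the matching lower bound. That half holds for every $\ell_i\ge 0$.

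Beyond this bookkeeping there is no real obstacle. The corollary is the shape-constrained specialization of the theorem.
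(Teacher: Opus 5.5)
For $d\ge 3$ your argument is the paper's. Its proof of the corollary is the single line ``direct consequence of the sub-diagonal rank bounds'', i.e.\ $\ell_i\le\min\{i,d-i\}$ inserted into $r_i\le \ell_i+2$. Your even/odd bookkeeping, the remark on $\ell_i=0$, and the restriction to the upper-bound half of \Cref{lem:rank_gauss_subdiag} are all correct. That last restriction is in fact prudent: the theorem's ``conversely'' part fails when, e.g., $M_{1:i,1:i}=0$, and invertibility of $M$ does not exclude this.

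The genuine gap is the case $d=2$, which you settle by citing the theorem's clause $\bm{r}=2$. That clause is false, and your own mechanism shows it. For $d=2$ the block $M_{1:1,2:2}=(m_{12})$ has $\ell_1\le 1$, and the factorization in the theorem's proof gives $r_1\le \ell_1+2=3$. This bound is attained. Take $M=\begin{pmatrix}2&1\\1&2\end{pmatrix}$, which is symmetric positive definite, so $\Phi(\bm{x})=2x_1^2+2x_1x_2+2x_2^2$. The slices $\Phi(s,\cdot)$ have coefficient vectors $(2,2s,2s^2)$ with respect to $\{x_2^2,x_2,1\}$. Three distinct values of $s$ therefore give a nonsingular Vandermonde-type matrix, so the slices span a three-dimensional space and no representation $g_1(x_1)h_1(x_2)+g_2(x_1)h_2(x_2)$ exists.

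More generally, for $d=2$ the separation rank equals the number of nonzero entries among $m_{11},m_{12},m_{22}$. For positive definite $M$ one thus has $r_1=2$ if and only if $M$ is diagonal. So this part of the statement cannot be proved as written. What does hold is $r_1\le 3$, i.e.\ the even-$d$ formula $2+(1)$ extended to $d=2$. The paper's one-line proof inherits the same defect, because the theorem's $d=2$ clause contradicts the theorem's own factorization.
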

\begin{proof}
    Direct consequence of the sub-diagonal rank bounds $\ell_i$ of $M$ by application of \Cref{lem:rank_gauss_subdiag}.
\end{proof}

\begin{figure}[h!]
    \centering
    \includegraphics[width=\textwidth]{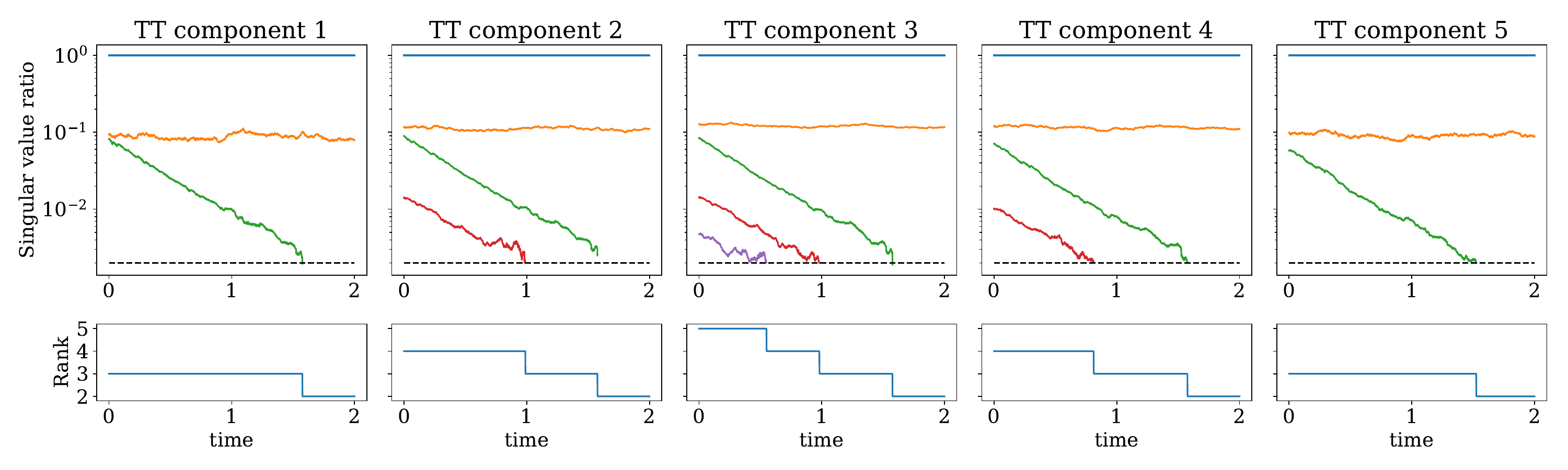}
    \caption{We display the singular values of all tensor train components for an example in dimension $d=6$ over time and reduce the rank whenever a corresponding value is below a prespecified threshold.}
    \label{fig: singular values and ranks all cores}
\end{figure}

\subsubsection{Low-rank and smoothness}
\label{appendix:lowrank_smoothness}
This section is devoted to building intuition for the relationship between rank behavior and smoothness (or regularity). To this end, we distinguish between two cases for the FTT representation of a function \(f\):
\begin{itemize}
    \item[(1)] \(f\) admits an \(\varepsilon\)-approximation by an FTT representation with finite rank \(\bm{r}(\varepsilon)\),
    \item[(2)] \(f\) admits an exact FTT representation with finite rank \(\bm{r}\).
\end{itemize}
In addition to the discussion in Appendix~\ref{appendix:explicit_ftt_ranks}, we refer, for the second case, to~\citet{oseledets2013constructive}, where explicit FTT ranks are derived for various classes of functions. In the present section, we focus on the first case, namely the approximation of functions by finite-rank FTT representations.

For simplicity, let \(f \in L^2(D)\) with \(D=[0,1]^d\), where \(D\) serves as a prototype compact tensor domain and \(L^2(D)\) denotes the classical Lebesgue space of square-integrable (equivalence classes of) Lebesgue-measurable functions. Let
\begin{equation}
\{p_0,p_1,\ldots\}\subset L^2(0,1)
\end{equation}
be an orthonormal polynomial basis of \(L^2(0,1)\). Its tensor-product construction then yields an orthonormal basis of \(L^2(D)\). Expanding \(f\) in this basis gives
\begin{equation}
f(\bm{x})
=
\sum_{\nu\in\mathbb{N}_0^d}
c_{\nu}
\prod_{i=1}^d p_{\nu_i}(x_i).
\end{equation}

Additional smoothness assumptions on \(f\) imply asymptotic decay properties of the coefficients \(c_\nu\). Such decay estimates can subsequently be used to derive bounds on the FTT rank
\begin{equation}
\bm{r}=(r_1,\ldots,r_{d-1}),
\end{equation}
through the TT ranks of suitably truncated coefficient tensors \(c_{\nu,\varepsilon}\).

We briefly summarize two smoothness classes that are particularly prominent in the theory of random PDEs: functions belonging to anisotropic mixed Sobolev spaces, following~\citet{griebel2023low}, and functions admitting holomorphic extensions.

\paragraph{Anisotropic mixed Sobolev smoothness.}
Assume that $f \in \bigotimes_{i=1}^d H^{s_i}(0,1)$ with $0 < s_1 \leq \cdots \leq s_d$. Then the expansion coefficients satisfy
\begin{equation}
|c_{\nu}| \lesssim \prod_{i=1}^d (1+\nu_i)^{-s_i}.
\end{equation}

Defining $\beta_1 := s_1 + s_2$ and $\beta_i := s_i$ for $i \geq 2$, there exists an FTT approximation $f_{\mathrm{FTT}}$ such that
\begin{equation}
r_i = \lceil \varepsilon^{-1/\beta_i} \rceil
\qquad \text{and} \qquad
\|f - f_{\mathrm{FTT}}\|_{L^2(D)} \lesssim \sqrt{d}\,\varepsilon.
\end{equation}

In particular, increasing smoothness leads to smaller admissible ranks. More precisely, as $s_i$ increases, the corresponding ranks $r_i$ may stabilize, in the sense that $r_i \to 1$ as $\beta_i \to \infty$.

\paragraph{Holomorphic extension.}
Assume that $f$ admits a holomorphic extension with polyradii $1 < \rho_1 \leq \cdots \leq \rho_d$. In this case, one obtains the bound
\begin{equation}
|c_{\nu}| \lesssim \prod_{i=1}^d \rho_i^{-\nu_i}.
\end{equation}
Sparse approximation then suggests truncation at $\overline{r}_i \sim \frac{\ln(1/\varepsilon)}{\ln \rho_i}$, yielding the truncated index set
\begin{equation}
\Lambda = \{\nu \in \mathbb{N}_0^d \mid \nu_i \leq \overline{r}_i\}
\end{equation}
for the sparse polynomial approximation based on the truncated coefficient tensor $[c_{\nu}]_{\nu \in \Lambda}$.

We consider the unfolding obtained by merging the first $k$ dimensions and the last $d-k$ dimensions, i.e. the $k$-th matricization of $[c_{\nu}]_{\nu \in \Lambda}$. Using the inequality $1+t \leq e^t$, the matrix rank of this unfolding yields the coarse bound
\begin{equation}
r_k \leq \prod_{i=1}^k (\overline{r}_i + 1)
\lesssim \exp\left( \ln(1/\varepsilon) \sum_{i=1}^k \frac{1}{\ln(\rho_i)} \right).
\end{equation}

Hence, the rank growth is governed by the summability properties of the sequence $(\ln(\rho_i)^{-1})_{i=1}^d$.

Assume that $\rho_i \sim e^{a i^b}$ for $a,b>0$. Then $\ln(\rho_i) \sim a i^b$, and therefore
\begin{equation}
\sum_{i=1}^k \frac{1}{\ln(\rho_i)} \sim \frac{1}{a} \sum_{i=1}^k i^{-b}.
\end{equation}
This yields, for $C = 1/a$,
\begin{equation}
r_k \lesssim \exp\left( C \ln(1/\varepsilon) \sum_{i=1}^k i^{-b} \right).
\end{equation}

Summarizing, we obtain the following growth behaviour with respect to $k$:
\begin{equation}
r_k \lesssim
\begin{cases}
\exp(C k^{1-b}), & 0 < b < 1 \quad \text{sub-exponential growth}, \\[0.4em]
k^{C \ln(1/\varepsilon)}, & b = 1 \quad \text{polynomial growth}, \\[0.4em]
\varepsilon^{-C}, & b > 1 \quad \text{uniformly bounded},
\end{cases}
\end{equation}
where in the case $b > 1$ we absorb the finite value of $\sum_{i=1}^\infty i^{-b}$ into the constant.

As a final remark, we point out that smoothness is not necessary in order to obtain low-rank structure. In fact, product functions of the form
\begin{equation}
f(\bm{x}) = \prod_{i=1}^d f_i(x_i),
\end{equation}
with each $f_i \in L^2(0,1)$, have minimal possible FTT rank
\begin{equation}
\bm{r} = (1,\ldots,1) \in \mathbb{N}^{d-1}.
\end{equation}

\section{Numerical details}
\label{appendix:numerics}

\subsection{Limitations}
\label{app: limitations}
While the outer iterations of our algorithm encourage the discovery of new modes (see \Cref{fig: outer loop}), we note that there is no guarantee that all modes of the target distribution will be identified -- although this limitation is shared by all existing sampling algorithms. A further restriction arises from the requirement that the value function $V$ admits a sufficiently low-rank representation, which may not hold for all target distributions. Finally, although we have substantially improved the stability of a naive implementation of \Cref{alg: HJB approximation} combined with FTTs (see \Cref{rem: Stabilization of backward regressions} and our experimental results), certain target distributions may still pose additional challenges, particularly due to the expressive and numerical limitations of FTTs.

\subsection{Evaluation metrics}
\label{sec: Evaluation metrics}

In this section, we define the evaluation metrics used to measure sampling quality. Our metrics are based on path weights
        \begin{equation}
        \label{eq: path space importance weights}
    w = \frac{\mathrm d \P_{\cev{Y}}}{\mathrm d \P_{{X}^u}}({X}^{u}) \approx \frac{p_\mathrm{target}(\widehat{X}^{u}_N) \prod_{n=0}^{N-1}\cev{p}_{n|n+1}(\widehat{X}^{u}_{n} \big| \widehat{X}^{u}_{n+1})}{p_\mathrm{prior}(\widehat{X}^{u}_0) \prod_{n=0}^{N-1}\vec{p}_{n+1|n}(\widehat{X}^{u}_{n+1} \big| \widehat{X}^{u}_{n})},
        \end{equation}
where $\widehat{X}^{u}$ is the discretized SDE, as defined in \eqref{eq: Euler forward SDE} \citep{richter2024improvedsamplinglearneddiffusions,blessing2024elboslargescaleevaluationvariational}. Here, $\vec{p}_{n+1|n}$ and $\cev{p}_{n|n+1}$ are the forward and backward transition densities of the discrete forward and backward process, respectively, given by         
            \begin{align}
                \cev{p}_{n|n+1}(\widehat{X}^{u}_{n} \big| \widehat{X}^{u}_{n+1}) &= \mathcal{N}\left(\widehat{X}_n | \widehat{X}_{n+1} - f(\widehat{X}_{n+1},(n+1)\Delta t)\Delta t, \sigma^2((n+1)\Delta t)\Delta t\right) \\
                \vec{p}_{n+1|n}(\widehat{X}^{u}_{n+1} \big| \widehat{X}^{u}_{n}) &= \mathcal{N}\left(\widehat{X}_{n+1} | \widehat{X}_{n} + (f + \sigma u)(\widehat{X}_{n},n\Delta t)\Delta t, \sigma^2(n\Delta t)\Delta t\right).
            \end{align}

Since in practice we typically can only evaluate $p_\mathrm{target}$ up to the normalizing constant $\Z$, we can replace it by its unnormalized version $\rho_\mathrm{target}$, bringing us unnormalized (discretized) importance weights, which we call $\widehat{w}$. We can now define the (normalized) effective sampling size (ESS) as
 \begin{equation}
     \operatorname{ESS} := \frac{\left(\sum_{k=1}^K \widehat{w}^{(k)}\right)^2}{K \sum_{k=1}^K \left(\widehat{w}^{(k)}\right)^2},
 \end{equation}
 which ranges from $0$ to $1$, with $1$ being optimal. Here and in the sequel $K$ denotes the sample size. Further, we can define the log-variance divergence as
 \begin{equation}
     D_{\mathrm{LV}} := \Var(\log w) \approx \frac{1}{K-1} \sum_{k=1}^{K} \left(\log \widehat{w}^{(k)} - \frac{1}{K} \sum_{k=1}^{K} \log \widehat{w}^{(k)}\right)^2,
 \end{equation}
 whose optimal value is zero \cite{richter2020vargrad,nusken2021solving}. Finally, the normalizing constant $\Z$ can be computed via
 \begin{equation}
     \Z = \E\left[w \right] \approx \frac{1}{K}\sum_{k=1}^K \widehat{w}^{(k)}.
 \end{equation}

\subsection{Hyperparameter dependence}
\label{appendix:hyperparameter_dependence}

In this section, we empirically examine how selected hyperparameters influence the performance of our FTT training algorithm. We focus on the Multiwell problem introduced in \Cref{sec:experiments} and consider the setting $d = 10$, $\delta = 2$, with $m \in \{3, 10\}$, corresponding to $2^{3}$ and $2^{10}$ modes, respectively. We investigate the effect of varying the batch size, the number of basis functions, and the domain shrinking factor $p$ (see \Cref{appendix:moving_domain_and_extension}), as illustrated in \Cref{fig: multiwell hyperparameter comparison d=10 m=3} and \Cref{fig: multiwell hyperparameter comparison d=10 m=10}. Unless stated otherwise, we fix the number of basis functions to $10$, the batch size to $K = 2^{15}$, and set $p = 0.1$.

\begin{figure}[h!]
    \centering
    \includegraphics[width=\textwidth]{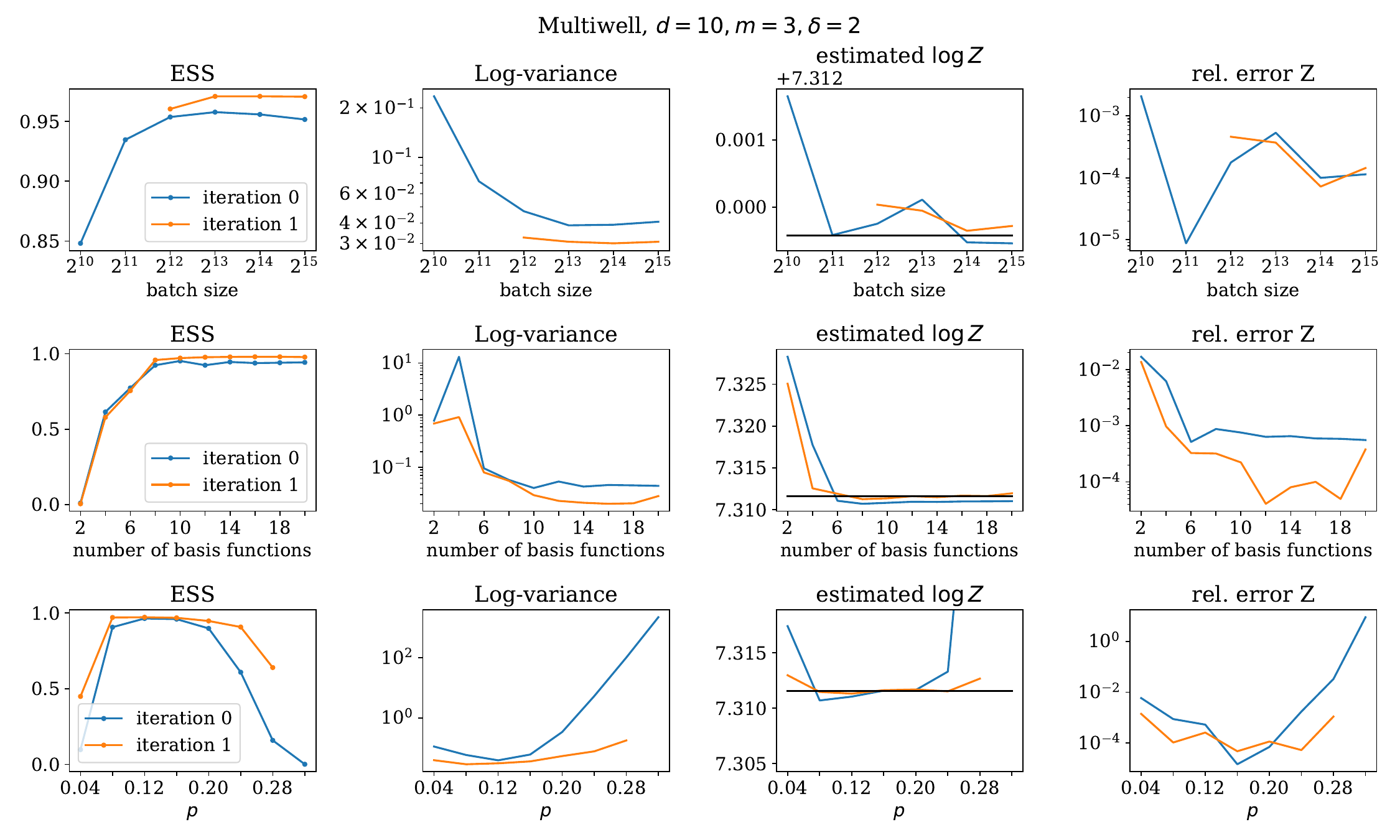}
    \caption{We assess the sensitivity of our algorithm by varying key hyperparameters and recording the effective sample size (ESS), the log-variance divergence, the log-normalizing constant, and the relative error in estimating the normalizing constant. Whenever a value is absent from a plot, the algorithm diverged for that configuration. The results indicate that performance improves with increasing batch size, but the gains tend to saturate beyond a certain threshold. A similar trend is observed for the number of basis functions. In contrast, the parameter $p$ exhibits a clear optimal range: values that are too small or too large degrade performance.}
    \label{fig: multiwell hyperparameter comparison d=10 m=3}
\end{figure}

\begin{figure}[h!]
    \centering
    \includegraphics[width=\textwidth]{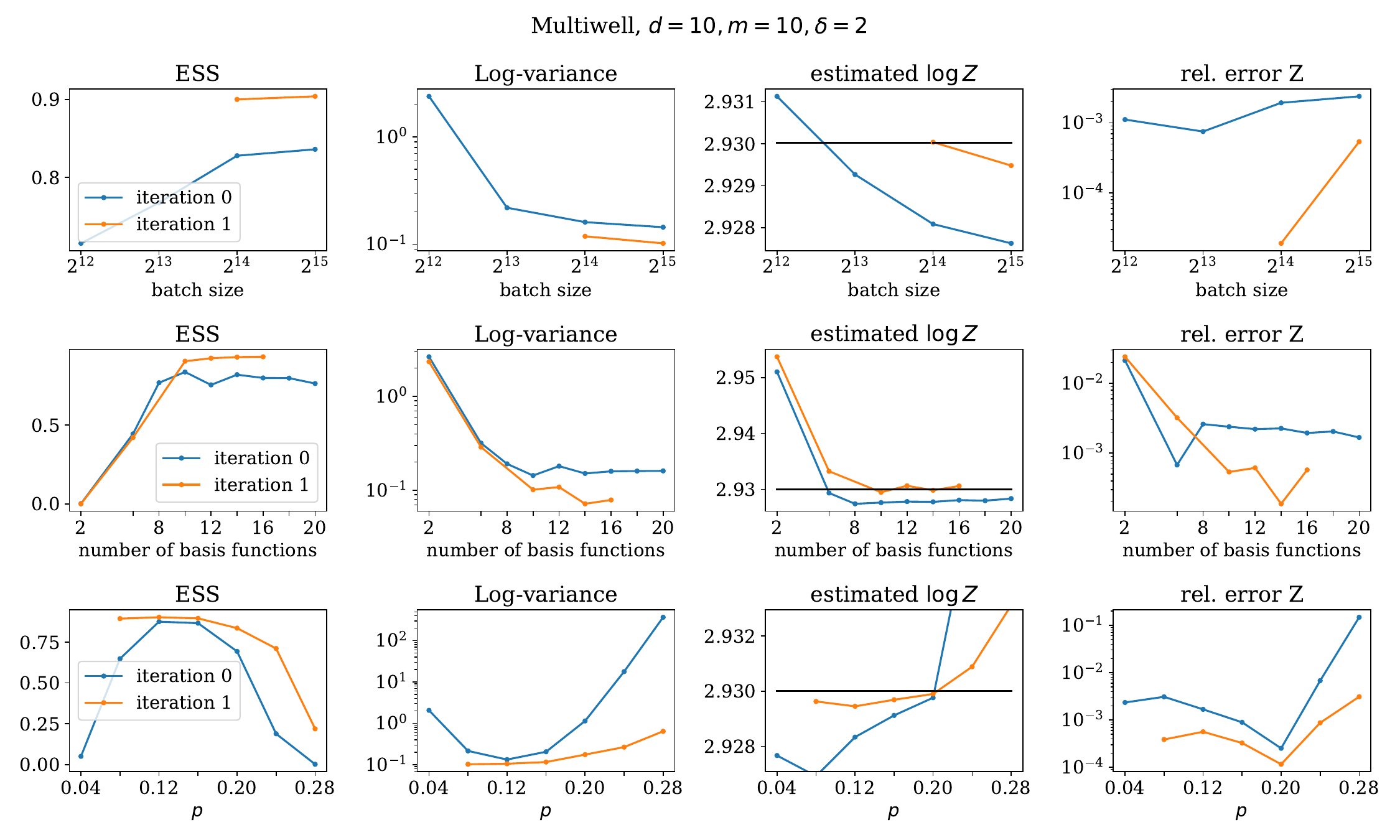}
    \caption{As in \Cref{fig: multiwell hyperparameter comparison d=10 m=3}, we assess the sensitivity of our algorithm by varying key hyperparameters, now on a Multiwell problem with $2^{10}$ modes. We observe the same qualitative trends as before; however, in this highly multimodal setting a larger batch size and a greater number of basis functions are required to stabilize the algorithm.}
    \label{fig: multiwell hyperparameter comparison d=10 m=10}
\end{figure}

\subsection{Comparison with alternative sampling methods}
\label{sec: comparison alternative samplers}
In~\Cref{fig: Multiwell sampler comparison}, we compare the performance of our TTD sampler against various baselines, covering state-of-the-art neural samplers based on ODEs and SDEs and combinations with MCMC methods. For all baselines we leverage the public repository by~\citet{chen2024sequential} (which is derived from the repository by~\citet{blessing2024elboslargescaleevaluationvariational}). We refer to these works for details on each baseline. In particular, we use the default hyperparameters and tune the scale of the prior and diffusion coefficient if applicable.
We report the error in estimating $\log Z$ and the ESS (if available for the given baseline) and observe that our TTD sampler consistently achieves better performance for a given time budget as well as better final performance than all considered baselines.

\begin{figure}[h!]
    \centering
    \includegraphics[width=0.75\textwidth]{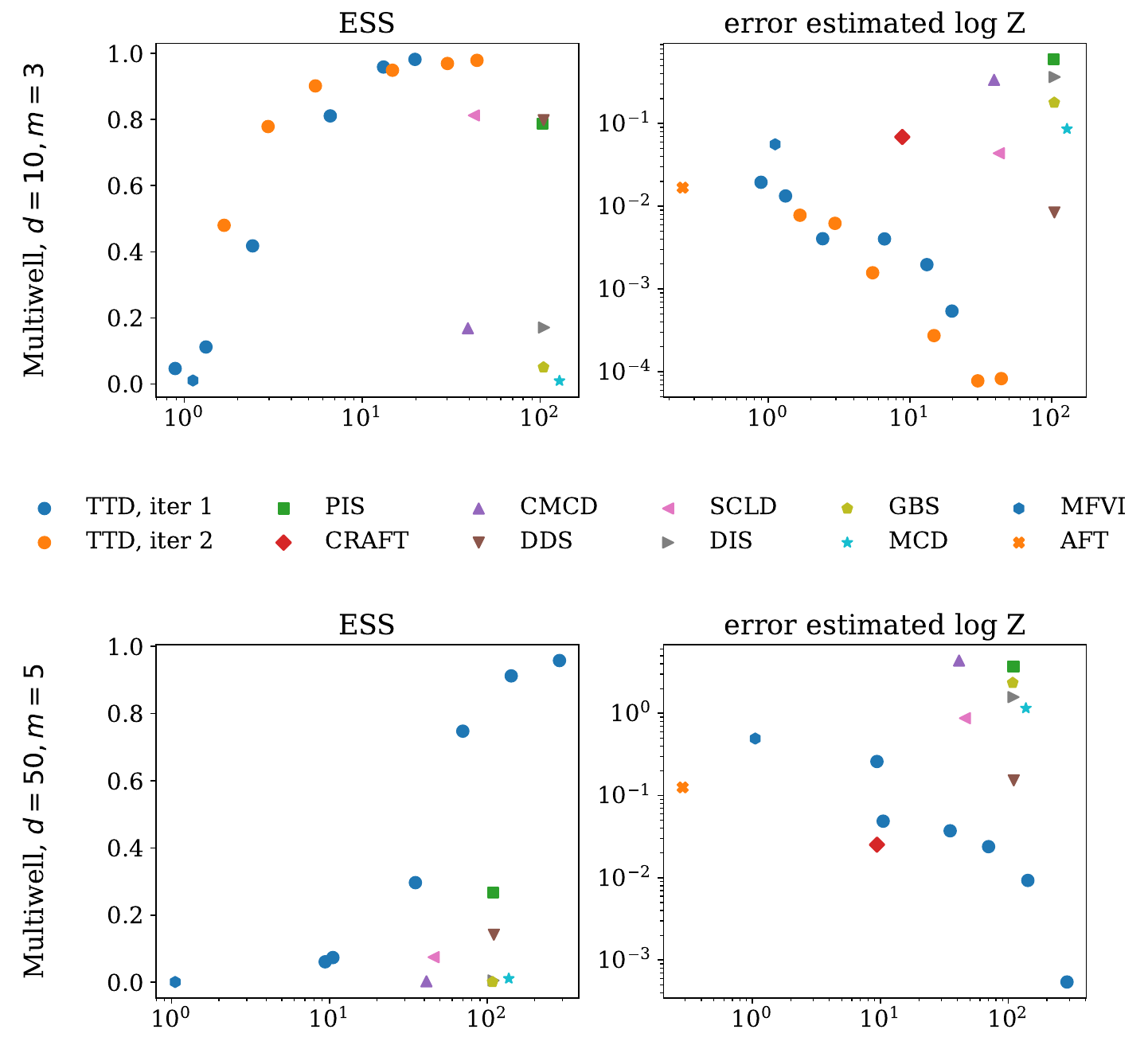}
    \caption{We compare the performance of our TTD sampler against several state-of-the-art baselines, including PIS~\citep{zhang2022pathintegralsamplerstochastic}, DDS~\citep{vargas2023denoising}, DIS~\citep{richter2024improvedsamplinglearneddiffusions}, GBS~\citep{blessing2024elboslargescaleevaluationvariational}, CMCD~\citep{vargas2024transport}, SCLD~\citep{chen2024sequential}, CRAFT~\citep{matthews2022continual}, AFT~\citep{arbel2021annealed}, MFVI~\citep{Bishop06}, and MCD~\citep{doucet2022annealed}. We see that TTD converges significantly faster and outperforms the baselines in terms of both estimation of normalizing constants as well as ESS (if available for the given baseline). We refer to~\Cref{sec: comparison alternative samplers} for further details.}
    \label{fig: Multiwell sampler comparison}
\end{figure}

Further, adding to the experiments in \Cref{sec: multiwell experiments}, we additionally compare our TTD sampler with the PIS sampler in more detail \citep{zhang2022pathintegralsamplerstochastic}, a well-established diffusion-based sampling method. Note, however, that the underlying PDE is different from our HJB PDE stated in \Cref{lem: HJB PDE}. Further, we want to highlight that PIS has a natural advantage when considering the Multiwell problem defined in \eqref{eq: multiwell} since its initialization samples from a Gaussian density. In \Cref{fig: Multiwell comparison with DIS and PIS} we repeat the same plot as in \Cref{fig: runtime performance comparison with DIS}, now integrating PIS as well.

\begin{figure}[h!]
    \centering
    \includegraphics[width=\textwidth]{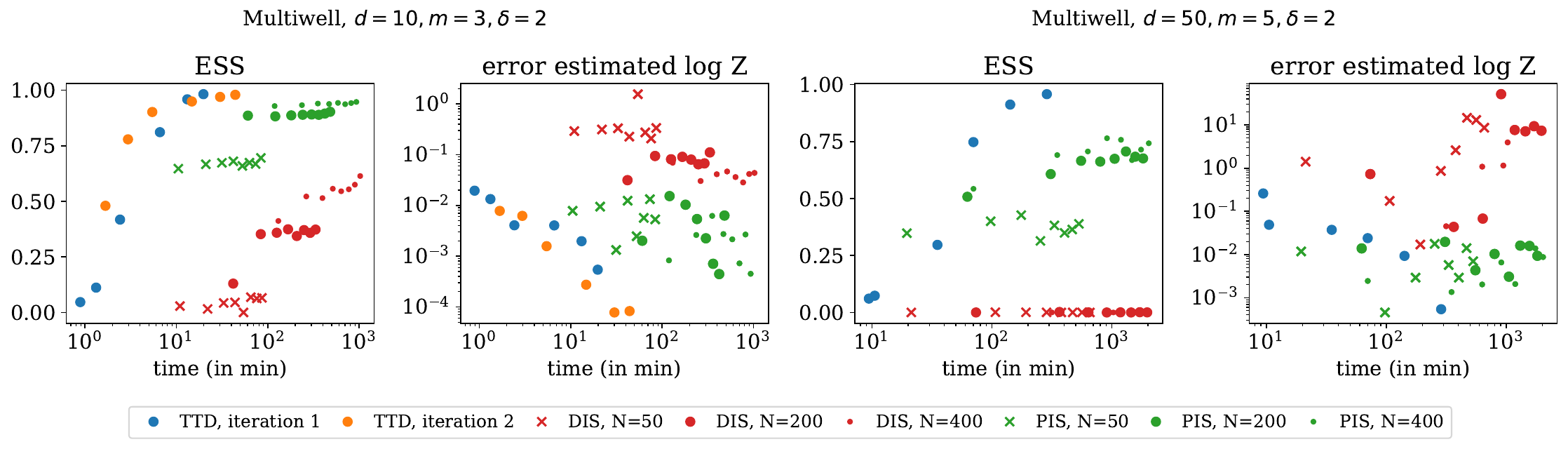}
    \caption{We compare the performance versus runtime of our TTD sampler with DIS \citep{berner2022optimal} and PIS \cite{zhang2022pathintegralsamplerstochastic}. By design, our algorithm produces one result per chosen number of steps $N$ (shown as blue and orange dots), whereas DIS and PIS can improve over training time. Accordingly, we evaluate DIS and PIS at equally spaced runtime intervals. In both experiments, our algorithm is not only faster but also achieves better results, particularly in settings where DIS can exhibit instability.}
    \label{fig: Multiwell comparison with DIS and PIS}
\end{figure}

\subsection{Additional experiment: Kitagawa nonlinear state space model}
\label{sec: Kitagawa}

We evaluate our method using a modified version of the Kitagawa nonlinear state space model, designed to isolate high-dimensional geometric complexity while maintaining a unimodal posterior \citep{kitagawa1996monte}. The latent state evolves according to nonlinear dynamics defined by
\begin{equation}
    x_n = \frac{1}{2}x_{n-1} + \gamma \frac{x_{n-1}}{1 + x_{n-1}^2} + v_n
\end{equation}
where $\gamma$ controls the nonlinearity strength and $v_n$ represents Gaussian process noise. Unlike the standard benchmark, we employ a linear observation model to break the symmetry inherent in the original formulation, namely
\begin{equation}
    y_n = x_n + w_n.
\end{equation}
This modification eliminates multi-modality but preserves the challenging correlations in the joint posterior, providing a robust test for sampling efficiency on high-curvature manifolds. Given the fixed initial state $x_0 = 0$ and the observed sequence $y_{1:M}$, the joint log-posterior density $\log \rho(x_{1:M})$ is given by
\begin{equation}
    \log \rho_\mathrm{target}(x_{1:M} \mid y_{1:M}) = - \sum_{n=1}^{M} \left[ \frac{1}{2\sigma_v^2} \left( x_n - f(x_{n-1}) \right)^2 + \frac{1}{2\sigma_w^2} \left( y_n - x_n \right)^2 \right],
\end{equation}
where $f(x_{n-1})$ represents the nonlinear transition mean
\begin{equation}
    f(x_{n-1}) = \frac{1}{2}x_{n-1} + \gamma \frac{x_{n-1}}{1 + x_{n-1}^2}.
\end{equation}

In our experiments we choose $\sigma_v = \sigma_w = 1$ in $d = 10$ and vary the nonlinear strength $\gamma$, see \Cref{tab: Kitagawa results}.

\begin{table}[htbp]
    \centering
    \caption{We display the effective sample size (ESS) and the log-variance divergence for the Kitagawa nonlinear state space model for varying nonlinear strength $\gamma$ and different amounts of steps $N$.}
    \label{tab: Kitagawa results}
    \vspace{0.3cm}
    \begin{tabular}{cccc}
        \toprule
        $\gamma$ & $N$ & ESS & Log-variance \\
        \midrule
        \multirow{2}{*}{0.5} & 2048  & 0.925 & 0.076  \\
                             & 4096 &  0.937 &  0.068 \\
        \addlinespace
        \multirow{2}{*}{1.0} & 2048  & 0.897 & 0.237  \\
                             & 4096 &  0.911 &  0.133 \\
        \addlinespace
        \multirow{2}{*}{1.5} & 2048  & 0.885 & 0.280  \\
                             & 4096 &  0.898 &  0.192 \\
        \addlinespace
        \multirow{2}{*}{2.0} & 2048  & 0.858 & 0.970 \\
                             & 4096 & 0.871 & 0.612  \\
        \bottomrule
    \end{tabular}
\end{table}

\end{document}